\newtheorem{theorem}{Theorem}[section]
\newtheorem{definition}[theorem]{Definition}
\newtheorem{proposition}[theorem]{Proposition}
\newtheorem{notation}[theorem]{Notation}
\definecolor{solarized@base03}{HTML}{002B36}
\definecolor{solarized@base02}{HTML}{073642}
\definecolor{solarized@base01}{HTML}{586e75}
\definecolor{solarized@base00}{HTML}{657b83}
\definecolor{solarized@base0}{HTML}{839496}
\definecolor{solarized@base1}{HTML}{93a1a1}
\definecolor{solarized@base2}{HTML}{EEE8D5}
\definecolor{solarized@base3}{HTML}{FDF6E3}
\definecolor{solarized@yellow}{HTML}{B58900}
\definecolor{solarized@orange}{HTML}{CB4B16}
\definecolor{solarized@red}{HTML}{DC322F}
\definecolor{solarized@magenta}{HTML}{D33682}
\definecolor{solarized@violet}{HTML}{6C71C4}
\definecolor{solarized@blue}{HTML}{268BD2}
\definecolor{solarized@cyan}{HTML}{2AA198}
\definecolor{solarized@green}{HTML}{859900}
\providecommand\given{\MidSymbol[\vert]}
\newcommand\MidSymbol[1][]{%
\nonscript\:#1
\allowbreak
\nonscript\:
\mathopen{}}
\DeclareMathOperator{\opVar}{Var}
\DeclarePairedDelimiterXPP{\Var}[2]{\opVar_{#1}}{[}{]}{}{%
    \renewcommand\given{\MidSymbol[\delimsize\vert]}
    \ifblank{#2}{\:\cdot\:}{#2}
}
\DeclareMathOperator{\opExpectation}{\mathbb{E}}
\DeclarePairedDelimiterXPP{\implicitE}[1]{\opExpectation}{[}{]}{}{%
    \renewcommand\given{\MidSymbol[\delimsize\vert]}
    \ifblank{#1}{\:\cdot\:}{#1}
}
\DeclarePairedDelimiterXPP{\E}[2]{\opExpectation_{#1}}{[}{]}{}{%
    \renewcommand\given{\MidSymbol[\delimsize\vert]}
    \ifblank{#2}{\:\cdot\:}{#2}
}
\newcommand{\simpleE}[1]{\opExpectation_{#1}} %
\DeclarePairedDelimiterXPP{\indicator}[1]{\mathbb{1}}{\{}{\}}{}{%
    \ifblank{#1}{\:\cdot\:}{#1}
}
\DeclareMathOperator{\opInformationContent}{H}
\DeclarePairedDelimiterXPP{\ICof}[1]{\opInformationContent}{(}{)}{}{%
    \ifblank{#1}{\:\cdot\:}{#1}
}
\DeclareMathOperator{\opEntropy}{H}
\DeclarePairedDelimiterXPP{\Hof}[1]{\opEntropy}{[}{]}{}{%
    \renewcommand\given{\MidSymbol[\delimsize\vert]}
    \ifblank{#1}{\:\cdot\:}{#1}
}
\DeclarePairedDelimiterXPP{\xHof}[1]{\opEntropy}{(}{)}{}{%
    \ifblank{#1}{\:\cdot\:}{#1}
}
\DeclareMathOperator{\opMI}{I}
\DeclarePairedDelimiterXPP{\MIof}[1]{\opMI}{[}{]}{}{%
    \renewcommand\given{\MidSymbol[\delimsize\vert]}
    \ifblank{#1}{\:\cdot\:}{#1}
}
\DeclareMathOperator{\opTC}{TC}
\DeclarePairedDelimiterXPP{\TCof}[1]{\opTC}{[}{]}{}{%
    \renewcommand\given{\MidSymbol[\delimsize\vert]}
    \ifblank{#1}{\:\cdot\:}{#1}
}
\DeclarePairedDelimiterXPP{\CrossEntropy}[2]{\opEntropy}{(}{)}{}{%
    \ifblank{#1#2}{\:\cdot\: \MidSymbol[\delimsize\Vert] \:\cdot\:}{#1 \MidSymbol[\delimsize\Vert] #2}
}
\DeclareMathOperator{\opKale}{D_\mathrm{KL}}
\DeclarePairedDelimiterXPP{\Kale}[2]{\opKale}{(}{)}{}{%
    \ifblank{#1#2}{\:\cdot\: \MidSymbol[\delimsize\Vert] \:\cdot\:}{#1 \MidSymbol[\delimsize\Vert] #2}
}
\DeclareMathOperator{\opp}{p}
\DeclarePairedDelimiterXPP{\pof}[1]{\opp}{(}{)}{}{%
    \renewcommand\given{\MidSymbol[\delimsize\vert]}
    \ifblank{#1}{\:\cdot\:}{#1}
}
\DeclarePairedDelimiterXPP{\pcof}[2]{\opp_{#1}}{(}{)}{}{%
    \renewcommand\given{\MidSymbol[\delimsize\vert]}
    \ifblank{#2}{\:\cdot\:}{#2}
}
\DeclarePairedDelimiterXPP{\pfor}[1]{\opp}{\lbrack}{\rbrack}{}{%
    \renewcommand\given{\MidSymbol[\delimsize\vert]}
    \ifblank{#1}{\:\cdot\:}{#1}
}
\DeclarePairedDelimiterXPP{\pcfor}[2]{\opp_{#1}}{\lbrack}{\rbrack}{}{%
    \renewcommand\given{\MidSymbol[\delimsize\vert]}
    \ifblank{#2}{\:\cdot\:}{#2}
}
\DeclarePairedDelimiterXPP{\hpcof}[2]{\hat{\opp}_{#1}}{(}{)}{}{%
    \renewcommand\given{\MidSymbol[\delimsize\vert]}
    \ifblank{#2}{\:\cdot\:}{#2}
}
\DeclareMathOperator{\opq}{q}
\DeclarePairedDelimiterXPP{\qof}[1]{\opq}{(}{)}{}{%
    \renewcommand\given{\MidSymbol[\delimsize\vert]}
    \ifblank{#1}{\:\cdot\:}{#1}
}
\DeclarePairedDelimiterXPP{\qcof}[2]{\opq_{#1}}{(}{)}{}{%
    \renewcommand\given{\MidSymbol[\delimsize\vert]}
    \ifblank{#2}{\:\cdot\:}{#2}
}
\DeclarePairedDelimiterXPP{\varHof}[2]{\opEntropy_{\ifblank{#1}{\:\cdot\:}{#1}}}{[}{]}{}{%
    \renewcommand\given{\MidSymbol[\delimsize\vert]}
    \ifblank{#2}{\:\cdot\:}{#2}
}
\DeclarePairedDelimiterXPP{\xvarHof}[2]{\opEntropy_{\ifblank{#1}{\:\cdot\:}{#1}}}{(}{)}{}{%
    \renewcommand\given{\MidSymbol[\delimsize\vert]}
    \ifblank{#2}{\:\cdot\:}{#2}
}
\DeclarePairedDelimiterXPP{\aICof}[1]{\opInformationContent'}{(}{)}{}{%
    \ifblank{#1}{\:\cdot\:}{#1}
}
\DeclarePairedDelimiterXPP{\aHof}[1]{\opEntropy'}{[}{]}{}{%
    \renewcommand\given{\MidSymbol[\delimsize\vert]}
    \ifblank{#1}{\:\cdot\:}{#1}
}
\DeclarePairedDelimiterXPP{\axHof}[1]{\opEntropy'}{(}{)}{}{%
    \ifblank{#1}{\:\cdot\:}{#1}
}
\DeclarePairedDelimiterXPP{\aMIof}[1]{\opMI'}{[}{]}{}{%
    \renewcommand\given{\MidSymbol[\delimsize\vert]}
    \ifblank{#1}{\:\cdot\:}{#1}
}
\DeclarePairedDelimiterXPP{\aCrossEntropy}[2]{\opEntropy'}{(}{)}{}{%
    \ifblank{#1#2}{\:\cdot\: \MidSymbol[\delimsize\Vert] \:\cdot\:}{#1 \MidSymbol[\delimsize\Vert] #2}
}
\newcommand{\opCov}{\mathrm{Cov}}
\DeclarePairedDelimiterXPP{\implicitCov}[1]{\opCov}{[}{]}{}{%
    \renewcommand\given{\MidSymbol[\delimsize\vert]}
    \ifblank{#1}{\:\cdot\:}{#1}
}
\DeclarePairedDelimiterXPP{\HofHessian}[1]{\opEntropy''}{[}{]}{}{%
    \renewcommand\given{\MidSymbol[\delimsize\vert]}
    \ifblank{#1}{\:\cdot\:}{#1}
}
\DeclarePairedDelimiterXPP{\specialHofHessian}[2]{\opEntropy''#1}{[}{]}{}{%
    \renewcommand\given{\MidSymbol[\delimsize\vert]}
    \ifblank{#2}{\:\cdot\:}{#2}
}
\DeclarePairedDelimiterXPP{\HofJacobian}[1]{\opEntropy'}{[}{]}{}{%
    \renewcommand\given{\MidSymbol[\delimsize\vert]}
    \ifblank{#1}{\:\cdot\:}{#1}
}
\DeclarePairedDelimiterXPP{\specialHofJacobian}[2]{\opEntropy'#1}{[}{]}{}{%
    \renewcommand\given{\MidSymbol[\delimsize\vert]}
    \ifblank{#2}{\:\cdot\:}{#2}
}
\newcommand{\FisherInfo}[1]{\HofHessian{#1}} 
\newcommand{\specialFisherInfo}[2]{\specialHofHessian{#1}{#2}}
\newcommand{\Ypred}{Y}
\newcommand{\ypred}{y}
\newcommand{\logits}{{\hat z}}
\newcommand{\Dtrain}{{\mathcal{D}^\text{train}}}
\newcommand{\Dacq}{{\mathcal{D}^\text{acq}}}
\newcommand{\Dpool}{{\mathcal{D}^\text{pool}}}
\newcommand{\Deval}{{\mathcal{D}^\text{eval}}}
\newcommand{\Dany}{{\mathcal{D}}}
\newcommand{\xacq}{{x^\text{acq}}}
\newcommand{\Yacq}{{Y^\text{acq}}}
\newcommand{\yacq}{{y^\text{acq}}}
\newcommand{\xacqs}{{\{x^\text{acq}_i\}}}
\newcommand{\yacqs}{{\{y^\text{acq}_i\}}}
\newcommand{\Yacqs}{{\{Y^\text{acq}_i\}}}
\newcommand{\x}{{x}}
\newcommand{\Y}{{Y}}
\newcommand{\y}{{y}}
\newcommand{\xs}{{\{x_i\}}}
\newcommand{\ys}{{\{y_i\}}}
\newcommand{\Ys}{{\{Y_i\}}}
\newcommand{\Xeval}{{X^\text{eval}}}
\newcommand{\xeval}{{x^\text{eval}}}
\newcommand{\Yeval}{{Y^\text{eval}}}
\newcommand{\yeval}{{y^\text{eval}}}
\newcommand{\xevals}{{\{x^\text{eval}_i\}}}
\newcommand{\yevals}{{\{y^\text{eval}_i\}}}
\newcommand{\Yevals}{{\{Y^\text{eval}_i\}}}
\newcommand{\xtrain}{{x^\text{train}}}
\newcommand{\Ytrain}{{Y^\text{train}}}
\newcommand{\w}{\omega}
\newcommand{\W}{\Omega}
\newcommand{\wstar}{\w^*}
\newcommand{\N}{\mathcal{N}}
\newcommand{\normaldist}[2]{\N({#1},\,{#2})}
\newcommand{\normaldistpdf}[3]{\N(#1;\,{#2},\,{#3})}
\DeclareMathOperator{\tr}{tr}
\DeclareMathOperator{\diag}{diag}
\DeclareMathOperator{\softmax}{softmax}
\newcommand{\pdata}[1]{\hpcof{}{#1}}
\newcommand{\realnum}{\mathbb{R}}
\newcommand\independent{\protect\mathpalette{\protect\independenT}{\perp}}
\def\independenT#1#2{\mathrel{\rlap{$#1#2$}\mkern2mu{#1#2}}}
\DeclareMathOperator*{\argmax}{arg\,max}
\DeclareMathOperator*{\argmin}{arg\,min}
\newcommand{\defeq}{\vcentcolon=}
\newcommand{\andreas}[1]{}
\newcommand{\yarin}[1]{}
\newcommand{\forreviewers}[1]{}
\renewcommand{\andreas}[1]{{\leavevmode\color{blue}{ \footnotesize AK} {\tiny says: }#1}}
\renewcommand{\yarin}[1]{{\leavevmode\color{orange}{ \footnotesize YG} {\tiny says: }#1}}
\renewcommand{\forreviewers}[1]{{\leavevmode\color{purple}{\footnotesize NOTE} {\tiny please: }#1}}
\newmathcommand{\batchvar}{B}
\newtextcommand{\batchvar}{$\batchvar$\xspace}
\newmathcommand{\poolsize}{P}
\newtextcommand{\poolsize}{$\poolsize$\xspace}
\newmathcommand{\trainsize}{N}
\newtextcommand{\trainsize}{$\trainsize$\xspace}
\newmathcommand{\evalsize}{E}
\newtextcommand{\evalsize}{$\evalsize$\xspace}
\newmathcommand{\testsize}{M}
\newtextcommand{\testsize}{$\testsize$\xspace}
\newmathcommand{\numclasses}{K}
\newtextcommand{\numclasses}{$\numclasses$\xspace}
\newtcolorbox{importantresult}{colback=solarized@yellow!5!white,
colframe=solarized@yellow,parbox, left=0.5mm, right=0.5mm,top=0.5mm,bottom=0.5mm}
\newtcolorbox{mainresult}{colback=solarized@violet!5!white,
colframe=solarized@violet,parbox, left=0.5mm, right=0.5mm,top=0.5mm,bottom=0.5mm}
\newtcolorbox{importantresult_noparbox}{breakable,colback=solarized@yellow!5!white,
colframe=solarized@yellow,parbox=false, left=0.5mm, right=0.5mm,top=0.5mm,bottom=0.5mm}
\title{Unifying Approaches in Active Learning and Active Sampling via Fisher Information and Information-Theoretic Quantities}
\author{\name Andreas Kirsch \email andreas.kirsch@cs.ox.ac.uk \\
      \name Yarin Gal \email yarin.gal@cs.ox.ac.uk \\
      \addr OATML, Department of Computer Science\\
      University of Oxford
    }
\begin{document}

\maketitle

\begin{abstract}
    Recently proposed methods in data subset selection, that is active learning and active sampling, use Fisher information, Hessians, similarity matrices based on gradients, and gradient lengths to estimate how informative data is for a model's training. 
    Are these different approaches connected, and if so, how?
    We revisit the fundamentals of Bayesian optimal experiment design and show that these recently proposed methods can be understood as approximations to information-theoretic quantities: 
    among them,
    the mutual information between predictions and model parameters, known as \emph{expected information gain} or BALD in machine learning,
    and the mutual information between predictions of acquisition candidates and test samples, known as \emph{expected predictive information gain}.
    We develop a comprehensive set of approximations using Fisher information and observed information and derive a unified framework that connects seemingly disparate literature.
    Although Bayesian methods are often seen as separate from non-Bayesian ones, the sometimes fuzzy notion of ``informativeness'' expressed in various non-Bayesian objectives leads to the same couple of information quantities, which were, in principle, already known by \citet{lindley1956measure} and \citet{mackay1992information}.
\end{abstract}

\section{Introduction}
\label{sec:introduction}

Label and training efficiency are key to a wider deployment of deep learning. Deep learning generally requires a lot of data, much of which must be annotated. This is expensive and time-consuming. Together with semisupervised and unsupervised approaches, \emph{active learning} \citep{atlas1990training,cohn1994improving} helps increase label efficiency:
given access to unlabeled data, active learning selects the most informative samples to label for a given model, thus decreasing the number of required annotations to reach a given level of performance. 
In addition to label efficiency, training deep learning models is also expensive and time-consuming, and \emph{active sampling} improves training efficiency by filtering the training set to focus on the samples that will be the most informative for the model.

Several new approaches in data subset selection for deep learning have been proposed recently: amongst them, BADGE \citep{ash2019deep}, BAIT \citep{ash2021gone}, PRISM\footnotemark \citep{kothawade2022prism}, SIMILAR\footnotemark[\value{footnote}] \citep{kothawade2021similar}, and GraNd \citep{paul2021deep}.
The acquisition functions used to select informative samples in these approaches can be traced back to information-theoretic quantities (short: \emph{information quantities}) that are known from Bayesian optimal experiment design \citep{lindley1956measure, mackay1992information}. This connection is the topic of this work.

By examining how Fisher information and second-order posterior approximations (Gaussian approximations) can be used for estimating information quantities,
we develop a unifying perspective and relate these recent methods to information quantities used in Bayesian active learning: for active learning, the \emph{expected information gain (EIG)}, also known as (Batch-)BALD  \citep{lindley1956measure, houlsby2011bayesian,kirsch2019batchbald},  the \emph{(joint) expected predictive information gain (JEPIG or EPIG, respectively)} \citep{kirsch2021test, mackay1992information}, and, for active sampling, the \emph{information gain (IG)} \citep{sun2022informationtheoretic} and \emph{(joint) predictive information gain (JPIG or PIG, respectively)} \citep{mindermann2022prioritized}.

These connections point towards possible failure modes of current methods and potential extensions in principled ways.
We examine approximations that lead to last-layer approaches, find a potential bias when using similarity matrices, compare trace and log determinant approximations in regard to batch acquisition pathologies, and trade off weight- and prediction-space methods in principle. 

\textbf{Outline.} %
To achieve this,
we look at second-order posterior approximations (Gaussian approximations) in \S\ref{sec:laplace_approximation}, which we use to revisit Fisher information, its properties, special cases, and approximations in \S\ref{sec:fisher_information}.
Our contribution here is to summarize results and provide a consistent notation that simplifies reasoning about information quantities, observed information, and Fisher information. 

In \S\ref{sec:iq_approximations}, we approximate the information quantities mentioned above using observed information and Fisher information.
We provide a comprehensive overview to understand the differences and similarities and make it easier to spot applications of these approximations in the literature.
We pay special attention to the limitations: 
for example, we will see that some approximations that use the trace of the Fisher information do not take redundancies between samples into account.
They exhibit the same pathologies as other methods that, in essence, score points individually, also known as top-k batch acquisition \citep{kirsch2021stochastic}.
In \S\ref{sec:similarity_matrices}, we expand our approach to approximations that use similarity matrices of log-loss gradients. %
Our contributions are a comprehensive overview of the approximations and the connection to similarity matrices.

In \S\ref{sec:unification}, we show that (Batch-)BALD and EPIG on the one hand; and BADGE, BAIT, PRISM\footnotemark[\value{footnote}], and SIMILAR\footnotemark[\value{footnote}]\footnotetext{using log determinant objectives} on the other hand can be seen as optimizing the same objectives.
The difference is that (Batch-)BALD \citep{houlsby2011bayesian,kirsch2019batchbald} and EPIG \citep{kirsch2021test} operate in prediction space, while Fisher information-based methods operate in weight space:
we show that an approximation of EPIG, a transductive active learning objective, using Fisher information, matches the BAIT objective \citep{ash2021gone}. Similarly, we show how BADGE \citep{ash2019deep} approximates the EIG, using the connection to similarity matrices.
Finally, we find that \emph{submodularity}-based approaches \citep{iyer2021submodular} such as SIMILAR \citep{kothawade2021similar} and PRISM \citep{kothawade2022prism}, which report their best results using the log determinant of similarity matrices, approximate information quantities when they perform best. 
We also show that gradient-length-based methods like EGL \citep{settles2007multiple} and GraNd \citep{paul2021deep} can be connected to information quantities.

\textbf{Limitations.} %
Although our results employ a hierarchy of approximations, we do not examine the error terms in detail. This is in line with how these approximations are used in deep learning, where the approximations often only provide motivation for useful mechanisms.
However, we try to identify where these approximations might break, enumerate their limitations, and raise several (empirical) research questions for future work.

\section{Background \& Setting}
\label{sec:background}

This section introduces the relevant notation, concepts, and probabilistic model that we use in this paper. 

\textbf{Information Theory.} We follow the practical notation of \citet{kirsch2021practical}. In particular, for entropy, we use an implicit or explicit notation, \(\Hof{X}\) or \(\xHof{\pof{X}}\), while \(\CrossEntropy{\opp}{\opq}\) denotes the cross-entropy similar to the Kullback-Leibler divergence \(\Kale{\opp}{\opq}\), and $\ICof{\pof{x}}$ denotes Shannon's information content\footnote{In related literature, Shannon's information content is often written as $h(\pof{x})$. We use a unified notation as Shannon's information content is but a point-wise entropy, and we can differentiate between the two by whether we compute it for an outcome $x$ or a random variable $X$ in expectation; see \citet{kirsch2021practical}.}:
\begin{align}
    \ICof{\pof{x}} &\defeq -\log \pof{x}, \\
    \Hof{x} &\defeq \ICof{\pof{x}}, \\
    \CrossEntropy{\pof{X}}{\qof{X}} &\defeq  \E{\pof{x}}{\ICof{\qof{x}}}, \\
    \Hof{X} \defeq \xHof{\pof{X}} &\defeq \CrossEntropy{\pof{X}}{\pof{X}},
\end{align}
where $\opp$ and \(\opq\) are probabilities distributions, $X$ is a random variable, and $x$ is an outcome. Conditional and joint entropies are defined as usual (note that we also take an expectation over $y$):
\begin{align}
    \Hof{X \given Y} = \E{\pof{x, y}}{-\log \pof{x \given y}}.
\end{align}
The mutual information \(\MIof{X ; Y}\) for random variables $X$ and $Y$ is defined as:
\begin{align}
    \MIof{X ; Y} \defeq \Hof{X} - \Hof{X \given Y},
\end{align}
and is also called expected uncertainty reduction or \emph{expected information gain} \citep{lindley1956measure} because $\Hof{X}$ quantifies the uncertainty about \(X\), and $\Hof{X \given Y}$ about $X$ after observing $Y$ (in expectation).

Furthermore, when we mix random variables with outcomes of random variables in (conditional) entropies, for example, $\Hof{X, Y=y \given Z}$, or short $\Hof{X, y \given Z}$, we expand this to an expectation over the random variables conditional on the outcomes:
\begin{align}
    \Hof{X, y \given Z} = \E{\pof{x, z \given y}}{-\log \pof{x, y \given z}}.
\end{align}

\textbf{Probabilistic Model.} %
We assume a supervised setting: for inputs \(X\), we have a Bayesian model with parameters \(\W\) that makes predictions \(\Ypred\). What makes the model Bayesian is that the parameters follow a distribution \(\pof{\w}\). We use the probabilistic model: 
\begin{equation}
    \pof{\ypred, \w \given x} = \pof{\ypred \given x, \w} \, \pof{\w}.
\end{equation}
We extend this model to additional data \(\Dany \coloneqq \left \{ (\x_i, \ypred_i) \right \}\) as follows:
\begin{align}
    \pof{\ys, \w \given \xs} = \pof{\ys \given \xs, \w} \, \pof{\w} = 
    \pof{\y_1,\ldots,\y_n \given \x_1,\ldots,\x_n,\w} \, \pof{\w}.
\end{align}
That is, we examine the common discriminative case where, unlike in the generative case, we do not model $\pof{x}$.
The corresponding marginal prediction of the model is $\pof{\y \given \x}=\E{\pof{\w}}{\pof{\y \given \x, \w}}$.

\textbf{Transductive Objectives.} When an objective uses (additional) data $\Deval$, unlabeled or labeled, to guide acquisitions, we refer to the objective as a \emph{transductive} objective \citep{yu2006active, wang2020marginal}.

\textbf{Active Learning.} To increase label efficiency, instead of labeling data indiscriminately, active learning iteratively selects and acquires labels for the \emph{most informative} unlabeled data from a \emph{pool set} $\Dpool$ according to some \emph{acquisition function}. An acquisition function scores the informativeness of an unlabeled candidate sample $\xacq$, and the sample that maximizes this score is selected for labeling. After each acquisition step, the model is retrained to take the newly labeled data into account. Labels can be acquired individually or in batches (\emph{batch acquisition}, see below). The \emph{expected information gain (EIG)}
\begin{align}
    \MIof{\W; \Yacq \given \xacq} \tag{EIG/BALD}
\end{align} and \emph{(joint) expected predictive information gain (JEPIG and EPIG, respectively)}
\begin{align}
    \MIof{\Yevals ; \Yacq \given \xevals, \xacq}, \label{eq:background_jepig} \tag{JEPIG} \\
    \MIof{\Yeval ; \Yacq \given \Xeval, \xacq} \label{eq:background_epig} \tag{EPIG}
\end{align}
are examples of such acquisition functions.
EPIG and JEPIG are transductive acquisition functions as they depend on $\Xeval, \xevals,$ respectively.\andreas{add citation for AL?!}

\textbf{Active Sampling.} To increase training efficiency, instead of training with all samples, active sampling (sometimes also called \emph{data pruning}) \citep{paul2021deep} selects the most informative sample $(\xacq, \yacq)$ from the training set to train on next. This can be done statically before training the model, in which case this is also referred to as core-set selection, or dynamically, in which case it is also referred to as curriculum learning. The \emph{information gain (IG)}\andreas{add citations for AS?!}
\begin{align}
    \MIof{\Omega; \yacq \given \xacq}, \tag{IG}
\end{align}
and \emph{(joint) predictive information gain (JPIG or PIG, respectively)} \begin{align}
    &\MIof{\yevals; \yacq \given \xevals, \xacq} \tag{JPIG}, \\
    &\MIof{\Yeval; \yacq \given \Xeval, \xacq} \quad ( = \simpleE{\pdata{\xeval, \yeval}}{\MIof{\yeval; \yacq \given \xeval, \xacq}}) \tag{PIG}
\end{align}
are examples of such acquisition functions.
PIG and JPIG are transductive acquisition functions.

\textbf{Batch Acquisition.} %
In batch active learning,\andreas{better citation for batch AL} the acquisition function is applied to a batch of candidates $\{\xacq_i\}_{i=1}^\batchvar$ and the batch that maximizes the acquisition function is selected for labeling.
Similarly, in batch active sampling, the acquisition function is applied to a batch of training samples $\{(\xacq_i, \yacq_i)\}_{i=1}^\batchvar$ and the batch that maximizes the acquisition function is selected for training.
For deep learning, batch acquisition is often the only feasible option in both active learning and active sampling. We extend the above acquisition functions to the batch case by substituting a set of samples in the definitions: $\xacqs$ for $\xacq$, $\Yacqs$ for $\Yacq$, and $\yacqs$ for $\yacq$, respectively, and treating the sets as joint random variables \citep{kirsch2019batchbald}.

\textbf{Submodular Acquisition Functions.} Choosing the subset $\xacqs$ naively is intractable due to the exponential number of possible acquisition batches. Instead of maximizing the acquisition function on all possible batches, we can often use submodularity \citep{nemhauser1978analysis}. 
A set function $f$ is submodular when:
\begin{align}
    f(A \cup B) \leq f(A) + f(B) - f(A \cap B) \tag{submodular}.
\end{align}
An acquisition batch $\xacqs$ can be constructed greedily by selecting the samples that increase the acquisition function the most one-by-one. This greedy algorithm is guaranteed to find a $1-\tfrac{1}{e}$-optimal acquisition batch for monotone submodular acquisition functions. %

Although the EIG is (monotone) submodular, leading to efficient batch acquisition \citep{kirsch2019batchbald}, the other information quantities (IG, EPIG, JEPIG, PIG) are usually not submodular \citep{kirsch2021test}. %
We examine the details of this and compare to the relevant literature in \S\ref{sec:unification}.

\textbf{Taxonomy of Information Quantities.} %
\Cref{table:taxonomy_info_quantities} shows the information quantities along three dimensions: active learning vs active sampling, non-transductive vs transductive, and taking the expectation vs the joint over evaluation samples for transductive information quantities.

\textbf{Log Loss.} While many active learning and active sampling methods are motivated independently of the underlying loss, we will focus on log losses, such as the common cross-entropy loss or squared error loss (Gaussian error), as these log losses can be viewed through an information-theoretic or probabilistic lens.

\begin{table}[t!]
    \caption{\emph{Taxonomy of Information Quantities for Data Subset Selection.} In general, information quantities can be split into ones for active sampling or active learning, into non-transductive and transductive ones, and  in the transductive case, into taking an expectation or the joint over (additional) evaluation samples. Here, we show the information quantities for individual acquisition. For batch acquisition, $\Yacqs, \yacqs, \xacqs$ can be substituted.}
    \label{table:taxonomy_info_quantities}
    \centering
    \renewcommand{\arraystretch}{1.2}
    \resizebox{\linewidth}{!}{%
    \begin{tabular}{llrlrl}
    \toprule
                                  &             & \multicolumn{2}{c}{Active Learning} & \multicolumn{2}{c}{Active Sampling} \\
    \midrule
    \multicolumn{2}{l}{Non-Transductive} & EIG/BALD & $\MIof{\W; \Yacq \given \xacq}$ & IG & $\MIof{\Omega; \yacq \given \xacq}$ \\
    \multirow{2}{*}{\shortstack{Transductive \\(using $\Deval$)}} & Expectation  & EPIG & $\simpleE{\pdata{\xeval}} \MIof{\Yeval ; \Yacq \given \xeval, \xacq}$ & PIG & $\simpleE{\pdata{\yeval,\xeval}}\MIof{\yeval; \yacq \given \xeval, \xacq}$ \\
                                  & Joint       & JEPIG & $ \MIof{\Yevals ; \Yacq \given \xevals, \xacq} $ & JPIG &$\MIof{\yevals; \yacq \given \xevals, \xacq}$ \\
    \bottomrule  
    \end{tabular}}
\end{table}

\section{Second-Order Posterior Approximation}
\label{sec:laplace_approximation}

Laplace approximations are a standard tool in Bayesian statistics and machine learning \citep{daxberger2021laplace, alex2020improving}. In this section, we review the Laplace approximation and introduce it as a special case of a more flexible second-order posterior approximation, a \emph{Gaussian approximation}. It is central to approximating information quantities using observed information, defined in this section, and Fisher information, defined in \S\ref{sec:fisher_information}.

Our goal is to approximate the posterior $\pof{\w \given \Dany, \Dtrain}$ using a (multivariate) Gaussian distribution, where $\Dany = \{(\x_i, \y_i)\}_{i=1}^N$ are additional (new) samples, and we start with $\pof{\w \given \Dtrain}$ as the ``prior'' distribution---we will drop $\Dtrain$ and use $\pof{\w}$ when possible, to shorten the notation.

To begin, we complete the square of a second-order Taylor approximation around the log-parameter likelihood for a fixed $\wstar$:
\begin{align}
    &\log \pof{\w} \approx \log \pof{\wstar} + \nabla_\w [\log \pof{\wstar}] (\w - \wstar) + \frac{1}{2} (\w - \wstar)^T \nabla_\w^2 [\log \pof{\wstar}] (\w - \wstar) \\
    &\quad = \frac{1}{2} (\w - (\wstar - \nabla_\w^2[\log \pof{\wstar}]^{-1} \nabla_\w [\log \pof{\wstar}])^T \nabla_\w^2 [\log \pof{\wstar}] (\w - (\wstar - \nabla_\w^2 [\log \pof{\wstar}]^{-1} \nabla_\w [\log \pof{\wstar}])) \notag \\
    &\quad \quad + \ldots.
\end{align}
Importantly, we can express this more concisely by extending the notation of \(\Hof{\cdot}\) to its derivatives:
\begin{importantresult}
\begin{notation}
    We write $\HofJacobian{\cdot}$ for the Jacobian and $\HofHessian{\cdot}$ for the Hessian of $\Hof{\cdot}$:
    \begin{align}
        \HofJacobian{\cdot} &\coloneqq -\nabla_\w \log \pof{\cdot}, \\
        \HofHessian{\cdot} &\coloneqq -\nabla_\w^2 \log \pof{\cdot}.
    \end{align}
\end{notation}
\end{importantresult}
This notation will be helpful throughout this paper, as both observed information and Fisher information can be expressed in terms of the Hessian of the negative log-parameter likelihood.

Then, we can write:
\begin{align}
    \Hof{\w} &\approx \Hof{\wstar} + \HofJacobian{\wstar} (\w - \wstar) + \frac{1}{2} (\w - \wstar)^T \, \HofHessian{\wstar} \, (\w - \wstar) \\
    &= \frac{1}{2} (\w - (\wstar - \HofHessian{\wstar}^{-1} \HofJacobian{\wstar})^T \, \HofHessian{\wstar} \, (\w - (\wstar - \HofHessian{\wstar}^{-1} \HofJacobian{\wstar})) + \ldots.
\end{align}
Comparing this to the information content of a multivariate Gaussian distribution:
\begin{align}
    \Hof{\normaldistpdf{w}{\mu}{\Sigma}} = \frac{1}{2} (\w - \mu)^T \, \Sigma^{-1} \, (\w - \mu) + \ldots,
\end{align}
we obtain the Gaussian approximation, which we will apply throughout this paper:
\begin{proposition}
    \label{prop:param_approximation}
    The \emph{Gaussian approximation} of the distribution $\pof{\w}$ of $\W$ around some $\wstar$ is given by:
    \begin{align}
        \W \overset{\approx}{\sim} \normaldist{\wstar - \HofHessian{\wstar}^{-1} \HofJacobian{\wstar}}{\HofHessian{\wstar}^{-1}},
    \end{align}   
    where $\HofHessian{\wstar}$ must be positive-definite.
    If $\wstar$ is also a (global) minimizer of $\Hof{\w}$ (that is, $\HofJacobian{\wstar} = 0$), we obtain the \emph{Laplace approximation}:
    \begin{align}
        \W \overset{\approx}{\sim} \normaldist{\wstar}{\HofHessian{\wstar}^{-1}}.
    \end{align}
\end{proposition}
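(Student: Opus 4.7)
The plan is to verify the proposition directly by completing the square in the second-order Taylor expansion of $\Hof{\w} = -\log \pof{\w}$ around $\wstar$, and then matching the resulting quadratic form with the negative log-density of a multivariate Gaussian. All the necessary pieces are already assembled in the text immediately preceding the proposition; what remains is to justify the identification step carefully.

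First, I would write the second-order Taylor expansion
\begin{align}
    \Hof{\w} = \Hof{\wstar} + \HofJacobian{\wstar}(\w-\wstar) + \tfrac12 (\w-\wstar)^T \HofHessian{\wstar} (\w-\wstar) + o(\|\w-\wstar\|^2),
\end{align}
and observe that since $\HofHessian{\wstar}$ is assumed positive definite, it is invertible. Then I would complete the square: setting $\mu \defeq \wstar - \HofHessian{\wstar}^{-1}\HofJacobian{\wstar}^T$, a routine rearrangement gives
\begin{align}
    \HofJacobian{\wstar}(\w-\wstar) + \tfrac12 (\w-\wstar)^T \HofHessian{\wstar}(\w-\wstar) = \tfrac12 (\w - \mu)^T \HofHessian{\wstar} (\w - \mu) + C,
\end{align}
where $C$ is a constant in $\w$. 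The verification is a one-line calculation: expand the right-hand side and collect the linear and quadratic terms in $\w$.

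Next, I would compare this with the stated identity for the information content of a Gaussian, $\Hof{\normaldistpdf{w}{\mu}{\Sigma}} = \tfrac12(\w-\mu)^T\Sigma^{-1}(\w-\mu) + \text{const}$. Matching the quadratic forms identifies the covariance as $\Sigma = \HofHessian{\wstar}^{-1}$ and the mean as $\mu = \wstar - \HofHessian{\wstar}^{-1}\HofJacobian{\wstar}^T$. Since the approximation replaces $\Hof{\w}$ by a pure quadratic in $\w$, the approximating distribution is exactly $\normaldist{\mu}{\Sigma}$ (after normalization, which absorbs $C$ and $\Hof{\wstar}$). The Laplace special case then follows immediately: if $\wstar$ is a minimizer of $\Hof{\w}$, then $\HofJacobian{\wstar} = 0$, so $\mu = \wstar$.

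The only subtle point, rather than an obstacle, is that this is a \emph{local} approximation: we are matching an unnormalized density to a Gaussian up to a constant, and relying on the implicit assumption that the higher-order term $o(\|\w-\wstar\|^2)$ is negligible under the resulting Gaussian measure. I would briefly note this and point out that positive-definiteness of $\HofHessian{\wstar}$ is what makes the Gaussian well-defined (finite covariance). No deeper machinery is needed; the proof is a direct calculation followed by pattern-matching against the Gaussian form.
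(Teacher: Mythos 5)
Your proof is correct and follows essentially the same route as the paper: the paper's own justification is precisely the second-order Taylor expansion of $\Hof{\w}$ around $\wstar$, completion of the square to obtain the shifted mean $\wstar - \HofHessian{\wstar}^{-1}\HofJacobian{\wstar}$, and pattern-matching against the quadratic form in the Gaussian's information content, with the Laplace case falling out when $\HofJacobian{\wstar}=0$. Your added remarks on positive-definiteness and the local nature of the approximation are consistent with the paper's own caveats in the ``Approximation Quality'' discussion.
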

\textbf{Approximation Quality.} However, this approximation can be arbitrarily bad depending on $\pof{\w}$ and $\wstar$. Given enough data, it is often argued that $\pof{\w}$ will concentrate around the maximum a posteriori (MAP) estimate, giving rise to the Laplace approximation. In statistics, the Bernstein-von Mises theorem is often used to motivate this, but insufficient data to reach concentration of parameters and multimodality in over-parameterized models \citep{long2021multimodal} can be an issue for deep active learning and active sampling.

\textbf{Flat Minimum Intuition.} A positive definite Hessian implies that the information content (point-wise entropy) is convex around $\wstar$ and, equivalently, that the (log) posterior is concave around $\wstar$. The latter provides an intuition for the Gaussian approximation:
the Hessian measures curvature, and the ``flatter'' the Hessian, e.g., the smaller the largest eigenvalue or the smaller the determinant, the less the loss changes when $\wstar$ is perturbed. This leads to the search for flat minima as a way to improve generalization \citep{hinton1993keeping,hochreiter1994simplifying,smith2017bayesian}.

\begin{notation}
To further shorten the notation, we write $\HofHessian{\Dany \given \wstar}$ instead of $\HofHessian{\ys \given \xs, \wstar}$.
\end{notation}

\textbf{Posterior Approximation of $\Omega \given \Dany$.} %
While the Laplace approximation is centered on a (global) minimizer, the Gaussian approximation can be used for a (potentially low-quality) posterior approximation in general.
We can expand $\Hof{\wstar \given \Dany}$ using Bayes' theorem and the additivity of the logarithm. That is, we have:
\begin{align}
    \Hof{\wstar \given \Dany} = \Hof{\Dany \given \wstar} + \Hof{\wstar} - \Hof{\Dany},
\end{align}
and then, as $\Hof{\Dany}$ is independent of $\w$:
\begin{align}
    \HofJacobian{\wstar \given \Dany} &= \HofJacobian{\Dany \given \wstar} + \HofJacobian{\wstar} + 0
    = \HofJacobian{\Dany \given \wstar} + \HofJacobian{\wstar}, \\
    \HofHessian{\wstar \given \Dany} &= \HofHessian{\Dany \given \wstar} + \HofHessian{\wstar}.
\end{align}

\begin{proposition}
    \label{prop:observed_information_additive}
    The \emph{observed information} $\HofHessian{\ys \given \xs, \wstar}$ is additive:
    \begin{align}
        \HofHessian{\ys \given \xs, \wstar} &= \sum_i \HofHessian{\y_i \given \x_i, \wstar} 
        = \sum_i - \nabla^2_{\w} \log \pof{\y_i \given \x_i, \wstar}.
    \end{align}
\end{proposition}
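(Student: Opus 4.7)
The plan is to derive additivity directly from the conditional-independence structure of the probabilistic model stated in \S\ref{sec:background}, combined with the fact that both the logarithm and the Hessian are well-behaved under products and sums, respectively. The statement is a routine consequence of the setup, so I expect no real obstacle; the work is just in chaining the definitions cleanly using the notation introduced just above the proposition.

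First I would unfold the probabilistic model: the paper posits $\pof{\ys \given \xs, \w} = \pof{\y_1,\ldots,\y_n \given \x_1,\ldots,\x_n,\w}$, and in the standard discriminative supervised setting the samples are conditionally independent given the parameters, i.e.
\begin{align}
    \pof{\ys \given \xs, \w} = \prod_{i=1}^n \pof{\y_i \given \x_i, \w}.
\end{align}
Taking $-\log$ of both sides and using the notation $\xHof{\pof{\cdot}} = -\log \pof{\cdot}$ from \S\ref{sec:background} yields
\begin{align}
    \Hof{\ys \given \xs, \w} = \sum_{i=1}^n \Hof{\y_i \given \x_i, \w},
\end{align}
where each term is a function of $\w$ (with $\xs, \ys$ treated as fixed outcomes, consistent with the ``mix of random variables and outcomes'' convention of \S\ref{sec:background}).

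Next I would apply the operator $-\nabla^2_\w$ to both sides. Since differentiation is linear and the sum is finite, the Hessian commutes with the summation, so
\begin{align}
    -\nabla^2_\w \log \pof{\ys \given \xs, \w} = \sum_{i=1}^n -\nabla^2_\w \log \pof{\y_i \given \x_i, \w}.
\end{align}
By the notation $\HofHessian{\cdot} = -\nabla^2_\w \log \pof{\cdot}$ introduced before Proposition~\ref{prop:param_approximation}, and the shorthand $\HofHessian{\Dany \given \wstar} = \HofHessian{\ys \given \xs, \wstar}$ fixed in the preceding notation box, evaluating at $\wstar$ gives exactly
\begin{align}
    \HofHessian{\ys \given \xs, \wstar} = \sum_{i=1}^n \HofHessian{\y_i \given \x_i, \wstar} = \sum_{i=1}^n -\nabla^2_\w \log \pof{\y_i \given \x_i, \wstar},
\end{align}
which is the claim. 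The only subtlety worth flagging explicitly in the write-up is that additivity hinges on conditional independence of the $\y_i$ given $\w$ and $\xs$ in the discriminative model; it is this that turns the joint log-likelihood into a sum and thereby makes the Hessian decompose sample-wise.
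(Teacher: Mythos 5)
Your proof is correct and follows the same route the paper (implicitly) takes: factorize the joint likelihood via conditional independence of the $\y_i$ given $\w$, turn the product into a sum with $-\log$, and use linearity of the Hessian; the paper itself leaves this proposition unproved but relies on exactly this conditional-independence assumption when proving the additivity of Fisher information in the appendix. Your explicit flagging of the conditional-independence hypothesis is appropriate and matches the paper's usage.
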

Note that the observed information has the opposite sign compared to other works because it simplifies the exposition.

\textbf{Uninformative Prior.} For a Gaussian prior $\pof{\w} \sim \normaldist{\mu}{\Sigma}$, we have $\HofHessian{\wstar} = \Sigma^{-1}$ and
\begin{math}
    \HofHessian{\wstar \given \Dany} = \HofHessian{\Dany \given \wstar} + \Sigma^{-1}.
\end{math}
For an uninformative prior with ``infinite prior variance'' $\Sigma^{-1} \to 0$, we have $\HofHessian{\wstar} = 0$, and $\HofHessian{\wstar \given \Dany} = \HofHessian{\Dany \given \wstar}$.

\begin{importantresult}
\begin{proposition}
    \label{prop:entropy_appox}
    The entropy of the second-order approximation of $\pof{\w}$ around $\wstar$ is 
    \begin{align}
        \Hof{\W} \approx -\tfrac{1}{2}\log \det \HofHessian{\wstar} + C_k,
    \end{align}
    where $C_k = \tfrac k 2 \log 2\pi e$ is a constant (independent of $\Dany$ and $\wstar$) and $k$ is the number of dimensions of $\w$.
\end{proposition}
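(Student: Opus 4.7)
The plan is to reduce this directly to the entropy of a multivariate Gaussian using Proposition~\ref{prop:param_approximation}. By that proposition, the second-order approximation of $\pof{\w}$ around $\wstar$ is the Gaussian $\normaldist{\wstar - \HofHessian{\wstar}^{-1} \HofJacobian{\wstar}}{\HofHessian{\wstar}^{-1}}$, so computing $\Hof{\W}$ for the approximation amounts to computing the differential entropy of this multivariate Gaussian. The key observation is that the mean does not affect the differential entropy of a Gaussian, so the whole dependence on $\wstar$ (and on $\Dany$, when we apply this to $\Hof{\W \given \Dany}$ later) is carried by the covariance matrix $\HofHessian{\wstar}^{-1}$.

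First I would recall the standard formula for the differential entropy of a $k$-dimensional multivariate Gaussian with covariance $\Sigma$,
\begin{align}
    \Hof{\normaldist{\mu}{\Sigma}} = \tfrac{1}{2}\log\det(2\pi e\,\Sigma) = \tfrac{1}{2}\log\det\Sigma + \tfrac{k}{2}\log 2\pi e.
\end{align}
One quick way to justify this without grinding through the integral is to note that for $\Sigma = I$ the entropy is $\tfrac{k}{2}\log 2\pi e$ (a product of independent standard Gaussians), and then to apply the change-of-variables $\w\mapsto\Sigma^{1/2}\w + \mu$, which adds $\tfrac{1}{2}\log\det\Sigma$ via the log-Jacobian.

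Next I would substitute $\Sigma = \HofHessian{\wstar}^{-1}$ from Proposition~\ref{prop:param_approximation}, using $\log\det\HofHessian{\wstar}^{-1} = -\log\det\HofHessian{\wstar}$ (valid since $\HofHessian{\wstar}$ is positive-definite, as required by Proposition~\ref{prop:param_approximation}), to obtain
\begin{align}
    \Hof{\W} \approx -\tfrac{1}{2}\log\det\HofHessian{\wstar} + \tfrac{k}{2}\log 2\pi e,
\end{align}
and identify the constant $C_k = \tfrac{k}{2}\log 2\pi e$, which is independent of $\wstar$ and of any conditioning data $\Dany$ since it depends only on the parameter dimension $k$.

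There is no real obstacle here: the only subtlety worth flagging is the status of the approximation sign. The equality $\Hof{\W} = -\tfrac{1}{2}\log\det\HofHessian{\wstar} + C_k$ is exact for the Gaussian approximant; the ``$\approx$'' comes entirely from replacing $\pof{\w}$ by its second-order expansion around $\wstar$, whose accuracy was already discussed under ``Approximation Quality'' in \S\ref{sec:laplace_approximation}. I would make this explicit in one line so that the reader does not confuse the Taylor error with the Gaussian entropy calculation itself.
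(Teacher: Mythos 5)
Your proposal is correct and follows exactly the route the paper intends: the paper treats this proposition as an immediate consequence of Proposition~\ref{prop:param_approximation} together with the standard differential-entropy formula $\tfrac{1}{2}\log\det(2\pi e\,\Sigma)$ for a $k$-dimensional Gaussian with $\Sigma = \HofHessian{\wstar}^{-1}$, and offers no separate proof. Your added remark distinguishing the exact Gaussian entropy computation from the Taylor-expansion error behind the ``$\approx$'' is a sensible clarification consistent with the paper's discussion of approximation quality.
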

\end{importantresult}
While \Cref{prop:entropy_appox} is straightforward, it is the main result for this section as it will allow us to approximate all the mentioned information quantities in \S\ref{sec:iq_approximations} and \S\ref{sec:similarity_matrices}.
\section{Fisher Information}
\label{sec:fisher_information}

\newcommand{\encoder}[1]{{\hat f(#1)}}
\newcommand{\realencoder}[1]{{f (#1)}}

Fisher information plays a central role in the approximations of information quantities because, unlike the observed information, it is always positive semidefinite.
We use Fisher information to unify various acquisition functions in \S\ref{sec:unification}.
The following section revisits Fisher information, its properties, special cases, and common approximations.
All proofs are given in \S\ref{appsec:fisher_information}.

In particular, we look at two special cases with more favorable properties: following \citet{kunstner2019limitations}, when we can write our model as $\pof{y \given \logits=\encoder{x; \w}}$, where $\encoder{x; \w}$ are the logits, and $\pof{y \given \logits}$ is a distribution from the exponential family, Fisher information is independent of $y$, which has useful consequences as we shall see; and following \citet{chaudhuri2015convergence}, when we have a \emph{Generalized Linear Model (GLM)}, observed information also is independent of $y$.
The results for the GLM are often applied as an approximation known as \emph{Generalized Gauss-Newton approximation (GGN)}.
Together with numerical approximations, such as a diagonal approximation or low-rank factorizations, observed information and Fisher information can then be efficiently approximated for large deep neural networks \citep{daxberger2021laplace}.

\begin{definition}
    \label{def:fisher_info}
    The \emph{Fisher information} $\FisherInfo{\Y \given \x, \wstar}$ is the expectation over observed information using the model's own predictions $\pof{\y\given\x,\wstar}$ for a given $\x$ at $\wstar$:
    \begin{align}
        \FisherInfo{\Y \given \x, \wstar} = \E{\pof{\y \given \x, \wstar}}{\HofHessian{\y \given \x, \wstar}}.
    \end{align}
\end{definition}
This notation of the Fisher information is consistent with the notation for information quantities from \S\ref{sec:background}, introduced in \citet{kirsch2021practical}, but extended to the observed information: the Fisher is but an expectation over the observed information, and the observed information is the Hessian of the negative log-likelihood.

\begin{restatable}{proposition}{fimadditive}
    Like observed information, Fisher information is additive:
    \begin{align}
        \FisherInfo{\Ys \given \xs, \wstar} = \sum_i \FisherInfo{\{Y_i\} \given x_i, \wstar}.
    \end{align}
\end{restatable}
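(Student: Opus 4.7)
The plan is to derive the identity from two ingredients already stated in the paper: (i) the additivity of the observed information (Proposition \ref{prop:observed_information_additive}), and (ii) the conditional-independence structure of the discriminative probabilistic model, $\pof{\ys \given \xs, \w} = \prod_i \pof{y_i \given x_i, \w}$, introduced in \S\ref{sec:background}.

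First, I would expand the left-hand side directly from Definition \ref{def:fisher_info}:
\begin{align}
    \FisherInfo{\Ys \given \xs, \wstar}
    = \E{\pof{\ys \given \xs, \wstar}}{\HofHessian{\ys \given \xs, \wstar}}.
\end{align}
Applying Proposition \ref{prop:observed_information_additive} to the integrand replaces $\HofHessian{\ys \given \xs, \wstar}$ with $\sum_i \HofHessian{y_i \given x_i, \wstar}$, and by linearity of expectation the sum passes outside. Then, for each term in the sum, I would use conditional independence to factor $\pof{\ys \given \xs, \wstar} = \prod_j \pof{y_j \given x_j, \wstar}$; since $\HofHessian{y_i \given x_i, \wstar}$ only depends on $y_i$, the integrals over $y_j$ for $j \neq i$ evaluate to one, leaving $\E{\pof{y_i \given x_i, \wstar}}{\HofHessian{y_i \given x_i, \wstar}} = \FisherInfo{\{Y_i\} \given x_i, \wstar}$. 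Summing over $i$ gives the claim.

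There is no real obstacle here: the step that does the genuine work is the additivity of observed information (which is a consequence of the additivity of the log and is already established), while the passage from observed information to Fisher information is simply a marginalization under the product structure of the likelihood. The only thing worth flagging is that additivity for Fisher information, unlike for observed information, genuinely uses the conditional independence assumption of the model—without it, cross-terms between different $Y_j$ would not vanish under the expectation.
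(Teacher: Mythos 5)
Your proposal is correct and follows essentially the same route as the paper's proof: expand the Fisher information via its definition, apply the additivity of observed information inside the expectation, and use the conditional independence $Y_i \independent Y_j \given x_i, x_j, \wstar$ to marginalize each term down to $\E{\pof{y_i \given x_i, \wstar}}{\HofHessian{y_i \given x_i, \wstar}}$. Your remark that conditional independence is the ingredient genuinely needed beyond observed-information additivity is accurate and consistent with how the paper states the proof.
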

There are two other equivalent definitions of Fisher information:
\begin{restatable}{proposition}{fisherinformationequivalences}
    \label{prop:fisher_information_equivalences}
    Fisher information is equivalent to:
    \begin{align}
        \FisherInfo{Y \given x, \wstar} &= \E{\pof{y \given x, \wstar}}{\HofJacobian{y \given x, \wstar}^T \, \HofJacobian{y \given x, \wstar}} = \implicitCov{\HofJacobian{Y \given x, \wstar}}. 
    \end{align}
\end{restatable}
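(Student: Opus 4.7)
The plan is to establish two facts in sequence: first, that the expected score $\E{\pof{y\given x,\wstar}}{\HofJacobian{y\given x,\wstar}}$ vanishes, and second, that the Hessian-form and outer-product-form of Fisher information agree. The covariance identity then follows immediately.

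First, I would derive the standard \emph{zero-mean score} identity via the log-derivative trick. Starting from the normalization $\int \pof{y\given x,\w}\,dy = 1$ and differentiating once with respect to $\w$ (assuming the usual regularity conditions allow swapping $\nabla_\w$ and $\int$), I get $\int \nabla_\w \pof{y\given x,\w}\,dy = 0$. Using the identity $\nabla_\w \pof{y\given x,\w} = \pof{y\given x,\w}\,\nabla_\w \log \pof{y\given x,\w} = -\pof{y\given x,\w}\,\HofJacobian{y\given x,\w}$, this becomes
\begin{align}
    \E{\pof{y\given x,\wstar}}{\HofJacobian{y\given x,\wstar}} = 0.
\end{align}

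Second, I would differentiate once more. Applying the product rule to $\nabla_\w\bigl[\pof{y\given x,\w}\,\nabla_\w \log \pof{y\given x,\w}\bigr]$ and integrating over $y$ yields
\begin{align}
    0 = \int \pof{y\given x,\w}\,\bigl(\nabla_\w \log \pof{y\given x,\w}\bigr)^T\bigl(\nabla_\w \log \pof{y\given x,\w}\bigr)\,dy + \int \pof{y\given x,\w}\,\nabla^2_\w \log \pof{y\given x,\w}\,dy.
\end{align}
Rewriting in our notation ($\HofJacobian{\cdot} = -\nabla_\w \log \pof{\cdot}$, whose outer product is unchanged by the sign, and $\HofHessian{\cdot} = -\nabla^2_\w \log \pof{\cdot}$) gives the first claimed equality:
\begin{align}
    \FisherInfo{Y\given x,\wstar} = \E{\pof{y\given x,\wstar}}{\HofHessian{y\given x,\wstar}} = \E{\pof{y\given x,\wstar}}{\HofJacobian{y\given x,\wstar}^T\HofJacobian{y\given x,\wstar}}.
\end{align}

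Finally, for the covariance form, I would expand $\implicitCov{\HofJacobian{Y\given x,\wstar}} = \E{}{\HofJacobian{Y\given x,\wstar}^T\HofJacobian{Y\given x,\wstar}} - \E{}{\HofJacobian{Y\given x,\wstar}}^T\E{}{\HofJacobian{Y\given x,\wstar}}$ and invoke the zero-mean score identity to kill the second term. The only real subtlety is the interchange of differentiation and integration in the two derivative steps, which is the standard regularity assumption underlying Fisher information; I would state it explicitly but not dwell on it, as it is implicit in the setup of the paper. Apart from this, the argument is essentially a calculation, and no step is expected to be a serious obstacle.
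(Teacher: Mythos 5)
Your proposal is correct and follows essentially the same route as the paper: the paper likewise shows $\E{\pof{y\given x,\wstar}}{\HofJacobian{y\given x,\wstar}}=0$ and $\E{\pof{y\given x,\wstar}}{\nabla_\w^2\pof{y\given x,\wstar}/\pof{y\given x,\wstar}}=0$ by exchanging differentiation with integration over the normalization, combines these with the decomposition $\HofHessian{y\given x,\wstar}=\HofJacobian{y\given x,\wstar}^T\HofJacobian{y\given x,\wstar}-\nabla_\w^2\pof{y\given x,\wstar}/\pof{y\given x,\wstar}$, and then expands the covariance using the zero-mean score. Your second differentiation via the product rule on $\pof{y\given x,\w}\nabla_\w\log\pof{y\given x,\w}$ is just a reorganization of the same calculation, so no substantive difference remains.
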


\textbf{Special Case: Exponential Family.} %
\citet{kunstner2019limitations} show in their appendix that if we split a discriminative model into prelogits $\encoder{x; \w}$ and a predictor $\pof{y \given \logits=\encoder{x; \w}}$, Fisher information does not depend on $y$ when $\pof{y \given \logits}$ is a distribution from an exponential family (independent of $\w$). This covers a normal distribution for regression parameterized by mean and variance predictions or a categorical distribution via the softmax function.
The following statements and proofs follow \citet{kunstner2019limitations}:
\begin{restatable}{proposition}{fimexpfamily}
    The Fisher information $\FisherInfo{Y \given x, \wstar}$ for a model $\pof{y \given \logits=\encoder{x; \wstar}}$ is equivalent to:
    \begin{align}
        \FisherInfo{Y \given x, \wstar} = \nabla_\w \encoder{x; \wstar}^T \E{\pof{y \given x, \wstar}}{\nabla_\logits^2 \Hof{y \given \logits=\encoder{x ; \wstar}}} \nabla_\w \encoder{x; \wstar},
    \end{align}
    where $\nabla_\logits^2 \Hof{y \given \logits=\encoder{x ; \wstar}}$ is short for $\left. \nabla_\logits^2 \Hof{y \given \logits}\right |_{\logits=\encoder{x ; \wstar}}$.
\end{restatable}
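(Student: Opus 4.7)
The plan is to unfold the Hessian of the negative log-likelihood using the chain rule for second derivatives and then show that the ``extra'' term which involves the Hessian of the prelogits $\encoder{x;\w}$ vanishes in expectation under $\pof{y\given x,\wstar}$ precisely because $\pof{y\given \logits}$ is an exponential family.

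First, I would start from \Cref{def:fisher_info} and write $\FisherInfo{Y\given x,\wstar}=\E{\pof{y\given x,\wstar}}{\HofHessian{y\given x,\wstar}}$, where $\HofHessian{y\given x,\wstar}=-\nabla_\w^2 \log\pof{y\given \logits=\encoder{x;\wstar}}$. Viewing this as a composition $\w\mapsto \encoder{x;\w}\mapsto -\log\pof{y\given \logits}$, the chain rule for Hessians gives
\begin{align}
\HofHessian{y\given x,\wstar} = \nabla_\w \encoder{x;\wstar}^T\,\nabla_\logits^2\Hof{y\given\logits=\encoder{x;\wstar}}\,\nabla_\w \encoder{x;\wstar} + \sum_k \bigl[\HofJacobian{y\given\logits=\encoder{x;\wstar}}\bigr]_k\,\nabla_\w^2 \encoder{x;\wstar}_k,
\end{align}
where the first term is the ``Gauss--Newton'' part and the second collects the contributions from the curvature of the prelogit map $\encoder{\cdot}$.

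Next, I would take the expectation over $\pof{y\given x,\wstar}$. Since $\nabla_\w\encoder{x;\wstar}$ and $\nabla_\w^2\encoder{x;\wstar}_k$ do not depend on $y$, only the inner expectations of $\nabla_\logits^2\Hof{y\given \logits}$ and of $\HofJacobian{y\given \logits=\encoder{x;\wstar}}=-\nabla_\logits \log \pof{y\given \logits}$ remain. The crucial step is to invoke the exponential-family assumption: writing $\pof{y\given\logits}\propto \exp(\logits^\top T(y)-A(\logits))$ (or the appropriate natural parametrization implicit in $\encoder{\cdot}$), the score $\nabla_\logits \log \pof{y\given\logits}=T(y)-\nabla A(\logits)$ has mean zero under $\pof{y\given\logits}$, so $\E{\pof{y\given x,\wstar}}{\HofJacobian{y\given\logits=\encoder{x;\wstar}}}=0$ and the second summand in the chain-rule expansion vanishes termwise. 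The first term survives unchanged, yielding exactly the claimed identity.

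The main obstacle is the bookkeeping of the Hessian chain rule together with a careful statement of ``exponential family'' in a form where the natural parameters are (essentially) the logits $\logits=\encoder{x;\w}$, so that the score-mean-zero identity applies with respect to $\logits$ rather than $\w$. Once that is set up, the rest is direct. I would also note in passing that the statement can be phrased equivalently using $\nabla_\logits^2(-\log\pof{y\given\logits})$, which under an exponential family is $\nabla^2 A(\logits)$ and hence independent of $y$, recovering \citet{kunstner2019limitations}'s observation that the inner expectation over $y$ is actually redundant in this special case.
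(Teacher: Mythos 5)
Your proof is correct in substance but takes a genuinely different route from the paper. The paper never touches the second-order chain rule: it invokes \Cref{prop:fisher_information_equivalences} to write $\FisherInfo{Y \given x, \wstar} = \implicitCov{\HofJacobian{Y \given x, \wstar}}$, pushes the (constant, $y$-independent) linear map $\nabla_\w \encoder{x;\wstar}$ out of the covariance, and then applies the same score-covariance/expected-Hessian equivalence a second time in logit space to turn $\implicitCov{\nabla_\logits \Hof{Y \given \logits=\encoder{x;\wstar}}}$ into $\E{\pof{y\given x,\wstar}}{\nabla_\logits^2 \Hof{y \given \logits=\encoder{x;\wstar}}}$. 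You instead start from the expected-Hessian form, expand $\HofHessian{y\given x,\wstar}$ via the Hessian chain rule into a Gauss--Newton term plus a curvature term weighted by the logit-space score, and kill the latter in expectation. Both are valid; the paper's version is shorter and avoids the bookkeeping you flag as the main obstacle, while yours makes the GGN decomposition explicit, which is instructive given that the GGN approximation reappears later in the paper. One correction, though: you call the exponential-family assumption ``crucial'' for the vanishing of $\E{\pof{y\given x,\wstar}}{\nabla_\logits \log \pof{y \given \logits=\encoder{x;\wstar}}}$, but the score has mean zero under its own distribution for \emph{any} regular density (this is exactly \Cref{lemma:helper_vanishing_expectations}, transplanted from $\w$-space to $\logits$-space). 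The proposition as stated assumes only that the model factors through $\logits$, not that $\pof{y\given\logits}$ is an exponential family; as written, your argument proves a narrower statement than claimed, though the fix is a one-line substitution of the general score identity for the exponential-family-specific one. The exponential-family structure only becomes load-bearing in \Cref{prop:special_case_exp_family_fim}, where it makes the inner expectation over $y$ redundant, as you correctly note in passing.
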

\yarin{I think sending the draft to Philip Henning will be good for references about some early work on those topics}

\begin{proposition}
    \label{prop:special_case_exp_family_fim}
    The Fisher information $\FisherInfo{Y \given x, \wstar}$ of a model of the form $\pof{y \given \logits=\encoder{x; \wstar}}$  is independent of $y$, where $\pof{y \given \logits}$ is a distribution from an exponential family, i.e., $\log \pof{y \given \logits} = \logits^T T(y) - A(\logits) + \log h(y)$:
    \begin{align}
        \FisherInfo{Y \given x, \wstar} = \nabla_\w \encoder{x; \wstar}^T \, {\nabla_\logits^2 A(\logits=\encoder{x ; \wstar})} \, \nabla_\w \encoder{x; \wstar}.
    \end{align}
\end{proposition}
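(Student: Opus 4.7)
The strategy is to invoke the preceding proposition (the chain-rule form of the Fisher) and then show that, for an exponential-family likelihood, the $y$-dependence inside the expectation disappears before the expectation is even taken. Concretely, I would start from
\begin{align*}
    \FisherInfo{Y \given x, \wstar} = \nabla_\w \encoder{x; \wstar}^T \, \E{\pof{y \given x, \wstar}}{\nabla_\logits^2 \Hof{y \given \logits=\encoder{x; \wstar}}} \, \nabla_\w \encoder{x; \wstar},
\end{align*}
so the entire task reduces to computing $\nabla_\logits^2 \Hof{y \given \logits}$ for the exponential-family form $\log \pof{y \given \logits} = \logits^T T(y) - A(\logits) + \log h(y)$.

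Next, I would substitute the exponential-family parametrization into the (pointwise) entropy. Since $\Hof{y \given \logits} = -\log \pof{y \given \logits} = -\logits^T T(y) + A(\logits) - \log h(y)$, the first gradient with respect to $\logits$ is $-T(y) + \nabla_\logits A(\logits)$, and the Hessian is simply
\begin{align*}
    \nabla_\logits^2 \Hof{y \given \logits} = \nabla_\logits^2 A(\logits),
\end{align*}
which is manifestly independent of $y$: the only $y$-dependent term in $\log \pof{y \given \logits}$ is linear in $\logits$ (through $\logits^T T(y)$) and therefore vanishes after two differentiations in $\logits$, while $\log h(y)$ does not depend on $\logits$ at all.

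Finally, I would plug this back into the expression from the preceding proposition. Because $\nabla_\logits^2 A(\logits=\encoder{x;\wstar})$ is a deterministic function of $x$ and $\wstar$, the expectation over $\pof{y \given x, \wstar}$ is trivial and yields
\begin{align*}
    \FisherInfo{Y \given x, \wstar} = \nabla_\w \encoder{x; \wstar}^T \, \nabla_\logits^2 A(\logits=\encoder{x;\wstar}) \, \nabla_\w \encoder{x; \wstar},
\end{align*}
as claimed.

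\textbf{Main obstacle.} There is essentially no obstacle: once the exponential-family parametrization is substituted, the result follows from the linearity of $\logits^T T(y)$ in $\logits$. The only subtlety worth flagging is notational, namely to make clear that $\nabla_\logits^2 A(\logits=\encoder{x;\wstar})$ denotes $\left.\nabla_\logits^2 A(\logits)\right|_{\logits=\encoder{x;\wstar}}$, consistent with the shorthand used in the preceding proposition, and that the chain rule through $\encoder{x; \wstar}$ has already been applied once to give the outer $\nabla_\w \encoder{x;\wstar}$ factors, so no further chain-rule bookkeeping is needed at this step.
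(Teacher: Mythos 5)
Your proposal is correct and matches the paper's (largely implicit) argument exactly: the paper derives the preceding chain-rule form of the Fisher information and then, just as you do, observes that for the exponential-family parametrization the only $y$-dependent term $\logits^T T(y)$ is linear in $\logits$, so $\nabla_\logits^2 \Hof{y \given \logits} = \nabla_\logits^2 A(\logits)$ and the expectation over $y$ becomes trivial. No gaps; the notational caveat you flag about $\nabla_\logits^2 A(\logits=\encoder{x;\wstar})$ is also how the paper reads it.
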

\emph{It is crucial that the exponential distribution not depend on $\w$.}
This simplifies computing Fisher information: no expectation over $y$s is needed anymore. The full outer product may not be needed explicitly either.

As examples, we will consider two common parameteric distributions from the exponential family:

\textbf{Gaussian Distribution.} When $\pof{y \given \logits} = \normaldistpdf{y}{\logits}{1}$, we have $\HofHessian{y \given \logits} = 1$ for all $y, \logits$, and thus
\begin{align}
    \FisherInfo{Y \given x, \wstar} = \nabla_\w \encoder{x; \wstar}^T \, \nabla_\w \encoder{x; \wstar}.
\end{align}

\textbf{Categorical Distribution.} When $\pof{y \given \logits} = \softmax(\logits)_y$, we have $\HofHessian{y \given \logits} = \diag(\pi) - \pi\,\pi^T,$ with $\pi_y=\pof{y \given \logits}$, and thus:
\begin{align}
    \FisherInfo{Y \given x, \wstar} = \nabla_\w \encoder{x; \wstar}^T \, (\diag(\pi) - \pi\,\pi^T) \, \nabla_\w \encoder{x; \wstar}.
\end{align}

\textbf{Special Case: Generalized Linear Models.} %
\citet{chaudhuri2015convergence} require that observed information is independent of $y$, which we will also use later. This holds for Generalized Linear Models:
\begin{importantresult}
\begin{definition}
    A \emph{generalized linear model (GLM)} is a model $\pof{y \given \logits=\encoder{x ; \w}}$ such that $\log \pof{y \given \logits} = \logits^T T(y) - A(\logits) + \log h(y)$ is a distribution of the exponential family, independent of $\w$, and $\encoder{x ; \w} = \w^T \, x$ is linear in the parameters $\w$. 
\end{definition}
\begin{restatable}{proposition}{glmhessian}
    \label{prop:glm_hessian}
    The observed information $\HofHessian{y \given x, \wstar}$ of a GLM is independent of $y$.
    \begin{align}
        \HofHessian{y \given x, \wstar} &= \nabla_\w \encoder{x ; \wstar}^T \, \nabla_\logits^2 \Hof{y \given \logits=\encoder{x ; \wstar}} \, \nabla_\w \encoder{x ; \wstar} \\
        &= \nabla_\w \encoder{x ; \wstar}^T \, \nabla_\logits^2 A(w^T x) \, \nabla_\w \encoder{x ; \wstar}.
    \end{align}
\end{restatable}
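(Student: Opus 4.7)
I would apply the chain rule for the Hessian to the composition $\w \mapsto \encoder{x;\w} \mapsto \Hof{y \given \logits=\encoder{x;\w}}$. In general this yields
\begin{align*}
\HofHessian{y \given x, \wstar} &= \nabla_\w \encoder{x;\wstar}^T \, \nabla_\logits^2 \Hof{y \given \logits=\encoder{x;\wstar}} \, \nabla_\w \encoder{x;\wstar} \\
&\quad + \sum_j \bigl(\partial_{\logits_j} \Hof{y \given \logits=\encoder{x;\wstar}}\bigr) \, \nabla_\w^2 [\encoder{x;\wstar}]_j,
\end{align*}
i.e.\ a Gauss--Newton-style pullback of the $\logits$-Hessian, plus a second term that couples the logit-space gradient of the loss with the $\w$-Hessian of each individual logit component.

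Next, I would exploit the two defining features of a GLM in turn. First, linearity of $\encoder$ in $\w$: since $\encoder{x;\w} = \w^T x$ is affine in $\w$, every $\nabla_\w^2 [\encoder{x;\wstar}]_j$ vanishes identically, so the second chain-rule term drops out and the first displayed equality of the proposition falls out. Second, the exponential-family form: $\Hof{y \given \logits} = -\logits^T T(y) + A(\logits) - \log h(y)$ is, apart from $A(\logits)$, linear plus constant in $\logits$, so $\nabla_\logits^2 \Hof{y \given \logits} = \nabla_\logits^2 A(\logits)$, which is manifestly independent of $y$. Substituting this into the first equality yields the second displayed equality and, combined with the previous step, the $y$-independence claim.

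\textbf{Main obstacle.} There is no substantive obstacle beyond careful chain-rule bookkeeping --- in particular, writing the quadratic form $\nabla_\w \encoder{x;\wstar}^T (\cdot) \, \nabla_\w \encoder{x;\wstar}$ with the correct matrix orientation, since $\encoder{x;\w}$ is vector-valued whenever $y$ has more than two outcomes (e.g.\ the categorical case, where $\nabla_\w \encoder{x;\wstar}$ is a Jacobian rather than a single gradient). The conceptual point worth highlighting is that the two GLM hypotheses are doing complementary work: linearity of $\encoder$ in $\w$ kills the second-order chain-rule term \emph{pointwise} in $y$, and then the exponential-family form eliminates the residual $y$-dependence in the remaining $\logits$-Hessian. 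This is strictly stronger than the analogous exponential-family result for Fisher information stated earlier in the excerpt, where the second-order chain-rule term is only killed in \emph{expectation} (via the zero-mean score of the exponential family) rather than pointwise; consequently, for a GLM, observed information and Fisher information coincide.
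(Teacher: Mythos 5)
Your proposal is correct and matches the paper's own proof essentially step for step: the paper likewise differentiates the Jacobian $\HofJacobian{y \given x, \wstar} = \nabla_\logits \Hof{y \given \logits=\encoder{x;\wstar}} \, \nabla_\w \encoder{x;\wstar}$, kills the extra chain-rule term via $\nabla_\w^2[\w^T x] = 0$, and then uses the exponential-family form to replace $\nabla_\logits^2 \Hof{y \given \logits}$ with $\nabla_\logits^2 A(\logits)$. Your closing remark --- that linearity removes the second-order term pointwise in $y$, whereas the exponential-family Fisher result only removes it in expectation, so that observed and Fisher information coincide for a GLM --- is exactly the content of the subsequent Proposition in the paper.
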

\begin{restatable}{proposition}{glmhessianfim}
    \label{prop:glm_hessian_fim}
    For a model such that the observed information $\HofHessian{y \given x, \wstar}$ is independent of $y$, we have:
    \begin{align}
        \FisherInfo{Y \given x, \wstar} = \HofHessian{y^*\given x, \wstar}
    \end{align}
    for any $y^*$, and also trivially:
    \begin{align}
        \FisherInfo{Y \given x, \wstar} = \E{\pof{y \given x}}{\HofHessian{y \given x, \wstar}}.
    \end{align}
\end{restatable}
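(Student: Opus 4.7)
The plan is to derive both identities directly from \Cref{def:fisher_info}, exploiting the hypothesis that $\HofHessian{y \given x, \wstar}$ does not depend on $y$. Starting from
\begin{align*}
    \FisherInfo{Y \given x, \wstar} = \E{\pof{y \given x, \wstar}}{\HofHessian{y \given x, \wstar}},
\end{align*}
I would observe that since the integrand is constant in $y$, it can be pulled outside the expectation. Evaluating the integrand at any fixed $y^*$ (which yields the same value by assumption) then gives
\begin{align*}
    \FisherInfo{Y \given x, \wstar} = \HofHessian{y^* \given x, \wstar} \cdot \E{\pof{y \given x, \wstar}}{1} = \HofHessian{y^* \given x, \wstar},
\end{align*}
which is the first claim.

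For the second claim, I would simply note that it is \Cref{def:fisher_info} restated: the expectation of a $y$-independent quantity with respect to $\pof{y \given x, \wstar}$ is that quantity itself, and one is free to write it as such an expectation trivially. The only subtlety worth flagging is a notational one: \Cref{def:fisher_info} takes the expectation under $\pof{y \given x, \wstar}$ while the statement writes $\pof{y \given x}$; since independence of $y$ makes the choice of averaging measure irrelevant, this discrepancy is harmless. There is no real obstacle here---the result is a one-line consequence of \Cref{def:fisher_info} combined with the hypothesis---so the ``proof'' is essentially just making explicit the observation that expectations of constants are constants. Its value lies entirely in the conceptual link it provides: under the GLM assumption (via \Cref{prop:glm_hessian}), Fisher information and observed information coincide pointwise, which is the property that downstream approximations in \S\ref{sec:iq_approximations} will exploit.
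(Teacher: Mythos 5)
Your proposal is correct and matches the paper's own proof: both simply observe that the Fisher information is the expectation of the observed information, which by hypothesis is constant in $y$, so the expectation collapses to its value at any fixed $y^*$ (and the choice of averaging measure, $\pof{y \given x}$ versus $\pof{y \given x, \wstar}$, is immaterial). No gaps.
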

\end{importantresult}
Note that the expectation is over $\pof{y \given x}$ and not $\pof{y \given x, \wstar}$, and \begin{math}
        \E{\pof{\ys \given \xs}}{\HofHessian{\ys \given \xs, \wstar}} = \FisherInfo{\Ys \given \xs, \wstar}
\end{math} is additive then.
\begin{restatable}{proposition}{propkroneckerproductfisher}
    For a GLM, when $\encoder{x ; \w}: \realnum^D \to \realnum^\numclasses$, where $\numclasses$ is the number of classes (outputs), $D$ is the number of input dimensions, $\w \in \realnum^{D \times \numclasses}$, and
    assuming the parameters are flattened into a single vector for the Jacobian, we have $\nabla_\w \encoder{x ; \w} = \mathrm{Id}_\numclasses \otimes x^T \in \realnum^{\numclasses \times (\numclasses \cdot D)}$, where $\otimes$ denotes the Kronecker product, and:
    \begin{align}
        \nabla_\w \encoder{x ; \wstar}^T \, \nabla_\logits^2 A(\w^T x) \, \nabla_\w \encoder{x ; \wstar} = \nabla_\logits^2 A(w^T x) \otimes x \, x^T.
    \end{align}
\end{restatable}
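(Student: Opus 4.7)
The plan is to verify the two claims separately: first the formula for the Jacobian, and then the Kronecker identity for the sandwiched Hessian.

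\textbf{Step 1 (Jacobian).} With $\w \in \realnum^{D \times \numclasses}$ and $\encoder{x;\w} = \w^T x$, the $k$-th output component is $\encoder{x;\w}_k = \sum_{d=1}^{D} w_{d,k}\, x_d$, so $\partial \encoder{x;\w}_k / \partial w_{d,k'} = x_d\, \delta_{k,k'}$. Flattening $\w$ column-by-column (so that the $k$-th block of $D$ consecutive entries of the flattened vector is the $k$-th column $w_{:,k}$), the Jacobian is block-diagonal with the row vector $x^T$ in each of its $K$ diagonal blocks. By the definition of the Kronecker product, this block-diagonal matrix is exactly $\mathrm{Id}_\numclasses \otimes x^T \in \realnum^{\numclasses \times (\numclasses D)}$. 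Transposing gives $\nabla_\w \encoder{x;\w}^T = \mathrm{Id}_\numclasses \otimes x$.

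\textbf{Step 2 (Sandwiching via mixed-product).} Let $H \defeq \nabla_\logits^2 A(\w^T x) \in \realnum^{\numclasses \times \numclasses}$. Writing $H = H \otimes 1$ (where $1$ is the $1 \times 1$ identity) and applying the mixed-product property $(A \otimes B)(C \otimes D) = (AC) \otimes (BD)$ twice:
\begin{align*}
(\mathrm{Id}_\numclasses \otimes x)\, H\, (\mathrm{Id}_\numclasses \otimes x^T)
&= (\mathrm{Id}_\numclasses \otimes x)\,(H \otimes 1)\,(\mathrm{Id}_\numclasses \otimes x^T) \\
&= \bigl((\mathrm{Id}_\numclasses \cdot H) \otimes (x \cdot 1)\bigr)(\mathrm{Id}_\numclasses \otimes x^T) \\
&= (H \otimes x)(\mathrm{Id}_\numclasses \otimes x^T) \\
&= (H \cdot \mathrm{Id}_\numclasses) \otimes (x \cdot x^T) = H \otimes x\,x^T,
\end{align*}
which is the claim. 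Substituting $H = \nabla_\logits^2 A(\w^T x)$ recovers the stated identity.

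\textbf{Main obstacle.} The only subtlety is that the Kronecker identity $\mathrm{Id}_\numclasses \otimes x^T$ depends on the chosen flattening convention for $\w$: column-major stacking yields the block-diagonal Jacobian above, whereas row-major stacking would instead give $x^T \otimes \mathrm{Id}_\numclasses$. Once the convention is fixed (and stated), the remainder is a mechanical application of the mixed-product rule, with no analytic content beyond the linearity of $\encoder{x;\w}$ in $\w$.
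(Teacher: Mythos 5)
Your proof is correct, and it takes a genuinely different (and arguably cleaner) route than the paper's. The paper proves the identity by a direct index chase: it writes out $(\mathrm{Id}_K \otimes x^T)_{c,\,dD+i} = x_i\,\indicator{c=d}$, expands $((\mathrm{Id}_K \otimes x^T)^T A\, (\mathrm{Id}_K \otimes x^T))_{cD+i,\,dD+j}$ as a double sum, and matches it entry-by-entry against $(A \otimes x\,x^T)_{cD+i,\,dD+j} = A_{cd}\,x_i\,x_j$. You instead invoke the mixed-product property $(A \otimes B)(C \otimes D) = (AC)\otimes(BD)$ twice, via the trick $H = H \otimes 1$, which reduces the whole computation to two lines; the index work is outsourced to a standard lemma. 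Your approach is more modular and less error-prone; the paper's is self-contained and makes the implicit flattening convention visible through its choice of index map $(c,i) \mapsto cD+i$. You also do something the paper's proof skips: you actually derive the Jacobian formula $\nabla_\w \hat f(x;\w) = \mathrm{Id}_K \otimes x^T$ from $\partial(\w^T x)_k / \partial w_{d,k'} = x_d\,\delta_{k,k'}$, whereas the paper only states the corresponding index identity. Your remark that the Kronecker form depends on the flattening convention (column-major giving $\mathrm{Id}_K \otimes x^T$ versus row-major giving $x^T \otimes \mathrm{Id}_K$) is accurate and is exactly the convention the paper's index map encodes, so the two arguments agree. No gaps.
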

This property is useful for computing the Fisher information of a GLM in practice \citep{ash2021gone}.

\begin{importantresult_noparbox}
\textbf{$\boldsymbol{\pof{y \given x, \wstar}}$ vs $\boldsymbol{\pof{y \given x}}$.} %
Having a GLM solves an important issue we will encounter in \S\ref{sec:iq_approximations}: approximating the EIG requires taking an expectation over $\pof{y \given x}$ and not $\pof{y \given x, \wstar}$. One can approximate $\pof{y \given x} \approx \pof{y \given x, \wstar}$, which can be justified in the limit, but this is probably not a good approximation in the cases interesting for active learning and active sampling. With a GLM, this is not a problem.

\textbf{Generalized Gauss-Newton Approximation.} %
In the case of an exponential family but not a GLM, the equality in \Cref{prop:glm_hessian} is often used as an approximation for the observed information---we simply use the respective Fisher information as an approximation of observed information (via \Cref{prop:special_case_exp_family_fim}):
\begin{align}
    \HofHessian{y \given x, \wstar} \approx \FisherInfo{Y \given x, \wstar} =  \nabla_\w \encoder{x ; \wstar}^T \, \nabla_\logits^2 A(w^T x) \, \nabla_\w \encoder{x ; \wstar}.
\end{align}
This is known as \emph{Generalized Gauss-Newton (GGN) approximation} \citep{kunstner2019limitations, alex2020improving}. This approximation has the advantage that it is always positive semidefinite unlike the true Hessian.

\newcommand{\embedding}{{z}}

\textbf{Last-Layer Approaches.} GLMs are often used in deep active learning \citep{ash2019deep,ash2021gone,kothawade2022prism,kothawade2021similar}. If we split the model into $\pof{y \given x, \w} = \pof{y \given \embedding = \w^T \, \realencoder{x}}$, where $\embedding = \realencoder{x}$ are the embeddings and treat the encoder $\realencoder{x}$ as fixed, we obtain a GLM based on the weights of the last layer, which uses the embeddings as input.
\end{importantresult_noparbox}

Armed with this knowledge, we can now derive approximations for the information quantities of interest using observed information and Fisher information and consider their properties. The GGN approximation and last-layer approaches feature heavily in the literature to make computing these approximations more tractable as they reduce computational requirements and memory usage.

\section{Approximating Information Quantities}
\label{sec:iq_approximations}

We now derive approximations and proxy objectives for information quantities. We base them on observed information and Fisher information introduced in the previous sections.
These approximations help us connect the information quantities to existing the literature in non-Bayesian data subset selection in \S\ref{sec:unification}. 

In particular, we derive approximations for EIG and EPIG as they show the qualitative differences between non-transductive and transductive objectives, and compare the approximations of the IG and EIG: importantly, there is no difference between the latter when we use a GLM or the GGN approximation. 
This covers two of the three dimensions in \Cref{table:taxonomy_info_quantities}.
We examine JEPIG and the other quantities in the appendix in \S\ref{appsec:approximate_iq}.
We find that the trace approximations of the EPIG and JEPIG objective matches, suggesting that using the trace approximations might be too loose an approximation to capture important qualities of EPIG \citep{kirsch2021test}. 
Additional derivations and details can also be found in \S\ref{appsec:approximate_iq}.
All this leads to \Cref{fig:fim_comparison}, which relates the different approximations to each other and shows that they follow simple patterns.

\subsection{Approximate Expected Information Gain}
The expected information gain is a popular acquisition function in Bayesian optimal experimental design \citep{lindley1956measure} and in active learning, where it is also known as BALD \citep{houlsby2011bayesian, gal2017deep}.

We can approximate the EIG $\MIof{\W; \Yacqs \given \xacqs}$ of acquisition candidates $\xacqs$ using Gaussian approximations:
\begin{align}
    \MIof{\W; \Yacqs \given \xacqs} &= \Hof{\W} - \Hof{\W \given \Yacqs, \xacqs} \label{eq:EIG_vs_Conditional_Entropy} \\
    &= \Hof{\W} - \E{\pof{\yacqs \given \xacqs}}{\Hof{\W \given \yacqs, \xacqs}} \\
    &\approx -\tfrac{1}{2}\log \det \HofHessian{\wstar} - \E{\pof{\yacqs \given \xacqs}}{- \tfrac{1}{2}\log \det \HofHessian{\w \given \yacqs, \xacqs}} \label{eq:eig_approx_constant_cancellation} \\
    &= \tfrac{1}{2}\E{\pof{\yacqs \given \xacqs}}{\log \det \left ( (\HofHessian{\yacqs \given \xacqs, \wstar} + \HofHessian{\wstar})\HofHessian{\wstar}^{-1} \right)}\\
    &= \tfrac{1}{2}\E{\pof{\yacqs \given \xacqs}}{\log \det \left ( \HofHessian{\yacqs \given \xacqs, \wstar} \, \HofHessian{\wstar}^{-1} + Id \right )}. \label{eq:eig_log_det}
\end{align}
using \Cref{prop:entropy_appox} twice, where the constant $C_k$ cancels out in \cref{eq:eig_approx_constant_cancellation} as we subtract two entropy terms.

\textbf{Generalized Linear Model.} %
When we have a GLM, we can use \Cref{prop:glm_hessian_fim} to obtain:
\begin{align}
    \MIof{\W; \Yacqs \given \xacqs} &\approx \ldots =  \tfrac{1}{2}\E{\pof{\yacqs \given \xacqs}}{\log \det \left ( \HofHessian{\yacqs \given \xacqs, \wstar} \, \HofHessian{\wstar}^{-1} + Id \right )} \\
    &= \tfrac{1}{2} {\log \det \left ( \FisherInfo{\Yacqs \given \xacqs, \wstar} \, \HofHessian{\wstar}^{-1} + Id \right )}.
\intertext{We can upper-bound the log determinant and obtain:}
    &\le \tfrac{1}{2} \tr \left ( \FisherInfo{\Yacqs \given \xacqs, \wstar} \, \HofHessian{\wstar}^{-1} \right ) \\
    &= \tfrac{1}{2} \sum_i \tr \left ( \FisherInfo{\Yacqs \given \xacq_i, \wstar} \, \HofHessian{\wstar}^{-1} \right ).
\end{align}
where we have used the following inequality (proof in \S\ref{appsec:eig_general_case}):
\begin{restatable}{lemma}{logdettrinequality}
    \label{lemma:log_det_tr_inequality}
    For symmetric, positive semidefinite matrices $A$, we have (with equality iff $A=0$):
    \begin{align}
        \log \det (A + Id) \le \tr(A).
    \end{align}
\end{restatable}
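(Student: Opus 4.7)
The plan is to reduce the matrix inequality to the familiar scalar inequality $\log(1+x) \le x$ for $x \ge 0$ by diagonalizing $A$. Since $A$ is symmetric and positive semidefinite, the spectral theorem gives an orthogonal $U$ and a diagonal $\Lambda = \diag(\lambda_1, \ldots, \lambda_k)$ with $\lambda_i \ge 0$ such that $A = U \Lambda U^T$. Then $A + Id = U(\Lambda + Id)U^T$, so the eigenvalues of $A + Id$ are exactly $\lambda_i + 1$.

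Taking log determinants and traces gives
\begin{align}
    \log \det(A + Id) &= \sum_{i=1}^k \log(1 + \lambda_i), \\
    \tr(A) &= \sum_{i=1}^k \lambda_i,
\end{align}
so it suffices to show $\sum_i \log(1+\lambda_i) \le \sum_i \lambda_i$ with equality iff all $\lambda_i = 0$.

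For this, I would invoke the standard scalar inequality $\log(1+x) \le x$ for $x > -1$, with equality iff $x = 0$. This follows, for instance, from concavity of $\log$ at $1$, or from the fact that $f(x) = x - \log(1+x)$ satisfies $f(0) = 0$ and $f'(x) = x/(1+x)$, which is strictly negative for $-1 < x < 0$ and strictly positive for $x > 0$. Applying this termwise with $x = \lambda_i \ge 0$ yields the inequality, and summing preserves the equality condition: the sum equals zero iff every $\lambda_i = 0$, i.e., iff $A = 0$.

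There is no real obstacle here; the only thing to be careful about is the equality condition, which requires noting that $\log(1+\lambda_i) = \lambda_i$ with $\lambda_i \ge 0$ forces $\lambda_i = 0$, and hence $A = U \cdot 0 \cdot U^T = 0$. The converse ($A=0$ implies equality) is immediate since both sides vanish.
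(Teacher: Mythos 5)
Your proof is correct and takes essentially the same route as the paper: diagonalize the symmetric positive semidefinite $A$, observe that $\log\det(A+Id)=\sum_i\log(1+\lambda_i)$ and $\tr(A)=\sum_i\lambda_i$, and apply the scalar inequality $\log(1+x)\le x$ termwise. Your treatment of the equality case is if anything slightly more careful than the paper's.
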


\textbf{General Case \& Exponential Family.} %
For the general case, we need to make a strong approximation: 
\begin{align}
    \pof{\yacqs \given \xacqs} \approx \pof{\yacqs \given \xacqs, \wstar},
\end{align}
which might hold for a mostly converged posterior but probably not in cases with little data.
This turns the approximation into an upper bound.
Alternatively, we could use the GGN approximation when we have an exponential family for the same result (but not an upper bound).
See \S\ref{appsec:eig_general_case} for the derivation.

\begin{importantresult}
\begin{proposition}[EIG]
    \label{prop:approximate_eig}
    The expected information gain can be approximately upper bounded via:
    \begin{align}
        \MIof{\W; \Yacqs \given \xacqs, \Dtrain} &
        \overset{\approx}{\le}
        \tfrac{1}{2} {\log \det \left ( \sum_i \FisherInfo{\Yacq_i \given \xacq_i, \wstar} \, \HofHessian{\wstar \given \Dtrain}^{-1} + Id \right )}  \label{eq:eig_log_det_approx} \\
        &\le \tfrac{1}{2} \sum_i \tr \left ( \FisherInfo{\Yacq_i \given \xacq_i, \wstar} \HofHessian{\wstar \given \Dtrain}^{-1} \right ). \label{eq:eig_trace_approx} 
    \intertext{Furthermore, we have the following proxy objective:}
        \argmax_{\xacqs} \MIof{\W; \Yacqs \given \xacqs, \Dtrain} &= \argmax_{\xacqs} -\Hof{\W \given \Yacqs, \xacqs, \Dtrain} \text{ , and} \\
        -\Hof{\W \given \Yacqs, \xacqs, \Dtrain} &
        \overset{\approx}{\le}
        \tfrac{1}{2} \log \det \left ( \sum_i \FisherInfo{\Yacq_i \given \xacq_i, \wstar} + \HofHessian{\wstar \given \Dtrain} \right ) - C_k. \label{eq:approx_conditional_entropy}
    \end{align}
\end{proposition}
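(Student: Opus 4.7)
The plan is to follow the chain of manipulations already sketched in Equations \eqref{eq:EIG_vs_Conditional_Entropy}--\eqref{eq:eig_log_det} and then stitch together the two regimes (GLM/GGN exact, general approximate) with Lemma~\ref{lemma:log_det_tr_inequality}. First I would write the EIG as an entropy difference $\Hof{\W \given \Dtrain} - \E{\pof{\yacqs \given \xacqs, \Dtrain}}{\Hof{\W \given \yacqs, \xacqs, \Dtrain}}$, then apply Proposition~\ref{prop:entropy_appox} to each term at the same $\wstar$. The two $C_k$ normalizing constants cancel, leaving only a difference of log determinants. Combining this with the additivity of observed information (Proposition~\ref{prop:observed_information_additive}), $\HofHessian{\wstar \given \yacqs, \xacqs, \Dtrain} = \HofHessian{\yacqs \given \xacqs, \wstar} + \HofHessian{\wstar \given \Dtrain}$, and then factoring $\HofHessian{\wstar \given \Dtrain}^{-1}$ inside the determinant yields the expression \eqref{eq:eig_log_det}.

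Next I would move the expectation over $\yacqs$ inside and convert the observed information to Fisher information, which is the only nontrivial step. In the GLM (or GGN) case this is exact by Proposition~\ref{prop:glm_hessian_fim}, since $\HofHessian{\yacqs \given \xacqs, \wstar}$ does not depend on $\yacqs$ and equals $\FisherInfo{\Yacqs \given \xacqs, \wstar}$; additivity of Fisher information then rewrites this as $\sum_i \FisherInfo{\Yacq_i \given \xacq_i, \wstar}$. In the general exponential-family case I would invoke the approximation $\pof{\yacqs \given \xacqs} \approx \pof{\yacqs \given \xacqs, \wstar}$ so that the expectation turns into $\E{\pof{\yacqs \given \xacqs, \wstar}}{\cdot}$, at which point Definition~\ref{def:fisher_info} identifies the expected observed information with the Fisher information. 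This establishes \eqref{eq:eig_log_det_approx}.

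The trace bound \eqref{eq:eig_trace_approx} then follows by applying Lemma~\ref{lemma:log_det_tr_inequality} to $A = \sum_i \FisherInfo{\Yacq_i \given \xacq_i, \wstar}\,\HofHessian{\wstar \given \Dtrain}^{-1}$, which is positive semidefinite as a product of a PSD and a positive-definite matrix (after an appropriate symmetrization), and using linearity of trace with additivity of Fisher information to split the sum over $i$. Finally, for the proxy-objective statement I would note that $\Hof{\W \given \Dtrain}$ does not depend on $\xacqs$, so maximizing the EIG is equivalent to maximizing $-\Hof{\W \given \Yacqs, \xacqs, \Dtrain}$; applying Proposition~\ref{prop:entropy_appox} once (rather than twice) to this single entropy, together with additivity of observed information and the same observed-to-Fisher conversion, gives \eqref{eq:approx_conditional_entropy}, with the explicit $-C_k$ now present because no cancellation occurs.

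The main obstacle is the general-case step: the notation $\overset{\approx}{\le}$ mixes a genuine inequality (from Lemma~\ref{lemma:log_det_tr_inequality} and PSD-monotonicity of $\log\det$) with the heuristic swap $\pof{\yacqs \given \xacqs} \approx \pof{\yacqs \given \xacqs, \wstar}$, whose error is not controlled here. I would therefore be explicit that the ``$\approx$'' part is exact under a GLM or GGN approximation and otherwise reflects the usual posterior-concentration heuristic, while the ``$\le$'' parts are rigorous consequences of the Gaussian approximation once that substitution has been made.
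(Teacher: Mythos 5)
Your proposal follows the paper's own derivation essentially step for step: the entropy-difference decomposition with \Cref{prop:entropy_appox} applied twice (cancelling $C_k$), the Bayes/additivity identity $\HofHessian{\wstar \given \yacqs, \xacqs, \Dtrain} = \HofHessian{\yacqs \given \xacqs, \wstar} + \HofHessian{\wstar \given \Dtrain}$, the split into the exact GLM/GGN case versus the general case via $\pof{\yacqs \given \xacqs} \approx \pof{\yacqs \given \xacqs, \wstar}$, \Cref{lemma:log_det_tr_inequality} for the trace bound, and the constant-entropy argument for the proxy objective. The one small imprecision is that moving the expectation inside the log determinant in the general case rests on Jensen's inequality via \emph{concavity} of $\log\det$ on the positive semidefinite cone (as the paper does in \S\ref{appsec:eig_general_case}), not on ``PSD-monotonicity''; otherwise the argument is the same.
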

\end{importantresult}
The second statement follows from \Cref{eq:EIG_vs_Conditional_Entropy}, since $\Hof{\W \given \Dtrain}$ is constant and provides a proxy objective when we are only interested in optimizing the EIG. In \S\ref{sec:unification}, we connect it to the expected gradient length approach in active learning and show that an ablation in \citet{ash2021gone} examines the wrong objective.

\textbf{Batch Acquisition Pathologies.} Importantly, this approximation of the EIG using the trace is additive, whereas the one using the log determinant is not. This means that the trace approximation ignores the dependencies between the samples and can only lead to naive top-k batch acquisition; see \citet{kirsch2019batchbald, kirsch2021stochastic} for details of the pathologies of top-k batch acquisition. 
\andreas{How well does the log determinant capture these dependencies?}

\andreas{can we verify this through experiments? is this a worthy/interesting research question?}

\subsection{Approximate Information Gain}

Following the same steps, we can also approximate the information gain, which is useful for active sampling: %
\begin{importantresult}
\begin{proposition}[IG]
    \label{prop:approximate_ig}
    The \emph{information gain} $\MIof{\W ; \yacqs \given \xacqs, \Dtrain} = \Hof{\W \given \Dtrain} - \Hof{\W \given \yacqs, \xacqs, \Dtrain}$ can be approximately upper bounded via:
    \begin{align}
        \MIof{\W ; \yacqs \given \xacqs, \Dtrain} &
        \approx
        \tfrac{1}{2} {\log \det \left ( \HofHessian{\yacqs \given \xacqs, \wstar} \, \HofHessian{\wstar \given \Dtrain}^{-1} + Id \right )} \\
        &\le \tfrac{1}{2} \sum_i \tr \left ( \HofHessian{\yacq_i \given \xacq_i, \wstar} \, \HofHessian{\wstar \given \Dtrain}^{-1} \right ).
\intertext{
    Furthermore, we have the following proxy objective:
}
        \argmax_{\xacqs} \MIof{\W ; \yacqs \given \xacqs, \Dtrain} &= \argmax_{\xacqs} -\Hof{\W \given \yacqs, \xacqs, \Dtrain} \text{, and} \\
        -\Hof{\W \given \yacqs, \xacqs, \Dtrain} &
        \approx
        \tfrac{1}{2} \log \det \left ( \HofHessian{\yacqs \given \xacqs, \wstar} + \HofHessian{\wstar \given \Dtrain} \right ) - C_k. 
    \end{align}
\end{proposition}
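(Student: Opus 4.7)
The proof proposal parallels the derivation of \Cref{prop:approximate_eig} but is strictly simpler because the labels $\yacqs$ are fixed rather than random, so no expectation over $\yacqs$ is needed and neither the GLM assumption nor the $\pof{\yacqs \given \xacqs} \approx \pof{\yacqs \given \xacqs, \wstar}$ approximation is invoked. The only approximation comes from the Gaussian (second-order) posterior approximation of \Cref{prop:entropy_appox}; the inequality is then a clean application of \Cref{lemma:log_det_tr_inequality}.

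\textbf{Step 1: Decompose and apply the Gaussian approximation.} Start with the identity $\MIof{\W ; \yacqs \given \xacqs, \Dtrain} = \Hof{\W \given \Dtrain} - \Hof{\W \given \yacqs, \xacqs, \Dtrain}$ and apply \Cref{prop:entropy_appox} to both entropies, giving
\begin{align}
    \MIof{\W ; \yacqs \given \xacqs, \Dtrain} \approx \tfrac{1}{2} \log \det \HofHessian{\wstar \given \yacqs, \xacqs, \Dtrain} - \tfrac{1}{2}\log \det \HofHessian{\wstar \given \Dtrain},
\end{align}
where the constants $C_k$ cancel because $\w$ has the same dimension in both terms.

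\textbf{Step 2: Use additivity of observed information and factor.} Invoke \Cref{prop:observed_information_additive} to write $\HofHessian{\wstar \given \yacqs, \xacqs, \Dtrain} = \HofHessian{\yacqs \given \xacqs, \wstar} + \HofHessian{\wstar \given \Dtrain}$. Then combine the two log-determinants using $\log \det(M) - \log \det(N) = \log \det(MN^{-1})$ (for invertible, positive-definite $N$), yielding
\begin{align}
    \MIof{\W ; \yacqs \given \xacqs, \Dtrain} \approx \tfrac{1}{2} \log \det \bigl( \HofHessian{\yacqs \given \xacqs, \wstar}\, \HofHessian{\wstar \given \Dtrain}^{-1} + Id \bigr).
\end{align}
Then apply \Cref{lemma:log_det_tr_inequality} (with $A = \HofHessian{\yacqs \given \xacqs, \wstar}\, \HofHessian{\wstar \given \Dtrain}^{-1}$, which is similar to a positive semidefinite matrix and so has nonnegative eigenvalues), followed by additivity of the observed information and linearity of the trace to split into the per-sample sum. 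This requires $\HofHessian{\wstar \given \Dtrain}$ to be positive definite, which is already assumed for the Gaussian approximation to be valid.

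\textbf{Step 3: Proxy objective.} Since $\Hof{\W \given \Dtrain}$ does not depend on $(\xacqs, \yacqs)$, the equality $\argmax_{\xacqs} \MIof{\W ; \yacqs \given \xacqs, \Dtrain} = \argmax_{\xacqs} - \Hof{\W \given \yacqs, \xacqs, \Dtrain}$ is immediate. Applying \Cref{prop:entropy_appox} directly to the conditional entropy and then using additivity of observed information once more gives the final expression. The main subtlety worth flagging is that, unlike in the EIG case, there is no further step converting observed information into Fisher information, so the bound is tight up to the quality of the Gaussian approximation itself rather than the additional $\pof{\yacqs \given \xacqs} \approx \pof{\yacqs \given \xacqs, \wstar}$ step; this is the real source of the qualitative difference between IG and EIG outside the GLM/GGN regime.
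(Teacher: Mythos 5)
Your proposal is correct and matches the paper's intended argument: the paper gives no separate proof for this proposition, stating only that it follows "the same steps" as the EIG derivation, and your write-up spells out exactly that route (entropy decomposition, Gaussian approximation via \Cref{prop:entropy_appox}, additivity of observed information, the $\log\det$-to-trace bound via \Cref{lemma:log_det_tr_inequality}), correctly noting that the fixed labels remove the need for the expectation and the GLM/GGN step. The one point you gloss over slightly is that \Cref{lemma:log_det_tr_inequality} needs $\HofHessian{\yacqs \given \xacqs, \wstar}$ to be positive semidefinite, which, unlike for the Fisher information, is not guaranteed for observed information in general --- but the paper makes the same implicit assumption.
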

\end{importantresult}

\textbf{Comparison to EIG.} %
Importantly, when we have a GLM or use the GGN approximation, this approximation of the IG is equal to the one of the EIG. This tells us that active learning on a GLM with the EIG approximation will work as well as if we had access to the labels. Equivalently, active sampling via IG with the GGN approximation will not work better than the respective active learning approach.

\subsection{Approximate (Joint) Expected Predictive Information Gain}

In transductive active learning, we have access to an (empirical) distribution $\pdata{\xeval}$, e.g., the pool set, and want to find the $\xacqs$ that maximize the \emph{expected predictive information gain (EPIG)} \citep{kirsch2021test}.
The approximations here will help us connect BAIT \citep{ash2021gone} to EPIG.
For simplicity, we consider the non-batch case here. The batch case can be handled analogously. The EPIG objective is defined as:
\begin{align}
    \argmax_\xacq \MIof{\Yeval; \Yacq \given \Xeval, \xacq} = \argmax_\xacq \simpleE{\pdata{\xeval}} \MIof{\Yeval; \Yacq \given \xeval, \xacq},
\end{align}
We expand the objective as follows:
\begin{align}
    \MIof{\Yeval; \Yacq \given \Xeval, \xacq} &= \MIof{\W; \Yeval  \given \Xeval} - \MIof{\W ; \Yeval \given \Xeval, \Yacq, \xacq}, \label{eq:epig_bald_decomposition}
\end{align}
where $\MIof{\W; \Yeval \given \Xeval}$ can be removed from the objective because it is independent of $\xacq$.
Thus, optimizing EPIG is equivalent to \emph{minimizing} $\MIof{\W ; \Yeval \given \Xeval, \Yacq, \xacq}$:
\begin{align}
    \argmax_\xacq \MIof{\Yeval; \Yacq \given \Xeval, \xacq} = 
    \argmin_\xacq \MIof{\W ; \Yeval \given \Xeval, \Yacq, \xacq}.
    \label{eq:transductive_active_learning}
\end{align}
Following \Cref{prop:approximate_eig}, this can be approximated by:
\begin{align}
    &\MIof{\W ; \Yeval \given \Xeval, \Yacq, \xacq} \notag \\
    &\quad \approx \tfrac{1}{2} \E{\pof{\yeval, \yacq \given \xeval, \xacq} \, \pdata{\xeval}} {\log \det
    \left (
    \HofHessian{\yeval \given \xeval, \wstar} \, ( \HofHessian{\yacq \given \xacq, \wstar} + \HofHessian{\wstar} )^{-1} + Id
    \right )}.
\end{align}

\textbf{Generalized Linear Model.} %
For a generalized linear model, we can drop the expectation and obtain:
\begin{align}
    &\MIof{\W ; \Yeval \given \Xeval, \Yacq, \xacq} \notag \\
    &\quad \approx 
    \tfrac{1}{2} 
    \E{\pdata{\xeval}} {\log \det
    \left (
    \FisherInfo{\Yeval \given \xeval, \wstar} \, ( \FisherInfo{\Yacq \given \xacq, \wstar} + \HofHessian{\wstar} )^{-1} + Id
    \right )} \label{eq:epig_trace_hook} \\
    &\quad \le \tfrac{1}{2} {\log \det
    \left (
    \E{\pdata{\xeval}} {\FisherInfo{\Yeval \given \xeval, \wstar}} \, ( \FisherInfo{\Yeval \given \xacq, \wstar} + \HofHessian{\wstar} )^{-1} + Id
    \right )} \label{eq:epig_deduction_jepig_hook} \\
    &\quad \le \tfrac{1}{2} {\tr
    \left (
    \E*{\pdata{\xeval}} {\FisherInfo{\Yeval \given \xeval, \wstar}} \, ( \FisherInfo{\Yacq \given \xacq, \wstar} + \HofHessian{\wstar} )^{-1}
    \right )},
\end{align} 
where we have used the concavity of the log determinant and \Cref{lemma:log_det_tr_inequality}.

\textbf{General Case \& Exponential Family.} %
To our knowledge, there is no rigorous way to obtain a similar result in the general case as the Fisher information for an acquisition candidate now lies within an inverted term. Of course, the GGN approximation can be applied when we have an exponential family, which leads to the GLM result above as an approximation. See \S\ref{appsec:epig_general_case} for more details.

\begin{importantresult}
\begin{proposition}[EPIG]
    \label{prop:epig_fisher_approximation}
    For a generalized linear model (or with the GGN approximation), we have:
    \begin{align}
        \argmax_\xacqs \MIof{\Yeval; \Yacqs \given \Xeval, \xacqs, \Dtrain} = 
        \argmin_\xacqs \MIof{\W ; \Yeval \given \Xeval, \Yacqs, \xacqs, \Dtrain},
    \end{align}
    with
    \begin{align}
        &\MIof{\W ; \Yeval \given \Xeval, \Yacqs, \xacqs, \Dtrain} \notag \\
        &\quad 
        \approx
        \E{\pdata{\xeval}} {\log \det
        \left (
        \FisherInfo{\Yeval \given \xeval, \wstar} \, ( \FisherInfo{\Yacqs \given \xacqs, \wstar} + \HofHessian{\wstar} )^{-1} + Id
        \right )} \\
        &\quad \le
        \tfrac{1}{2} {\log \det
        \left (
        \E{\pdata{\xeval}} {\FisherInfo{\Yeval \given \xeval, \wstar}} \, ( \FisherInfo{\Yacqs \given \xacqs, \wstar} + \HofHessian{\wstar \given \Dtrain} )^{-1} + Id
        \right )} \label{eq:epig_fisher_approximation_logdet} \\
        &\quad \le \tfrac{1}{2} {\tr
        \left (
        \E*{\pdata{\xeval}} {\FisherInfo{\Yeval \given \xeval, \wstar}} \, ( \FisherInfo{\Yacqs \given \xacqs, \wstar} + \HofHessian{\wstar \given \Dtrain} )^{-1}
        \right )}. \label{eq:epig_fisher_approximation_trace}
    \end{align}
\end{proposition}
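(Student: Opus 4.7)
The plan is to assemble the proposition by chaining three ingredients already set up earlier in the excerpt: the BALD decomposition \eqref{eq:epig_bald_decomposition} of EPIG, the log-determinant form \eqref{eq:eig_log_det_approx} of EIG from \Cref{prop:approximate_eig}, and the log-det/trace bound of \Cref{lemma:log_det_tr_inequality}. The batch case is handled by the standard substitution of $\Yacqs, \xacqs$ for $\Yacq, \xacq$, treating the acquisition batch as a joint random variable.

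First I would prove the argmax/argmin equivalence: starting from \eqref{eq:epig_bald_decomposition} (stated for single samples but extended to batches), observe that $\MIof{\W; \Yeval \given \Xeval, \Dtrain}$ is independent of $\xacqs$ and thus can be dropped from the optimization. Next, I would recognize the remaining term $\MIof{\W ; \Yeval \given \Xeval, \Yacqs, \xacqs, \Dtrain}$ as an EIG for $\Yeval$ under the posterior that additionally conditions on $(\xacqs, \Yacqs)$. By \Cref{prop:observed_information_additive}, conditioning on $(\xacqs, \Yacqs)$ shifts the Hessian by $\HofHessian{\yacqs \given \xacqs, \wstar}$, and in the GLM setting \Cref{prop:glm_hessian_fim} replaces this by the Fisher information $\FisherInfo{\Yacqs \given \xacqs, \wstar}$, eliminating the label dependence inside the inverted term. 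Applying the log-determinant bound \eqref{eq:eig_log_det_approx} of \Cref{prop:approximate_eig} to a single evaluation sample $\xeval$ and taking an expectation over $\pdata{\xeval}$ yields the first displayed approximation in the proposition.

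To reach \eqref{eq:epig_fisher_approximation_logdet}, I would pull the expectation inside the log determinant via Jensen's inequality, which requires concavity of $X \mapsto \log \det(X A + Id)$ in the symmetric positive semidefinite matrix $X$, for positive definite $A$. For \eqref{eq:epig_fisher_approximation_trace} I would then apply \Cref{lemma:log_det_tr_inequality}, after noting that the product $\E*{\pdata{\xeval}}{\FisherInfo{\Yeval \given \xeval, \wstar}} \, (\FisherInfo{\Yacqs \given \xacqs, \wstar} + \HofHessian{\wstar \given \Dtrain})^{-1}$ is similar to a symmetric positive semidefinite matrix via conjugation by the symmetric positive definite square root of the inverted factor, so the lemma applies (and the trace is invariant under similarity).

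The main obstacle is the concavity step underpinning Jensen's inequality: $\log \det(X A + Id)$ is concave on the symmetric positive semidefinite cone when $A$ is positive definite. I would verify this by rewriting it as $\log \det(A^{1/2} X A^{1/2} + Id)$, which is linear in $X$ composed with the classically concave map $M \mapsto \log \det(M + Id)$ on the positive definite cone. A secondary care point is that \Cref{prop:glm_hessian_fim} gives exact equality only under the GLM assumption; for the broader exponential-family case the transition from observed to Fisher information must be invoked explicitly as the GGN approximation, so the entire chain consists of \emph{approximations} on top of the initial Gaussian posterior approximation, with only the final two steps being genuine inequalities conditional on that approximation.
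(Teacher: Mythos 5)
Your proposal is correct and follows essentially the same route as the paper: the BALD-style decomposition \eqref{eq:epig_bald_decomposition}, dropping the $\xacqs$-independent term, applying the EIG approximation of \Cref{prop:approximate_eig} with the GLM/GGN replacement of observed by Fisher information, then Jensen via concavity of the log determinant and finally \Cref{lemma:log_det_tr_inequality}. Your extra care about conjugating by the symmetric square root of the inverted factor so that the trace lemma applies to the (non-symmetric) product is a detail the paper glosses over, and is a welcome addition rather than a divergence.
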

\end{importantresult}
\textbf{Batch Acquisition Pathologies.} Unlike for the EIG, the trace approximation of EPIG is not additive in $\xacqs$, and we cannot conclude that it suffers from batch acquisition pathologies like the trace approximation of the EIG.

\textbf{Approximations for JEPIG, PIG and JPIG.} %
We can follow the same derivation for JEPIG: %
\begin{align}
    &\MIof{\W ; \Yevals \given \xevals, \Yacq, \xacq} \\
    &\quad \approx \tfrac{1}{2} {\log \det
    \left (
        \E{\pof{\yevals, \yacq \given \xevals, \xacq}}{\HofHessian{\yevals \given \xevals, \wstar}} \, ( \HofHessian{\yacq \given \xacq, \wstar} + \HofHessian{\wstar} )^{-1} + Id
    \right )}. \notag
\end{align}
Then, applying the steps after \cref{eq:epig_deduction_jepig_hook}, we can devise similar approximations. PIG and JPIG follow the same pattern.
Details can be found in \S\ref{appsubsec:pig} and \S\ref{appsubsec:jepig_jpig}.
But how do all these approximations relate to each other?

\subsection{Comparison of the Different Information Quantity Approximations}
\begin{figure}[t!]
    
    \begin{subfigure}{\linewidth}
        \begin{mainresult}
        \centering
        \includegraphics[width=\linewidth]{./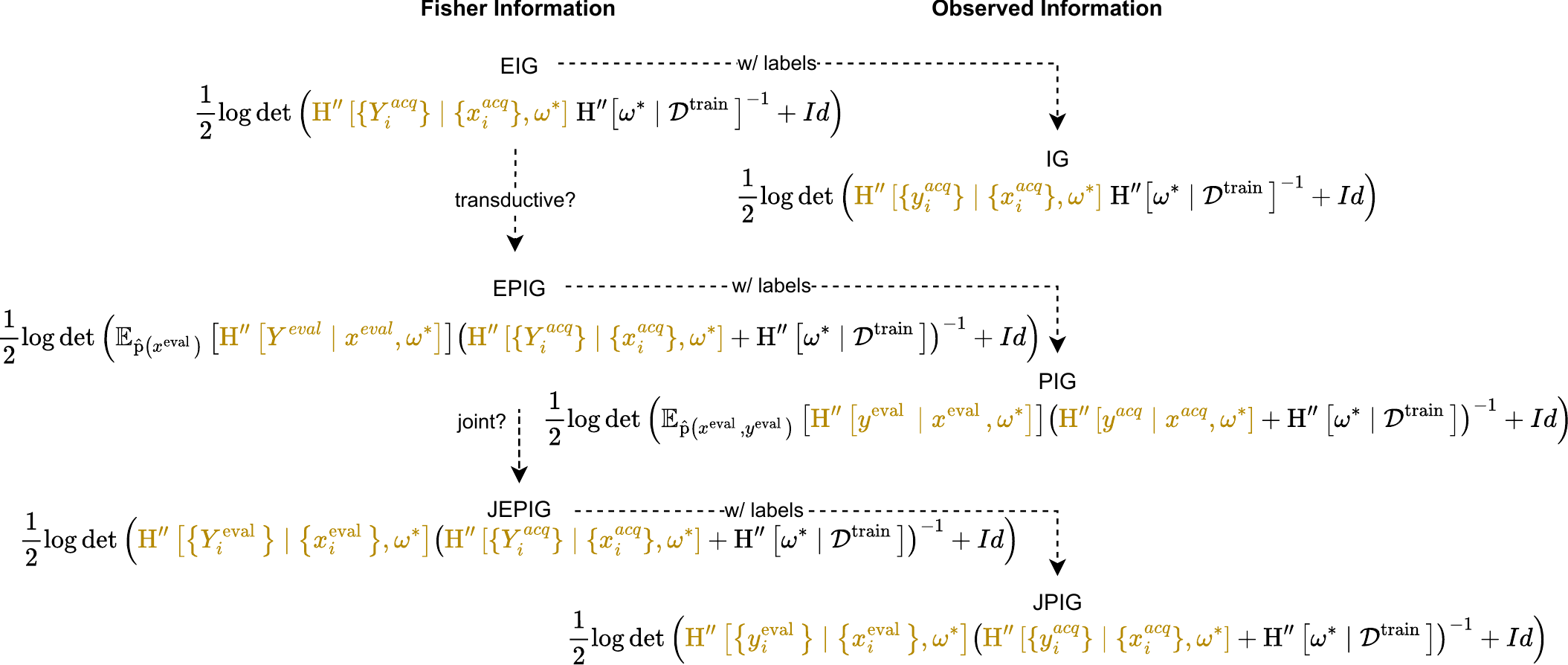}  
        \subcaption{$\log \det$ Proxy Objectives}     
        \end{mainresult}
    \end{subfigure}
    \begin{subfigure}{\linewidth}
        \begin{mainresult}
        \centering
        \includegraphics[width=\linewidth]{./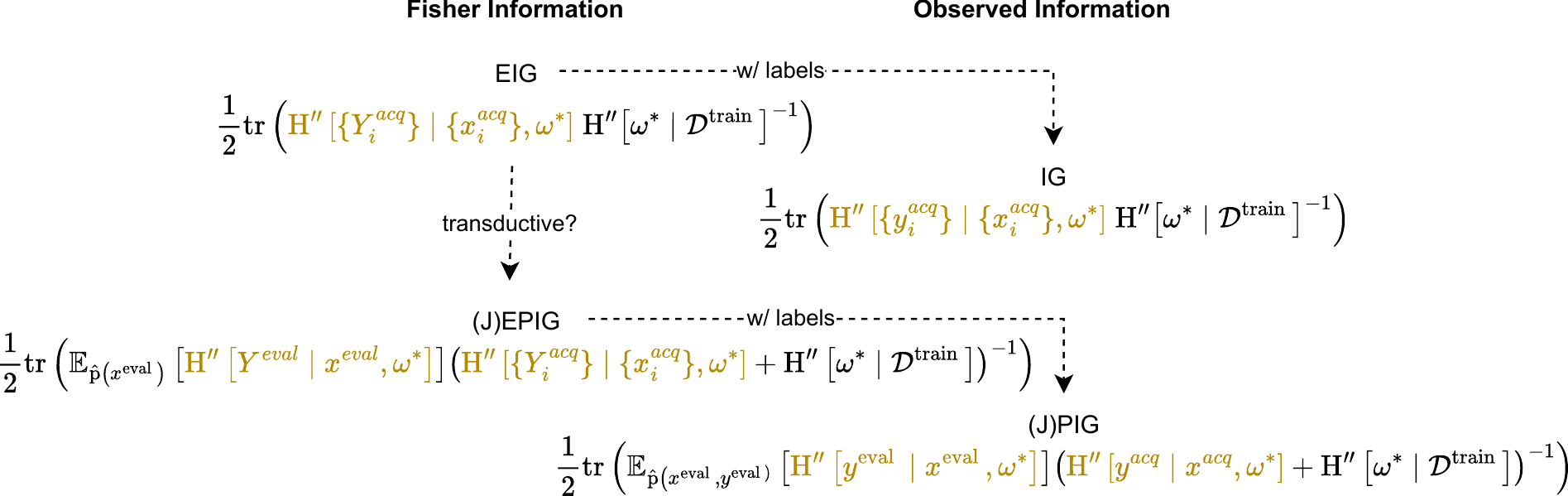}       
        \subcaption{Matrix Trace Proxy Objectives}
        \end{mainresult}
    \end{subfigure}

    \caption{\emph{Comparison of the Approximations/Proxy Objectives.}
    The difference between active learning and active sampling objectives is in using the Fisher information, which is label independent, or the observed information, which uses label information. JEPIG and EPIG have equivalent proxy objectives when using the matrix trace; see \S\ref{appsubsec:jepig_jpig}.
    The notation makes it obvious that in the GLM case, or when the GGN approximation is used, active learning and active sampling approximations match as $\HofHessian{\y \given \x, \wstar} =\; {(\text{resp.} \approx)} \HofHessian{\Y \given \x, \wstar}.$
    }
    \label{fig:fim_comparison}
\end{figure}

\Cref{fig:fim_comparison} compares the different information quantity approximations for both the log-determinant and trace approximations.
We empirically compare the approximations with prediction-space methods in \S\ref{appsec:empirical_comparison}.
Importantly, the trace approximations of (E)PIG match those of J(E)PIG up to a constant factor (unlike the log-determinant approximations); see \S\ref{appsubsec:jepig_jpig} for details.

\citet{kirsch2021test} argue that JEPIG converges to BALD in the data limit of the evaluation set---when there are no outliers in the pool set--- while EPIG does not.
The trace approximation is too strong to preserve this difference. Does this difference matter in practice? We leave this for future work.

Crucially, for GLMs or when using the GGN approximation, the respective active learning and active sampling objectives (EIG and IG, etc.) are equivalent as Fisher information and observed information are the same. In contrast, in the general case, the approximations for EPIG and JEPIG do not have a principled derivation. 

\section{Similarity Matrices and One-Sample Approximations of the Fisher Information}
\label{sec:similarity_matrices}

\newcommand{\similarityMatrix}[2]{{S_{#1}[#2]}}
\newcommand{\HofJacobianData}[1]{{\hat{\opEntropy}'[#1]}}
\newcommand{\HofJacobianDataShort}{\hat{\opEntropy}'}

Many data subset selection methods \citep{iyer2021submodular,kothawade2022prism,kothawade2021similar, ash2019deep} use similarity matrices of the loss Jacobians $\HofJacobian{\hat{y} \given x}$, where $\hat{y}$ is usually a hypothesized pseudo-label: often the $\argmax$ prediction of the model for $x$.
Here, we connect such similarity matrices to the Fisher information and the approximations of information quantities from \S\ref{sec:iq_approximations}. The proofs are given in \S\ref{appsec:similarity_matrices}. Together with \S\ref{sec:iq_approximations}, this section provides a unified framework for understanding the approximations of information quantities using Fisher information and lays the foundation for the next section, which will connect the cited works in \S\ref{sec:introduction} to the approximations of information quantities.

\textbf{Connection to Fisher Information.} %
Crucially, given $\Dany = \{ (y_i,x_i)_i \}$, if we let
\begin{align}
    \HofJacobianData{\Dany \given \wstar} \defeq \begin{pmatrix}
    \vdots \\ \HofJacobian{y_i \given x_i, \wstar} \\ \vdots
    \end{pmatrix}
\end{align}
be a ``data matrix'' of the Jacobians,
then $\HofJacobianData{\Dany \given \wstar} \HofJacobianData{\Dany \given \wstar}^T$ gives the similarity matrix $\similarityMatrix{}{\Dany \given \wstar}$ using the Euclidean inner product:
\begin{align}
    \similarityMatrix{}{\Dany \given \w}_{ij} &\defeq \langle \HofJacobian{\y_i \given \x_i, \wstar}, \HofJacobian{\y_j \given \x_j, \wstar} \rangle = \HofJacobianData{\Dany \given \wstar} \HofJacobianData{\Dany \given \wstar}^T.
\end{align}
Sampling $\ys \sim \pof{\ys \given \xs, \wstar}$, the ``flipped'' product $\HofJacobianData{\Dany \given \wstar}^T \HofJacobianData{\Dany \given \wstar}$ yields a \emph{one-sample estimate} of the Fisher information $\FisherInfo{\Ys \given \xs, \wstar}$:
\begin{align}
    \FisherInfo{\Ys \given \xs, \wstar} &= \sum_i \FisherInfo{Y_i \given x_i, \wstar} = \simpleE{\pof{\ys\given \xs,\wstar}}{\sum_i \HofJacobian{y_i \given x_i, \wstar}^T \HofJacobian{y_i \given x_i, \wstar}} \\
    &=\simpleE{\pof{\ys\given \xs,\wstar}}{ \HofJacobianData{\Dany \given \wstar}^T \HofJacobianData{\Dany \given \wstar} }. 
\end{align}
\textbf{Hard Pseudo-Labels.} Importantly, using the $\argmax$ class for $y_i$, we only obtain a biased estimate \citep[\S{B}]{kunstner2019limitations}.

\textbf{Connection to the Expected Information Gain.} %
When we define an inner product $\langle \cdot, \cdot \rangle_{\HofHessian{\wstar \given \Dtrain}}$ using the Hessian, we can connect the similarity matrix, which uses this inner product:
\begin{align}
    \similarityMatrix{\HofHessian{\wstar \given \Dtrain}}{ \Dany \given \wstar} &\defeq \HofJacobianData{\Dany \given \wstar}\HofHessian{\wstar \given \Dtrain}^{-1} \HofJacobianData{\Dany \given \wstar}^T,
\end{align}
to our information gain approximations.

Specifically, we apply the matrix-determinant lemma $\det (A B + M) = \det M \det (Id + B M^{-1} A)$ to obtain:
\begin{importantresult}
\begin{restatable}{proposition}{propsimilaritybasedIG}
    \label{prop:similarity_based_IG}
    Given $\Dtrain$, $\xacqs$ and (sampled) $\yacqs$, we have for the EIG:
    \begin{align}
        \MIof{\W; \Yacqs \given \xacqs, \Dtrain} &\overset{\approx}{\le}
        \tfrac{1}{2} {\log \det \left ( \similarityMatrix{\HofHessian{\wstar \given \Dtrain}}{ \Dacq \given \wstar}  + Id \right )} \label{eq:similarity_based_IG} \\
        &\le \tfrac{1}{2} {\tr \similarityMatrix{\HofHessian{\wstar \given \Dtrain}}{ \Dacq \given \wstar}}
    \end{align}
\end{restatable}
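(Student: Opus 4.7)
The plan is to chain together three ingredients that are all already in place in the paper: the log-determinant approximation of the EIG from \Cref{prop:approximate_eig}, the one-sample estimate of the Fisher information discussed just above the statement, and Sylvester's determinant identity $\det(Id + AB) = \det(Id + BA)$ (which is the matrix-determinant lemma in the form $\det(AB + M) = \det M \det(Id + BM^{-1}A)$ stated immediately before the proposition).

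First I would start from the approximate upper bound \eqref{eq:eig_log_det_approx}, which gives
\begin{align*}
\MIof{\W; \Yacqs \given \xacqs, \Dtrain}
\overset{\approx}{\le}
\tfrac{1}{2}\log\det\!\left(\sum_i \FisherInfo{\Yacq_i \given \xacq_i, \wstar}\,\HofHessian{\wstar \given \Dtrain}^{-1} + Id\right).
\end{align*}
Then I would substitute the one-sample Fisher estimate at the sampled labels $\yacqs \sim \pof{\yacqs \given \xacqs, \wstar}$, namely
\begin{align*}
\sum_i \FisherInfo{\Yacq_i \given \xacq_i, \wstar}
\approx \HofJacobianData{\Dacq \given \wstar}^T \HofJacobianData{\Dacq \given \wstar},
\end{align*}
as derived earlier in \S\ref{sec:similarity_matrices}. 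This turns the argument of the log-determinant into $\HofJacobianData{\Dacq \given \wstar}^T \HofJacobianData{\Dacq \given \wstar}\,\HofHessian{\wstar \given \Dtrain}^{-1} + Id$.

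The key algebraic step is then to apply Sylvester's determinant identity with $A = \HofJacobianData{\Dacq \given \wstar}^T$ and $B = \HofJacobianData{\Dacq \given \wstar}\,\HofHessian{\wstar \given \Dtrain}^{-1}$, which ``flips'' the product and yields
\begin{align*}
\det\!\left(Id + \HofJacobianData{\Dacq \given \wstar}^T \HofJacobianData{\Dacq \given \wstar}\,\HofHessian{\wstar \given \Dtrain}^{-1}\right)
= \det\!\left(Id + \HofJacobianData{\Dacq \given \wstar}\,\HofHessian{\wstar \given \Dtrain}^{-1} \HofJacobianData{\Dacq \given \wstar}^T\right),
\end{align*}
which, by the definition of $\similarityMatrix{\HofHessian{\wstar \given \Dtrain}}{\Dacq \given \wstar}$ given just above the statement, is exactly $\det(Id + \similarityMatrix{\HofHessian{\wstar \given \Dtrain}}{\Dacq \given \wstar})$. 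Taking $\tfrac{1}{2}\log$ gives \eqref{eq:similarity_based_IG}. Finally, since $\HofHessian{\wstar \given \Dtrain}$ is assumed positive definite (as in \Cref{prop:param_approximation}), the similarity matrix $\similarityMatrix{\HofHessian{\wstar \given \Dtrain}}{\Dacq \given \wstar}$ is symmetric positive semidefinite, so \Cref{lemma:log_det_tr_inequality} applies and gives the trace upper bound.

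The main obstacle, to the extent there is one, is really bookkeeping rather than mathematics: keeping track of where each $\HofJacobianData{\Dacq \given \wstar}$ is transposed so that Sylvester's identity applies in the right direction, and being transparent that the ``$\overset{\approx}{\le}$'' carries over two distinct approximations, namely the Gaussian posterior approximation from \Cref{prop:entropy_appox} (inherited through \Cref{prop:approximate_eig}) plus the one-sample Fisher approximation. I would comment briefly that if the sampled $\yacqs$ is replaced by a hard pseudo-label (e.g.\ the $\argmax$) then the Fisher approximation becomes biased in the sense noted after the one-sample estimate, so the ``$\overset{\approx}{\le}$'' should be read as a proxy rather than a genuine bound in that variant.
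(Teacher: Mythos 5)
Your proof is correct and follows essentially the same route as the paper's: start from the log-determinant EIG approximation of \Cref{prop:approximate_eig}, substitute the one-sample estimate $\FisherInfo{\Yacqs \given \xacqs, \wstar} \approx \HofJacobianData{\Dacq \given \wstar}^T \HofJacobianData{\Dacq \given \wstar}$, and flip the product via the matrix-determinant/Sylvester identity to land on $\det(\similarityMatrix{\HofHessian{\wstar \given \Dtrain}}{\Dacq \given \wstar} + Id)$, with \Cref{lemma:log_det_tr_inequality} giving the trace bound. The only cosmetic difference is that the paper first rewrites the argument as $(\FisherInfo{\Yacqs \given \xacqs, \wstar} + \HofHessian{\wstar \given \Dtrain})\,\HofHessian{\wstar \given \Dtrain}^{-1}$ before applying the lemma, whereas you apply the flip directly; these are the same step.
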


\begin{restatable}{proposition}{similaruninformativeposterior}
    \label{prop:similar_uninformative_posterior}
    Assuming an uninformative posterior $\HofHessian{\wstar \given \Dtrain} = \lambda Id$ for $\lambda \to 0$, and given $\Dtrain$, $\xacqs$, and (sampled) $\yacqs$, we have for the EIG (before taking $\lambda \to 0$):
    \begin{align}
        \MIof{\W; \Yacqs \given \xacqs, \Dtrain} &\overset{\approx}{\le} \tfrac{1}{2} \log \det \left (  \similarityMatrix{}{ \Dacq \given \wstar}  + \lambda Id \right ) - \tfrac{\lvert \Dacq \rvert}{2} \log \lambda.
    \end{align}
    As the second term is independent of $\xacqs$, we can use the following proxy objective in the limit:
    \begin{align}
        \frac{1}{2} \log \det \left (  \similarityMatrix{}{ \Dacq \given \wstar} \right ).
    \end{align}
\end{restatable}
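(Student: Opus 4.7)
The plan is to start from \Cref{prop:similarity_based_IG}, which already gives the bound
\begin{align*}
    \MIof{\W; \Yacqs \given \xacqs, \Dtrain} \overset{\approx}{\le} \tfrac{1}{2} \log \det \left ( \similarityMatrix{\HofHessian{\wstar \given \Dtrain}}{ \Dacq \given \wstar}  + Id \right ),
\end{align*}
and simply substitute the uninformative-posterior assumption $\HofHessian{\wstar \given \Dtrain} = \lambda Id$. With $\HofHessian{\wstar \given \Dtrain}^{-1} = \tfrac{1}{\lambda} Id$, the definition of the weighted similarity matrix collapses to
\begin{align*}
    \similarityMatrix{\lambda Id}{\Dacq \given \wstar} = \HofJacobianData{\Dacq \given \wstar} \tfrac{1}{\lambda} Id \, \HofJacobianData{\Dacq \given \wstar}^T = \tfrac{1}{\lambda} \similarityMatrix{}{\Dacq \given \wstar}.
\end{align*}

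Next, I would factor out $\tfrac{1}{\lambda}$ from inside the determinant. Using $\det(c M) = c^{n} \det M$ for an $n \times n$ matrix $M$ with $n = \lvert \Dacq \rvert$, the bound becomes
\begin{align*}
    \tfrac{1}{2}\log \det\left(\tfrac{1}{\lambda} \similarityMatrix{}{\Dacq \given \wstar} + Id\right)
    &= \tfrac{1}{2} \log \det\left(\tfrac{1}{\lambda}(\similarityMatrix{}{\Dacq \given \wstar} + \lambda Id)\right) \\
    &= \tfrac{1}{2} \log \det(\similarityMatrix{}{\Dacq \given \wstar} + \lambda Id) - \tfrac{\lvert \Dacq \rvert}{2} \log \lambda,
\end{align*}
which is exactly the displayed inequality. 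This is the only algebraic step in the proof proper.

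For the proxy objective, I would observe that the additive term $-\tfrac{\lvert \Dacq \rvert}{2} \log \lambda$ is a constant with respect to $\xacqs$, and so can be dropped when maximizing over $\xacqs$. Taking $\lambda \to 0$ on the $\xacqs$-dependent part yields $\tfrac{1}{2} \log \det \similarityMatrix{}{\Dacq \given \wstar}$ whenever that similarity matrix is positive definite (e.g., $\lvert \Dacq \rvert$ no larger than the effective rank of $\HofJacobianData{\Dacq \given \wstar}$).

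The main obstacle is less an obstacle than a matter of interpretation: the unnormalised bound diverges as $\lambda \to 0$, so the statement has to be parsed as saying that, for any fixed $\lambda > 0$, the $\xacqs$-dependent portion of the bound is $\tfrac{1}{2} \log \det(\similarityMatrix{}{\Dacq \given \wstar} + \lambda Id)$, and that this portion has the well-defined pointwise limit $\tfrac{1}{2} \log \det \similarityMatrix{}{\Dacq \given \wstar}$. I would therefore include a brief remark on the positive-definiteness condition to make clear when the proxy objective is meaningful, and note that this limit corresponds to discarding an improper prior contribution that acts uniformly on all candidate batches of equal size.
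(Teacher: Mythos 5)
Your proof is correct and follows exactly the route the paper intends: the paper states this proposition as an immediate consequence of \Cref{prop:similarity_based_IG} obtained by substituting $\HofHessian{\wstar \given \Dtrain} = \lambda Id$, factoring $\tfrac{1}{\lambda}$ out of the determinant, and dropping the $-\tfrac{\lvert\Dacq\rvert}{2}\log\lambda$ term as a constant in $\xacqs$, which is precisely your argument. Your added caveat that the limiting proxy objective is only finite when $\similarityMatrix{}{\Dacq \given \wstar}$ is nonsingular (i.e., $\lvert\Dacq\rvert$ does not exceed the rank of the Jacobian data matrix) is a correct and worthwhile observation that the paper leaves implicit.
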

\end{importantresult}

\textbf{Connection to Other Approximate Information Quantities.} Interestingly, we can use the above to obtain approximations of the predictive information gains (EPIG and JEPIG) because the terms that would tend towards $-\infty$ cancel out; see \S\ref{appsec:similarity_matrices} for details. For EPIG, we have:
\begin{importantresult}
\begin{proposition}
    Given $\Deval, \Dtrain$, $\xacqs$ and (sampled) $\yacqs$, we have for the EPIG:
    \begin{align}
        &\MIof{\Yeval; \Yacqs \given \Xeval, \xacqs, \Dtrain} \notag \\
        &\approx \tfrac{1}{2} {\log \det \left ( \similarityMatrix{\HofHessian{\wstar \given \Dtrain}}{ \Deval \given \wstar}  + Id \right )}
        -
        \tfrac{1}{2} {\log \det \left ( \similarityMatrix{\HofHessian{\wstar \given \Dtrain}}{ \Dacq, \Deval \given \wstar} + Id \right )} \\
        &\quad \quad 
        + \tfrac{1}{2} {\log \det \left ( \similarityMatrix{\HofHessian{\wstar \given \Dtrain}}{ \Dacq \given \wstar}  + Id \right )}.
    \intertext{For an uninformative prior, we have:}
        &\approx \tfrac{1}{2} \log \det \left (  \similarityMatrix{}{ \Deval \given \wstar} \right )
        -
        \tfrac{1}{2} \log \det \left (  \similarityMatrix{}{ \Dacq, \Deval \given \wstar} \right )
        +
        \tfrac{1}{2} \log \det \left (  \similarityMatrix{}{ \Dacq \given \wstar} \right ).
    \end{align}
    We can drop the terms that only depend on $\Deval$ when we are interested in proxy objectives for optimization.
\end{proposition}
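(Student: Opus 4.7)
The plan is to derive the three-term formula by decomposing EPIG into two EIG terms and then rewriting both via the similarity-matrix approximation of Proposition~\ref{prop:similarity_based_IG}. Specifically, applying the chain rule to $\MIof{\W, \Yeval; \Yacqs \given \Xeval, \xacqs, \Dtrain}$ in both orderings and using that $\Yeval \perp \Yacqs \given \W$ yields the identity
\begin{align*}
    \MIof{\Yeval; \Yacqs \given \Xeval, \xacqs, \Dtrain} = \MIof{\W; \Yacqs \given \xacqs, \Dtrain} - \MIof{\W; \Yacqs \given \xacqs, \Dtrain, \Deval}.
\end{align*}
Both terms on the right are EIG-like objectives, to which I can apply \Cref{prop:similarity_based_IG}: the first directly gives $\tfrac{1}{2}\log\det(\similarityMatrix{\HofHessian{\wstar\given\Dtrain}}{\Dacq\given\wstar} + Id)$, and the second is an EIG conditioned on $\Deval$, so the relevant Hessian becomes $\HofHessian{\wstar \given \Dtrain, \Deval} = \HofHessian{\wstar \given \Dtrain} + \HofHessian{\Deval \given \wstar}$, which under the one-sample/GLM approximation equals $\HofHessian{\wstar \given \Dtrain} + \HofJacobianData{\Deval\given\wstar}^T \HofJacobianData{\Deval\given\wstar}$.

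The core manipulation is to rewrite every similarity-matrix determinant over the common Hessian $H \defeq \HofHessian{\wstar \given \Dtrain}$. Writing $A = \HofJacobianData{\Deval\given\wstar}$ and $B = \HofJacobianData{\Dacq\given\wstar}$, Sylvester's determinant identity gives
\begin{align*}
    \log\det(BH^{-1}B^T + Id) &= \log\det(H + B^T B) - \log\det H, \\
    \log\det(AH^{-1}A^T + Id) &= \log\det(H + A^T A) - \log\det H, \\
    \log\det([A;B] H^{-1} [A;B]^T + Id) &= \log\det(H + A^T A + B^T B) - \log\det H, \\
    \log\det(B(H+A^T A)^{-1} B^T + Id) &= \log\det(H + A^T A + B^T B) - \log\det(H + A^T A).
\end{align*}
Substituting the first and last of these into the EPIG decomposition and cancelling terms leaves $\log\det(H+B^T B) - \log\det H + \log\det(H+A^T A) - \log\det(H + A^T A + B^T B)$, which by the first three identities equals the claimed combination $\tfrac{1}{2}[\log\det(\similarityMatrix{H}{\Deval}+Id) - \log\det(\similarityMatrix{H}{\Dacq,\Deval}+Id) + \log\det(\similarityMatrix{H}{\Dacq}+Id)]$.

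For the uninformative-prior case, I substitute $H = \lambda Id$ and use $\log\det(MM^T/\lambda + Id) = \log\det(MM^T + \lambda Id) - n_M \log\lambda$ where $n_M$ is the number of rows of $M$. The $\log\lambda$ coefficients from the three similarity terms are $|\Deval|$, $-(|\Deval|+|\Dacq|)$, and $|\Dacq|$, which sum to zero, so the divergent parts cancel exactly and taking $\lambda \to 0$ recovers the stated proxy in terms of the bare similarity matrices $\similarityMatrix{}{\cdot}$. The only nontrivial modelling step is justifying Proposition~\ref{prop:similarity_based_IG} after conditioning on $\Deval$, since that requires the additivity $\HofHessian{\wstar \given \Dtrain, \Deval} = \HofHessian{\wstar \given \Dtrain} + \HofHessian{\Deval \given \wstar}$ together with the Fisher/one-sample identification of $\HofHessian{\Deval\given\wstar}$ with $\HofJacobianData{\Deval\given\wstar}^T \HofJacobianData{\Deval\given\wstar}$; everything after that is linear algebra via the matrix determinant lemma.
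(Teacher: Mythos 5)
Your proposal is correct, but it reaches the formula by a different route than the paper. The paper uses the three-term interaction-information decomposition
\begin{align*}
\MIof{\Yevals; \Yacqs \given \xevals, \xacqs, \Dtrain} = \MIof{\W; \Yevals \given \xevals, \Dtrain} - \MIof{\W; \Yevals, \Yacqs \given \xevals, \xacqs, \Dtrain} + \MIof{\W; \Yacqs \given \xacqs, \Dtrain},
\end{align*}
so each summand is literally an EIG over one of the three datasets ($\Deval$, $\Dacq \cup \Deval$, $\Dacq$) and Proposition~\ref{prop:similarity_based_IG} is applied three times verbatim; the three $\log\det$ terms of the claim then appear one-for-one with no further algebra. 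You instead use the two-term symmetric decomposition $\MIof{\W;\Yacqs \given \xacqs,\Dtrain} - \MIof{\W;\Yacqs \given \xacqs,\Dtrain,\Deval}$ (both decompositions follow from $\MIof{\Yeval;\Yacqs\given\W}=0$, which you correctly invoke), treat the second term as an EIG against the $\Deval$-updated posterior with Hessian $\HofHessian{\wstar\given\Dtrain} + \HofHessian{\Deval\given\wstar}$, and then reassemble via Sylvester's identity. This costs you an extra step the paper does not need --- justifying the similarity-matrix EIG approximation with the augmented Hessian, which you correctly note rests on the additivity of observed information plus the same one-sample/GLM identification of $\HofHessian{\Deval\given\wstar}$ with the Gram matrix of the $\Deval$ Jacobians --- but it buys an explicit view of the cancellation structure in weight space and makes transparent why the result is a ``mutual information of similarity matrices.'' Your $\lambda\to 0$ bookkeeping also matches the paper's (the coefficients of $\log\lambda$ sum to zero because $\lvert\Deval\rvert + \lvert\Dacq\rvert = \lvert\Dacq\cup\Deval\rvert$; the appendix has a typo writing $\Dtrain$ for $\Deval$ there). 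One presentational caveat: like the paper's own appendix derivation, your argument really establishes the joint (JEPIG-style) version in which $\Deval$ is treated as a single joint random variable, which is indeed what the stated right-hand side computes despite the proposition being labelled EPIG.
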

\end{importantresult}

These results help connect the objectives of PRISM and SIMILAR to the EIG and EPIG in the next section.

\section{Connection to Other Acquisition Functions in the Literature}
\label{sec:unification}

Now, we can connect approaches in non-Bayesian literature to information quantities.
Additional proofs are given in \S\ref{appsec:unification}.

\subsection{BAIT in ``Gone Fishing'' \citep{ash2021gone}, ActiveSetSelect in ``Convergence Rates of Active Learning for Maximum
Likelihood Estimation'' \citep{chaudhuri2015convergence}, and EPIG \citep{kirsch2021test}}

\citet{ash2021gone} introduce the \emph{BAIT} objective for deep active learning:
\begin{align}
    \argmin_\xacqs \tr \left (\left (\FisherInfo{\Yacqs \given \xacqs, \wstar} + \FisherInfo{\Ytrain \given \xtrain, \wstar} + \lambda I \right )^{-1} \, \FisherInfo{\Yevals \given \xevals, \wstar} \right ), \tag{BAIT}
\end{align}
where $\lambda$ is a hyperparameter\footnote{This is the BAIT objective as computed in Algorithm 1 in \citet{ash2021gone} and in the published implementation \url{https://github.com/JordanAsh/badge/blob/master/query_strategies/bait_sampling.py}.}.

BAIT is based on a similar objective for GLMs from \citet{chaudhuri2015convergence}.
While \citet{ash2021gone} apply this objective to DNNs, they only use the last layer to approximate the Fisher information. The last layer, with appropriate activation functions and losses, constitutes a GLM as seen in \S\ref{sec:fisher_information}.

Following \Cref{prop:epig_fisher_approximation}, we immediately see that \citet{ash2021gone} perform transductive active learning (using the pool set as an evaluation set) and approximate a proxy objective for (J)EPIG:
\begin{importantresult}
\begin{proposition}
    Both \citet{ash2021gone} and \citet{chaudhuri2015convergence} perform transductive active learning, approximating (J)EPIG \citep{kirsch2021test, mackay1992information} using a last-layer approach (or GLM):
    \begin{align}
        &\argmax_\xacq \MIof{\Yeval; \Yacqs \given \Xeval, \xacqs} \notag \\
        &\quad \approx \argmin_\xacq
        \tr
         ( \FisherInfo{\Yevals \given \xevals, \wstar} \,  (\FisherInfo{\Yacqs \given \xacqs, \wstar} + \HofHessian{\wstar \given \Dtrain} )^{-1}  ),
    \end{align}
    with $\HofHessian{\wstar \given \Dtrain} = \HofHessian{\Dtrain \given \wstar} + \HofHessian{\wstar}$ and $\HofHessian{\wstar} = \lambda \, Id$.
\end{proposition}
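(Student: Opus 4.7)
The plan is essentially to instantiate \Cref{prop:epig_fisher_approximation} (specifically the trace approximation in \cref{eq:epig_fisher_approximation_trace}) in the setting described by \citet{ash2021gone} and read off that the resulting proxy objective coincides with the BAIT formula, up to the inessential sign flip coming from $\argmax$ vs.\ $\argmin$.

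First, I would recall from \Cref{eq:transductive_active_learning} that maximizing EPIG is equivalent to minimizing $\MIof{\W ; \Yeval \given \Xeval, \Yacqs, \xacqs, \Dtrain}$. Applying \cref{eq:epig_fisher_approximation_trace} of \Cref{prop:epig_fisher_approximation} (which holds for a GLM or under the GGN approximation) then yields
\begin{align*}
    \argmax_{\xacqs} \MIof{\Yeval; \Yacqs \given \Xeval, \xacqs, \Dtrain}
    \approx
    \argmin_{\xacqs} \tr\!\left( \simpleE{\pdata{\xeval}}{\FisherInfo{\Yeval \given \xeval, \wstar}} \, (\FisherInfo{\Yacqs \given \xacqs, \wstar} + \HofHessian{\wstar \given \Dtrain})^{-1} \right).
\end{align*}
Since \citet{ash2021gone} use a last-layer approach, the GLM assumption needed for \Cref{prop:epig_fisher_approximation} is satisfied (as discussed at the end of \S\ref{sec:fisher_information}), so no additional GGN step is required.

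Next, I would match the remaining terms to the BAIT objective. Writing the evaluation distribution as the empirical distribution over $\xevals$, the expectation $\simpleE{\pdata{\xeval}}\FisherInfo{\Yeval \given \xeval, \wstar}$ collapses to $\FisherInfo{\Yevals \given \xevals, \wstar}$ up to the constant $1/\lvert \xevals \rvert$, which does not affect the $\argmin$. For the posterior Hessian term, I would expand $\HofHessian{\wstar \given \Dtrain} = \HofHessian{\Dtrain \given \wstar} + \HofHessian{\wstar}$ using the additive decomposition from \S\ref{sec:laplace_approximation}, then apply the GGN/GLM identity $\HofHessian{\Dtrain \given \wstar} = \FisherInfo{\Ytrain \given \xtrain, \wstar}$ (\Cref{prop:glm_hessian_fim}) and the stated Gaussian prior $\HofHessian{\wstar} = \lambda \, Id$. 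Substituting these into the trace above reproduces the BAIT expression exactly.

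There is no real obstacle here; the proposition is essentially a dictionary translation between the notation of \citet{ash2021gone,chaudhuri2015convergence} and the information-theoretic notation built up in \S\ref{sec:iq_approximations}. The only subtlety worth flagging is that the trace approximation is a loose upper bound on EPIG (via \Cref{lemma:log_det_tr_inequality} and concavity of $\log \det$), so the equality in the proposition should be understood as equality of proxy objectives rather than of the underlying information quantities; I would make this explicit in the statement of the reduction.
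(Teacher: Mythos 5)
Your proposal is correct and follows essentially the same route as the paper's own (very terse) proof: instantiate the trace approximation of \Cref{prop:epig_fisher_approximation} for the GLM/last-layer case, expand $\HofHessian{\wstar \given \Dtrain} = \HofHessian{\Dtrain \given \wstar} + \HofHessian{\wstar}$ with $\HofHessian{\wstar} = \lambda\,Id$, and compare term by term with the BAIT objective. The extra details you supply (the harmless $1/\lvert\xevals\rvert$ factor and the caveat that the identification is at the level of proxy objectives rather than the information quantities themselves) are sensible elaborations of what the paper leaves implicit.
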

\end{importantresult}
\begin{proof}
    This follows immediately for GLM (last-layer approaches) when we expand $\HofHessian{\wstar \given \Dtrain}$.
    \citet{chaudhuri2015convergence} in particular uses an uninformative prior, that is $\lambda = 0$.
    Comparing the resulting objectives yields the statement.
\end{proof}

Thus, \citet{ash2021gone} and \citet{kirsch2021test} employ the same underlying acquisition function, albeit using very different approaches: \citet{ash2021gone} use a last-layer Fisher information matrix, whereas \citet{kirsch2021test} use approximate BNNs and sample joint predictions. 

\textbf{Research Questions.} %
EPIG is not submodular, and the greedy selection of an acquisition batch does not come with any optimality guarantees. While \citet{kirsch2021test} ignore this, \citet{ash2021gone} propose a heuristic that empirically performs better: They greedily select additional acquisition candidates in forward pass (twice the intended batch acquisition size) and then greedily remove the least informative samples from the batch in a backward pass. 
\emph{Would this heuristic also prove beneficial for all the other information quantities that are not submodular?}

While \citet{ash2021gone} state that they only use the last-layer approach for performance reasons, following \S\ref{sec:fisher_information}, it does not seem that this approach translates beyond a last-layer approach for DNNs in a principled fashion (see \S\ref{appsec:epig_general_case}). 
\emph{Is there a principled approach for the general case that goes beyond last-layer active learning when using Fisher information without the GGN approximation?}

\citet{ash2021gone} ablate trace and determinantal approaches, similar to comparing \cref{eq:epig_fisher_approximation_trace} and \cref{eq:epig_fisher_approximation_logdet}, yet they do not include $+Id$ in the log determinant expression, which leads them to examine the EIG in their ablation\footnote{\citet{ash2021gone} accidentally writes $\argmax$ instead of $\argmin$ in \S5.1 in their paper, but c.f. Algorithm 1 with the trace objective. Algorithm 2 \& 3 in \S{B} in the appendix use the correct final objective.}:
\begin{align}
    &\argmin_\xacq \log \det ( \FisherInfo{\Yevals \given \xevals, \wstar} \,  (\sum_i \FisherInfo{\Yacq_i \given \xacq_i, \wstar} + \HofHessian{\wstar \given \Dtrain} )^{-1} ) \notag \\
    &\quad = \argmin_\xacq \log \det \FisherInfo{\Yevals \given \xevals, \wstar}  - \log \det (\sum_i \FisherInfo{\Yacq_i \given \xacq_i, \wstar} + \HofHessian{\wstar \given \Dtrain} ) \\
    &\quad = \argmax_\xacq \log \det (\sum_i \FisherInfo{\Yacq_i \given \xacq_i, \wstar} + \HofHessian{\wstar \given \Dtrain} ) + C,
\end{align}
and the last term matches the EIG in \cref{eq:epig_fisher_approximation_logdet} up constant terms independent of $\xacq$. 
Thus, the ablations in \citet{ash2021gone} only compares EIG and EPIG.
\emph{Could comparing \cref{eq:epig_fisher_approximation_trace} and \cref{eq:epig_fisher_approximation_logdet} provide more insightful results about the trade-offs between trace and determinant approximations?}

\subsection{BADGE \citep{ash2019deep} and BatchBALD \citep{kirsch2019batchbald}}

BADGE performs batch acquisition using a similarity matrix:
Using the concepts of \S\ref{sec:similarity_matrices}, BADGE uses hard pseudo-labels 
together with last-layer gradient embeddings for the similarity matrix $\similarityMatrix{}{\Dacq \given \wstar}$. 
The authors sample from a k-DPP \citep{kulesza2011k} based on this similarity matrix to select a diverse batch of samples for acquisition. 
However, to further speed up acquisitions, BADGE uses k-MEANS++ \citep{arthur2006k, ostrovsky2013effectiveness} instead of a k-DPP: it uses the Jacobians $\HofJacobian{\y \given \x, \wstar}$ of the data matrix directly and samples a diverse batch based on the Euclidean distance between these Jacobians.
However, sampling from k-DPPs does not pick the most informative batch overall, and the ablations in \citet{ash2019deep} show that k-MEANS++ outperforms k-DPP.
Finally, the paper only motivates using gradient embeddings with hard pseudo-labels through intuitions: the gradient length captures information about the model's uncertainty, and diverse update directions capture information about the model's diversity \citep{ash2019deep}. The paper makes no explicit connection to information theory.

Following \Cref{prop:similarity_based_IG}, since BADGE can be seen as using a last-layer approach for the similarity matrix $\similarityMatrix{}{\Dacq \given \wstar}$ with hard pseudo-labels, BADGE approximates $\MIof{\W;\Yacqs \given \xacqs, \Dtrain}$ with an uninformative posterior distribution:
\begin{importantresult}
    \begin{proposition}
        BADGE maximizes an approximation of the EIG with an uninformative prior.
    \end{proposition}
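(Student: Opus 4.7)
The plan is to unpack BADGE into its constituent pieces and then invoke the machinery already developed in \S\ref{sec:similarity_matrices}, specifically \Cref{prop:similarity_based_IG} and \Cref{prop:similar_uninformative_posterior}. First I would identify that BADGE operates in a last-layer setting, which by \S\ref{sec:fisher_information} means we are in a GLM. Next, I would note that BADGE constructs its gradient embeddings by taking $\HofJacobian{\hat{\y} \given \x, \wstar}$ at the hard pseudo-label $\hat{\y} = \argmax_\y \pof{\y \given \x, \wstar}$, and stacking these row-wise yields precisely the data matrix $\HofJacobianDataShort[\Dacq \given \wstar]$ of \S\ref{sec:similarity_matrices}. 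The $k$-DPP selection probability for a batch $\xacqs$ is proportional to $\det \similarityMatrix{}{\Dacq \given \wstar}$, and therefore the mode of the $k$-DPP coincides with $\argmax_{\xacqs} \log \det \similarityMatrix{}{\Dacq \given \wstar}$.

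Second, I would observe that BADGE places no explicit weight prior beyond whatever implicit regularization is baked into training, so the proxy objective of \Cref{prop:similar_uninformative_posterior} applies: under the uninformative-prior limit $\HofHessian{\wstar \given \Dtrain} = \lambda \, Id$ with $\lambda \to 0$, the EIG approximation reduces, up to an $\xacqs$-independent constant, to
\begin{align}
\MIof{\W; \Yacqs \given \xacqs, \Dtrain} \overset{\approx}{\le} \tfrac{1}{2} \log \det \similarityMatrix{}{\Dacq \given \wstar}.
\end{align}
Composing these two observations gives that BADGE, in its $k$-DPP form, maximizes exactly the upper bound on EIG from \Cref{prop:similar_uninformative_posterior}, with the one caveat already flagged in \S\ref{sec:similarity_matrices}: because BADGE uses hard pseudo-labels rather than sampling $\yacq \sim \pof{\yacq \given \xacq, \wstar}$, the resulting similarity matrix is a \emph{biased} one-sample estimator of $\FisherInfo{\Yacqs \given \xacqs, \wstar}$, so what is being maximized is a biased approximation of the EIG upper bound rather than the bound itself.

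Third, I would address the $k$-MEANS++ variant actually used for efficiency. Here I do not expect a rigorous equivalence; $k$-MEANS++ on the rows of $\HofJacobianDataShort[\Dacq \given \wstar]$ selects centers with probability proportional to squared Euclidean distance to the current set, which promotes diversity in the Jacobian geometry and weighs points by the squared gradient norm $\|\HofJacobian{\hat{\y} \given \x, \wstar}\|^2 = \similarityMatrix{}{\{(\x,\hat{\y})\} \given \wstar}$. This heuristically favors batches whose corresponding Gram matrix has large volume, i.e. large determinant, matching the same $\log \det$ proxy. I would state this as a heuristic relationship (following the framing in \citet{ash2019deep}) rather than a theorem.

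The main obstacle is the $k$-MEANS++ step: connecting it to the log-determinant objective is intrinsically informal, and any attempt at a tight correspondence would have to restrict attention to the $k$-DPP variant. Secondary concerns are the bias introduced by hard pseudo-labels and the tacit uninformative-prior assumption, both of which should be explicitly called out rather than swept aside so the proposition's scope is transparent. Beyond that, the argument is essentially a bookkeeping exercise that assembles \Cref{prop:similarity_based_IG}, \Cref{prop:similar_uninformative_posterior}, and the standard $k$-DPP sampling identity.
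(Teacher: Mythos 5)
Your proposal is correct and follows essentially the same route as the paper: identify BADGE's last-layer gradient embeddings with hard pseudo-labels as the data matrix $\hat{\opEntropy}'[\Dacq \given \wstar]$, so that the $k$-DPP's determinantal objective on the resulting similarity matrix is exactly the uninformative-prior EIG proxy of the similarity-matrix propositions. The paper's own justification is a one-line appeal to those propositions, and it likewise treats the hard-pseudo-label bias and the $k$-MEANS++ substitution as caveats rather than obstacles, so your more explicit bookkeeping adds detail but no new ideas.
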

\end{importantresult}

\textbf{Comparison to BatchBALD.} Similarly, BatchBALD \citep{kirsch2019batchbald} approximates the EIG in the batch acquisition case but by using prediction-space samples. Moreover, BatchBALD uses a greedy approach to select batch candidates instead of sampling via a k-DPP or k-MEANS++.

As the EIG is submodular, determining the acquisition batch is a submodular optimization problem and, therefore, can be solved by greedy selection with $1-\tfrac{1}{e}$ optimality \citep{nemhauser1978analysis}.

\textbf{Research Questions.} Hard pseudo-labels lead to biased estimates. \emph{Would one-sample estimates perform better?} \emph{And could greedy batch selection work better than sampling via a k-DPP?} This would be closer to the batch acquisition strategy followed by BatchBALD.

\subsection{SIMILAR \citep{kothawade2021similar} and PRISM \citep{kothawade2022prism}}
Based on \citet{iyer2021submodular}, \citet{kothawade2021similar} and \citet{kothawade2022prism} investigate \emph{submodular active learning} for DNNs: they take an \emph{information function} $f$, which is a non-negative, montone/non-decreasing, submodular function (and then is also subadditive as a consequence):
\begin{align}
    f(A) \geq 0 \tag{non-negative}, \\
    f(A) \leq f(B) \text{ for } A \subseteq B, \tag{monotone} \\
    f(A \cup B) \leq f(A) + f(B) - f(A \cap B), \tag{submodular} \\
    f(A \cup B) \leq f(A) + f(B) \tag{subadditive}
\end{align} 
for all $A, B \subseteq \Dpool$ and define a ``\emph{submodular conditional gain}`` and an ''\emph{submodular (conditional) mutual information}'' as
\begin{align}
    H_f(A \given B) &\defeq f(A \cup B) - f(B) \\
    I_f(A; B) &\defeq f(A \cup B) - f(A) - f(B).
\end{align}
For $f(\xs) = \Hof{\Ys \given \xs}$, this simply yields the regular information quantities.
Hence,
\citet{kothawade2021similar} and \citet{kothawade2022prism} examine other information functions and submodular quantities in the context of active learning: amongst them set covers, graph cuts, facility location, and log determinants (LogDet) of similarity matrices.
Like BADGE \citep{ash2019deep}, the similarity matrix uses hard pseudo-labels.
Like BatchBALD \citep{kirsch2019batchbald}, they use a greedy approach for acquisition \citep{nemhauser1978analysis}.

Using our results, we immediately see that the LogDet objective, which we can write as $\log \det \similarityMatrix{}{\Dacq \given \wstar}$, exactly matches the EIG approximation in \S\ref{prop:similar_uninformative_posterior}; furthermore, in \S\ref{appsec:unify_similar}, we show that the LogDetMI objective matches an approximation of JEPIG (and similarly, derive the LogDetCMI objective as well):
\begin{importantresult}
\begin{proposition}
The LogDet objective
\begin{math}
    \log \det \similarityMatrix{}{\Dacq \given \wstar}
\end{math}
is an approximation of the EIG and the LogDetMI objective
\begin{align}
    \log \det \similarityMatrix{}{\Dacq \given \wstar} - \log \det (
        \similarityMatrix{}{\Dacq \given \wstar} 
        -
        \similarityMatrix{}{\Dacq ; \Deval \given \wstar} 
        \similarityMatrix{}{\Deval \given \wstar} ^{-1}
        \similarityMatrix{}{\Deval ; \Dacq \given \wstar} 
    ) %
\end{align}
is an approximation of a proxy objective for EPIG, where we use $\similarityMatrix{}{\Dany_1 ; \Dany_2 \given \wstar}$ to denote the (non-symmetric) similarity matrix between $\Dany_1$ and $\Dany_2$.
\end{proposition}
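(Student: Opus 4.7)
The plan is to obtain both claims as straightforward applications of results already proven earlier: the LogDet part is essentially a specialisation of \Cref{prop:similar_uninformative_posterior}, while the LogDetMI part follows by a single Schur-complement determinant identity applied to the similarity-matrix EPIG approximation at the end of \S\ref{sec:similarity_matrices}.

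For the LogDet objective, first note that SIMILAR and PRISM, like BADGE in the preceding subsection, build $\similarityMatrix{}{\Dacq \given \wstar}$ from hard-pseudo-label last-layer gradient embeddings, so it is exactly the same object that appears in the similarity-matrix EIG bound of \S\ref{sec:similarity_matrices}. Specialising \Cref{prop:similar_uninformative_posterior} to the uninformative-prior regime $\HofHessian{\wstar \given \Dtrain} = \lambda Id$ and taking $\lambda \to 0$ immediately yields the proxy objective $\tfrac{1}{2}\log\det \similarityMatrix{}{\Dacq \given \wstar}$, which coincides with LogDet up to the $\xacqs$-independent factor of $\tfrac{1}{2}$.

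For the LogDetMI objective, the plan is to start from the similarity-matrix form of EPIG under an uninformative prior,
\begin{align*}
    \tfrac{1}{2}\log\det \similarityMatrix{}{\Deval \given \wstar} - \tfrac{1}{2}\log\det \similarityMatrix{}{\Dacq, \Deval \given \wstar} + \tfrac{1}{2}\log\det \similarityMatrix{}{\Dacq \given \wstar},
\end{align*}
and expand the middle term using the natural $2 \times 2$ block structure of $\similarityMatrix{}{\Dacq, \Deval \given \wstar}$ with diagonal blocks $\similarityMatrix{}{\Dacq \given \wstar}, \similarityMatrix{}{\Deval \given \wstar}$ and off-diagonal block $\similarityMatrix{}{\Dacq ; \Deval \given \wstar}$. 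The Schur-complement determinant identity $\det \begin{pmatrix} A & B \\ B^T & C \end{pmatrix} = \det(C)\det(A - B C^{-1} B^T)$ then yields
\begin{align*}
    \log\det \similarityMatrix{}{\Dacq, \Deval \given \wstar} &= \log\det \similarityMatrix{}{\Deval \given \wstar} \\
    &\quad + \log\det\left(\similarityMatrix{}{\Dacq \given \wstar} - \similarityMatrix{}{\Dacq ; \Deval \given \wstar} \similarityMatrix{}{\Deval \given \wstar}^{-1} \similarityMatrix{}{\Deval ; \Dacq \given \wstar}\right),
\end{align*}
so the $\log\det \similarityMatrix{}{\Deval \given \wstar}$ terms cancel in the EPIG expression, leaving exactly $\tfrac{1}{2}$ times the LogDetMI objective (the remaining $\log\det \similarityMatrix{}{\Deval \given \wstar}$ term is $\xacqs$-independent and may be dropped when forming the proxy objective). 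A symmetric argument with an additional conditioning block reproduces LogDetCMI.

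The main obstacle is that $\similarityMatrix{}{\Deval \given \wstar}$ is only guaranteed Gram-matrix positive semidefinite, so the Schur inversion strictly requires a $+\lambda Id$ regularisation; carrying the identity through at $\lambda > 0$ and then letting $\lambda \to 0$, exactly as in \Cref{prop:similar_uninformative_posterior}, resolves this cleanly. Two minor subtleties remain: (i) SIMILAR/PRISM's hard pseudo-labels yield only a biased one-sample estimate of the Fisher information (as flagged in \S\ref{sec:similarity_matrices} and in the BADGE discussion), which affects approximation quality but not the form of the objective; and (ii) the similarity-matrix EPIG proposition treats $\Deval$ as a joint rather than as a sample for the EPIG outer expectation, but since EPIG and JEPIG collapse onto the same similarity-matrix expression (cf.\ \S\ref{appsubsec:jepig_jpig}), the identification of LogDetMI as an approximate proxy objective for EPIG is unaffected at this level.
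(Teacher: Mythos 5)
Your proposal is correct and follows essentially the same route as the paper: the LogDet claim is read off from \Cref{prop:similar_uninformative_posterior} in the uninformative-prior limit, and LogDetMI is obtained by applying the Schur-complement determinant identity to $\log \det \similarityMatrix{}{\Dacq, \Deval \given \wstar}$ in the similarity-matrix form of (J)EPIG and dropping the $\xacqs$-independent $\log \det \similarityMatrix{}{\Deval \given \wstar}$ term, exactly as in \S\ref{appsec:unify_similar}. Your additional remarks on the $\lambda$-regularisation needed for the Schur inversion and on the EPIG/JEPIG conflation are sensible refinements but do not change the argument.
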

\end{importantresult}

Notably, the experimental results for the LogDet-based quantities are reported as among the best in \citet{kothawade2021similar} and \citet{kothawade2022prism}.
As such, since the LogDet quantities approximate Shannon's information quantities (which are not explicitly examined in those works), the promising experimental results compared to other submodular information functions support the hypothesis that approximating Shannon's information quantities works well in active learning and active sampling.

\textbf{Research Questions.} Similar to BADGE, the scores are biased by using hard pseudo-labels. \emph{Could one-sample estimates perform better?} Furthermore, as LogDetMI is not submodular, \emph{could the approach from BAIT of expanding and shrinking the acquisition batch in a forward and backward pass improve performance here as well?}

\subsection{Expected Gradient Length}

The \emph{Expected Gradient Length (EGL)} \citep{settles2007multiple, settles2009active} is an acquisition function in active learning and is usually defined for non-Bayesian models.
Originally, it was an expectation over the gradient norm. In more recent literature \citep{huang2016active}, it is introduced using the squared gradient norm:
\begin{align}
    \simpleE{\pof{\yacq \given \xacq, \wstar}}{\left \Vert \HofJacobian{\yacq \given \xacq, \wstar} \right \Vert^2}. \tag{EGL}
\end{align}

Using a diagonal approximation of Fisher information, we show in \S\ref{appsec:unify_egl}:
\begin{importantresult}
\begin{restatable}{proposition}{propegleigconnection}
    The EIG for a candidate sample $\xacq$ approximately lower-bounds the EGL:
    \begin{align}
        2 \MIof{\W; \Yacq \given \xacq} &\overset{\approx}{\le} \simpleE{\pof{\yacq \given \xacq, \wstar}}{\left \Vert \HofJacobian{\yacq \given \xacq, \wstar} \right \Vert^2} + \text{const.}
    \end{align}
\end{restatable}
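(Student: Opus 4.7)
The plan is to chain together three ingredients already established earlier in the excerpt: the trace bound on the EIG, the outer-product characterization of Fisher information, and a diagonal (scalar) approximation of the posterior precision.

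First, I would invoke \Cref{prop:approximate_eig} in the single-candidate case, which gives
\begin{align}
2\,\MIof{\W; \Yacq \given \xacq} \overset{\approx}{\le} \tr\!\left(\FisherInfo{\Yacq \given \xacq, \wstar}\,\HofHessian{\wstar \given \Dtrain}^{-1}\right).
\end{align}
Next, I would apply the diagonal approximation $\HofHessian{\wstar \given \Dtrain}^{-1} \approx c\,Id$ (equivalently, an isotropic Gaussian prior/posterior, which is the standard ``uninformative'' simplification used when EGL is motivated in non-Bayesian active learning). Pulling the scalar out of the trace turns the right-hand side into $c\cdot \tr\bigl(\FisherInfo{\Yacq \given \xacq, \wstar}\bigr)$.

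Then I would apply \Cref{prop:fisher_information_equivalences} to rewrite Fisher information as an expectation of Jacobian outer products and commute the trace with the expectation, using $\tr(v^T v)=\|v\|^2$:
\begin{align}
\tr\bigl(\FisherInfo{\Yacq \given \xacq, \wstar}\bigr) = \simpleE{\pof{\yacq \given \xacq, \wstar}}{\tr\bigl(\HofJacobian{\yacq \given \xacq, \wstar}^T\,\HofJacobian{\yacq \given \xacq, \wstar}\bigr)} = \simpleE{\pof{\yacq \given \xacq, \wstar}}{\left\Vert\HofJacobian{\yacq \given \xacq, \wstar}\right\Vert^2},
\end{align}
which is exactly the EGL. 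Taking logarithms (or absorbing the proportionality $c$ as usual in active-learning argmax comparisons, where affine reparametrisations are immaterial) produces the ``$+\text{const.}$'' on the right of the stated inequality.

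The main obstacle is making the ``$+\text{const.}$'' precise: the diagonal approximation of $\HofHessian{\wstar \given \Dtrain}^{-1}$ naturally yields a \emph{multiplicative} rather than an additive constant, so the statement should be read either in a ranking-preserving sense (monotone affine transformations preserve $\argmax$) or after a logarithmic rescaling. A subtler point worth flagging is the approximation gap between the weighted squared norm $\|\HofJacobian{\yacq \given \xacq, \wstar}\|^2_{\HofHessian{\wstar \given \Dtrain}^{-1}}$ and the plain Euclidean squared norm used by EGL: EGL effectively ignores the geometry encoded by $\HofHessian{\wstar \given \Dtrain}^{-1}$, which is precisely what the diagonal/isotropic approximation assumes away. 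This would be the cleanest place to pinpoint where EGL parts ways with a principled Bayesian active-learning objective.
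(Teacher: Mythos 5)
Your derivation is sound as a heuristic but takes a genuinely different route from the paper, and -- as you yourself flag -- it does not quite land on the stated form: starting from $\tr\bigl(\FisherInfo{\Yacq \given \xacq, \wstar}\,\HofHessian{\wstar \given \Dtrain}^{-1}\bigr)$ and isotropizing the precision yields a \emph{multiplicative} constant $c\cdot\mathrm{EGL}$, whereas the proposition asserts an \emph{additive} one, and ``taking logarithms'' of both sides does not repair this. The paper obtains the additive constant honestly and without any isotropy assumption. It starts not from the trace bound but from the conditional-entropy proxy of \cref{eq:approx_conditional_entropy}, i.e.\ $\MIof{\W;\Yacq\given\xacq}\overset{\approx}{\le}\tfrac12\log\det\bigl(\FisherInfo{\Yacq\given\xacq,\wstar}+\HofHessian{\wstar\given\Dtrain}\bigr)+\text{const}$; it then upper-bounds the determinant of this positive semidefinite matrix by the product of its diagonal entries, giving $\tfrac12\sum_k\log\bigl(\bigl(\FisherInfo{\Yacq\given\xacq,\wstar}\bigr)_{kk}+\bigl(\HofHessian{\wstar\given\Dtrain}\bigr)_{kk}\bigr)$; and it linearizes with $\log x\le x-1$. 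After that linearization the posterior-precision diagonal enters purely additively and is independent of $\xacq$, so it is absorbed into the constant, leaving $\tfrac12\sum_k\bigl(\FisherInfo{\Yacq\given\xacq,\wstar}\bigr)_{kk}=\tfrac12\,\simpleE{\pof{\yacq \given \xacq, \wstar}}{\Vert\HofJacobian{\yacq \given \xacq, \wstar}\Vert^2}$ by \Cref{prop:fisher_information_equivalences}, which is your final step. The key ingredient you are missing is therefore the $\log x\le x-1$ step applied to the diagonalized log-determinant of the \emph{sum} $\FisherInfo{\Yacq\given\xacq,\wstar}+\HofHessian{\wstar\given\Dtrain}$: that is what makes the ``$+\text{const.}$'' literal rather than a ranking-preserving gloss, and what removes the need for an isotropic prior. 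Your closing remark that EGL discards the geometry encoded by $\HofHessian{\wstar\given\Dtrain}^{-1}$ is a fair criticism of EGL itself, but in the paper's argument that geometry is discarded by the diagonal bound plus linearization, not by assuming the precision is a multiple of the identity.
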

\end{importantresult}
\subsection{Deep Learning on a Data Diet}

In active sampling, \citet{paul2021deep} use the gradient length of given labeled samples $x, y$ (averaged over multiple training runs) as an acquisition function to select the most informative samples from the training set to speed up training:
\begin{align}
    \simpleE{\qof{\w}}{\left \Vert \HofJacobian{\yacq \given \xacq, \w} \right \Vert^2}, \tag{GraNd}
\end{align}
which they call the \emph{gradient norm score (GraNd)}. The expectation is taken over the model parameters at initialization or after training for a few epochs---as this is not easily expressed using a posterior distribution, we use $\qof{w}$ to denote the distribution.

\begin{importantresult}
\begin{restatable}{proposition}{propdatadietigconnection}
    The IG for a candidate sample $\xacq$ approximately lower-bounds the gradient norm score (GraNd) at $\wstar$ up to a second-order term:
    \begin{align}
        2 \MIof{\W; \yacq \given \xacq} &\overset{\approx}{\le} \E{\qof{\w}}{\left \Vert \HofJacobian{\yacq \given \xacq, \w} \right \Vert^2} - \E{\qof{\w}}{\tr \left ( \frac{\nabla_\w^2 \pof{y \given x, \w}}{\pof{y \given x, \w}} \right )} + \text{const.}
    \end{align}
\end{restatable}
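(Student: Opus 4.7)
The plan is to transplant the EGL argument above to the label-specific case: replace Fisher information by observed information and then expand the observed information into its squared-gradient term plus a second-order curvature correction. First I would invoke the trace form of \Cref{prop:approximate_ig} at a generic expansion point $\w$ rather than fixing it to $\wstar$,
\begin{align*}
2 \MIof{\W; \yacq \given \xacq} \overset{\approx}{\le} \tr \bigl( \HofHessian{\yacq \given \xacq, \w} \, \HofHessian{\w}^{-1} \bigr),
\end{align*}
noting that the left-hand side is a fixed number independent of $\w$, so each linearization point $\w$ delivers its own approximate upper bound on the IG.

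Next, I would impose an isotropic approximation of the posterior curvature, $\HofHessian{\w}^{-1} \approx \alpha \, Id$ (the label-independent analogue of the diagonal-Fisher assumption used in the EGL proof), which collapses the right-hand side to $\alpha \tr \HofHessian{\yacq \given \xacq, \w}$ up to additive constants. Differentiating $\log \pof{y \given x, \w}$ twice then gives the algebraic identity
\begin{align*}
\HofHessian{y \given x, \w} = \HofJacobian{y \given x, \w} \HofJacobian{y \given x, \w}^T - \frac{\nabla_\w^2 \pof{y \given x, \w}}{\pof{y \given x, \w}},
\end{align*}
and taking the trace converts the outer-product term into $\lVert \HofJacobian{y \given x, \w} \rVert^2$, which is exactly GraNd's per-sample integrand. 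Averaging both sides over $\qof{\w}$ (the left-hand side being constant in $\w$) reproduces the claimed inequality, with the $-\E{\qof{\w}}{\tr (\nabla_\w^2 \pof{}/\pof{})}$ correction arising from the curvature term in the identity and the ``const.'' absorbing the $\alpha$ scaling as well as higher-order residuals of the Laplace expansion.

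The main obstacle is conceptual rather than computational: the Gaussian approximation underlying \Cref{prop:approximate_ig} is a single-point expansion anchored at a posterior mode, whereas GraNd averages over weights drawn from a training-dynamics distribution $\qof{\w}$ that the paper explicitly distinguishes from the Bayesian posterior. The algebraic identity above does hold pointwise in $\w$, which is what formally licenses sliding the expectation through $\qof{\w}$; but any quantitative tightness claim would require simultaneously controlling the Laplace remainder at arbitrary $\w$ and the accuracy of the isotropy assumption on $\HofHessian{\w}^{-1}$. This is precisely where the ``$\overset{\approx}{\le}$'' is doing the heavy lifting, matching the limitations disclaimed in \S\ref{sec:introduction}; the result is best read as an information-theoretic \emph{motivation} for GraNd rather than a rigorous bound.
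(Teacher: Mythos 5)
Your proposal lands on the same two essential ingredients as the paper's proof---reducing the IG approximation to (half) the trace of the observed information, and then expanding the observed information via the pointwise identity $\HofHessian{y \given x, \w} = \HofJacobian{y \given x, \w}^T \HofJacobian{y \given x, \w} - \nabla_\w^2 \pof{y \given x, \w} / \pof{y \given x, \w}$ (the paper's \Cref{lemma:helper_jacobian_hessian_equivalent}) before averaging over $\qof{\w}$---so the overall architecture is right, and your closing caveat about mixing a mode-anchored Gaussian approximation with the training-dynamics distribution $\qof{\w}$ is a fair criticism that applies equally to the paper's own argument.

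The one step that does not work as stated is the isotropy move. You start from the trace form $\tr ( \HofHessian{\yacq \given \xacq, \w} \, \HofHessian{\w}^{-1} )$, in which the posterior curvature enters \emph{multiplicatively}, and then set $\HofHessian{\w}^{-1} \approx \alpha \, Id$; this yields $\alpha \tr \HofHessian{\yacq \given \xacq, \w}$, and a multiplicative factor $\alpha$ in front of a sample-dependent quantity cannot be absorbed into an additive ``$+\,\text{const.}$'' (it rescales the whole score, which changes the claimed bound unless $\alpha = 1$). The paper avoids this by starting instead from the \emph{conditional-entropy} proxy of \Cref{prop:approximate_ig}, namely $\tfrac{1}{2} \log \det ( \HofHessian{\yacq \given \xacq, \wstar} + \HofHessian{\wstar \given \Dtrain} )$, where the prior curvature enters \emph{additively}: after a diagonal (Hadamard) upper bound and $\log x \le x - 1$, the sum separates termwise into $\tfrac{1}{2}\sum_k \HofHessian{\yacq \given \xacq, \wstar}_{kk} + \tfrac{1}{2}\sum_k \HofHessian{\wstar \given \Dtrain}_{kk}$, and the second sum is genuinely a constant independent of the candidate, so it can be dropped without any isotropy assumption or stray scaling. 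If you replace your isotropic-curvature step with this additive decomposition, your derivation becomes the paper's.
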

\end{importantresult}
The second term might not be negligible. Hence, GraNd (the first term on the left) might deviate from the information gain. \emph{How does the information gain compares to GraNd in practice?}

\section{Conclusion \& Outlook}
We have examined Fisher information and Gaussian approximations and have derived weight-space approximations for various information quantities. This has allowed us to connect these information quantities to objectives already used in the literature.
Moreover, we can make the following concluding points:

\begin{importantresult_noparbox}
\textbf{Last-Layer Approaches.} Methods that only use last-layer Fisher information or similar perform data subset selection on embeddings only, despite feature learning being arguably the most important strength of deep neural networks. However, these approaches can find great use with large pre-trained models, which are only fine-tuned on new data domains \citep{tran2022plex}. 

\textbf{Bias in Hard Pseudo-Labels.} Several methods (BADGE, PRISM, SIMILAR) use hard pseudo-labels for computing the similarity matrices, leading to biased estimates. Can one-sample estimates perform better? Can the equivalent approximations that do not make use of similarity matrices perform better? %

\textbf{Trace vs $\log \det$ Approximations.} We have presented a hierarchy of approximations and bounds. Ablating these approximations along multiple dimensions, including whether to use the trace or $\log \det$ approximation and whether to use a GLM, the GGN approximation or the full Fisher information, could provide interesting insights into what is attainable.

\textbf{Batch Acquisition Pathologies.} Approaches that use the matrix trace instead of the log determinant can end up being additive for batch candidates $\xacqs$ and, therefore, by definition, cannot take redundancies between batch candidates into account, leading to failures detailed in \citet{kirsch2019batchbald,kirsch2021stochastic}. This is an issue for trace approximations of the (E)IG. Does the trace approximation of (JE)PIG handle batch acquisition pathologies better?
Similarly, most information quantities are not submodular, yet we use a greedy algorithm to select the acquisition batches. 
Is the heuristic proposed by \citet{ash2021gone} generally beneficial?

\textbf{Weight vs. Prediction Space.} BADGE, BAIT, and the LogDet objectives of PRISM and SIMILAR \citep{ash2019deep, ash2021gone, kothawade2021similar, kothawade2022prism} approximate information quantities in weight space, while (Batch)BALD, (J)EPIG, and (J)PIG \citep{houlsby2011bayesian, kirsch2019batchbald, kirsch2021test, mindermann2022prioritized} approximate the information quantities in prediction space. 
Both approaches have their limitations: 

Weight-space approaches can suffer from the Gaussian approximation being of low quality: the Laplace approximation only captures the posterior distribution well once it concentrates sufficiently. 
However, this is unlikely to happen in a low-data regime.

Prediction-space approaches can suffer from a combinatorial explosion as the batch acquisition size increases because prediction configurations have to be enumerated or sampled to approximate the information quantities. In addition, many parameter samples might be needed to obtain low-variance estimates.

Importantly, prediction space approaches also require drawing samples from the posterior distribution but do not estimate the information quantities using the posterior distribution, unlike weight space approaches.

\textbf{Informativeness Scores.} Taking a step back, we have seen that a Bayesian perspective using information quantities connects seemingly disparate literature. Although Bayesian methods are often seen as separate from (non-Bayesian) active learning and active sampling, the sometimes fuzzy notion of ``informativeness'' expressed through various different objectives in non-Bayesian settings collapses to the same couple of information quantities, which were, in principle, already known by \citet{lindley1956measure} and \citet{mackay1992information}.
\end{importantresult_noparbox}

\andreas{Could we maybe somehow approximate the BALD symmetry using the Delta method instead in a more tractable way? \citet{ash2021gone} gets away with it because they only use last-layer Fisher information. They say it is to keep their approach practical, but we can argue that it simply makes no sense otherwise!
THIS IS SOMETHING TO TALK TO TIM ABOUT!}

\section*{Acknowledgements}

The authors would like to thank their anonymous TMLR reviewers for their kind, constructive and helpful feedback during the review process, which has significantly improved this work. We would also like to thank
Freddie Bickford Smith, Tom Rainforth, Lisa Schut, and Hugh Goatcher, as well as the members of OATML in general for their feedback at various stages of the project. AK is supported by the UK EPSRC CDT in Autonomous Intelligent Machines and Systems (grant reference EP/L015897/1).

\FloatBarrier

\bibliographystyle{plainnat}
\bibliography{references}

\clearpage
\appendix

\section{Fisher Information: Additional Derivations \& Proofs}
\label{appsec:fisher_information}

\fimadditive*
\begin{proof}
    This follows immediately from $Y_i \independent{} Y_j \given x_i, x_j, \wstar$ for $i \not= j$ and the additivity of the observed information:
    \begin{align}
        \FisherInfo{\Ys \given \xs, \wstar} &= \E{\pof{\ys \given \xs, \wstar}}{\HofHessian{\ys \given \xs, \wstar}}
        = \E{\pof{\ys \given \xs, \wstar}}{ \sum_i \HofHessian{y_i \given x_i, \wstar}} \\
        &= \sum_i \E{\pof{y_i \given x_i, \wstar}}{ \HofHessian{y_i \given x_i, \wstar}} 
        = \sum_i \FisherInfo{y_i \given x_i, \wstar}.
    \end{align}
\end{proof}

\fisherinformationequivalences*
To prove \Cref{prop:fisher_information_equivalences}, we use the two generally useful lemmas below:
\begin{restatable}{lemma}{helperjacobianhessianequivalent}
    \label{lemma:helper_jacobian_hessian_equivalent}
    For the Jacobian $\HofJacobian{y \given x, \wstar}$, we have:
    \begin{align}
        \HofJacobian{y \given x, \wstar} = \nabla_\w[-\log \pof{y \given x, \wstar}] = -\frac{\nabla_\w \pof{y \given x, \wstar}}{\pof{y \given x,\wstar}},
    \end{align}
    and for the Hessian $\HofHessian{y \given x, \wstar}$, we have:
    \begin{align}
        \HofHessian{y \given x, \wstar} = \HofJacobian{y \given x, \wstar}^T \, \HofJacobian{y \given x, \wstar} - \frac{\nabla_\w^2 \pof{y \given x, \wstar}}{\pof{y \given x, \wstar}}.
    \end{align}
\end{restatable}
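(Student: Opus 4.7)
The plan is to verify both identities by direct differentiation, using only the definitions $\HofJacobian{\cdot} \defeq -\nabla_\w \log \pof{\cdot}$ and $\HofHessian{\cdot} \defeq -\nabla_\w^2 \log \pof{\cdot}$ together with elementary calculus (chain and quotient rules applied to the log). No probabilistic content is needed beyond the fact that $\pof{y \given x, \wstar} > 0$, which is what lets us divide by it freely.

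First I would dispatch the Jacobian identity. By the definition of $\HofJacobian{y \given x, \wstar}$ as $-\nabla_\w \log \pof{y \given x, \wstar}$, and applying the chain rule $\nabla_\w \log f = \nabla_\w f / f$ for the scalar function $f(\w) = \pof{y \given x, \w}$ evaluated at $\wstar$, I immediately obtain
\begin{align}
\HofJacobian{y \given x, \wstar} = -\nabla_\w \log \pof{y \given x, \wstar} = -\frac{\nabla_\w \pof{y \given x, \wstar}}{\pof{y \given x, \wstar}}.
\end{align}

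Second, for the Hessian, I would differentiate the Jacobian identity above one more time with respect to $\w$. Writing $p \defeq \pof{y \given x, \wstar}$ for brevity, we have $\HofHessian{y \given x, \wstar} = \nabla_\w \HofJacobian{y \given x, \wstar} = -\nabla_\w\bigl(\nabla_\w p / p\bigr)$, and the quotient rule yields
\begin{align}
\HofHessian{y \given x, \wstar} = -\frac{\nabla_\w^2 p}{p} + \frac{(\nabla_\w p)^T\,\nabla_\w p}{p^2}.
\end{align}
The second summand is exactly the outer product of $\nabla_\w p / p$ with itself, and by the Jacobian identity just established this equals $(-\HofJacobian{y \given x, \wstar})^T (-\HofJacobian{y \given x, \wstar}) = \HofJacobian{y \given x, \wstar}^T\,\HofJacobian{y \given x, \wstar}$. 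Substituting back and recognising $\nabla_\w^2 p / p = \nabla_\w^2 \pof{y \given x, \wstar} / \pof{y \given x, \wstar}$ gives the claimed identity.

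There is no real obstacle here; the result is essentially the standard derivation underlying the two equivalent forms of Fisher information (the Hessian form and the outer-product-of-scores form). The only thing to be careful about is bookkeeping of shapes: the Jacobian $\HofJacobian{y \given x, \wstar}$ is a $1 \times k$ row vector, so $\HofJacobian{\cdot}^T \HofJacobian{\cdot}$ is a $k \times k$ rank-one matrix matching the shape of the Hessian, and the quotient rule must be applied to the vector-valued expression $\nabla_\w p / p$ rather than to a scalar. With that convention fixed, both identities drop out in one line each, and this lemma then feeds directly into \Cref{prop:fisher_information_equivalences} by taking expectations over $\pof{y \given x, \wstar}$ and noting that $\E{\pof{y \given x, \wstar}}{\nabla_\w^2 \pof{y \given x, \wstar} / \pof{y \given x, \wstar}} = \nabla_\w^2 \int \pof{y \given x, \wstar}\,dy = 0$ under standard regularity.
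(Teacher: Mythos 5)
Your proof is correct and takes the same route the paper intends: the paper's own proof simply states that the identities "follow immediately from the rules of multivariate calculus," and your chain-rule/quotient-rule computation is exactly that calculation written out, with the shape bookkeeping handled properly.
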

\begin{proof}
    The result follows immediately from the application of the rules of multivariate calculus.
\end{proof}
\begin{restatable}{lemma}{helpervanishingexpectations}
    \label{lemma:helper_vanishing_expectations}
    The following expectations over the model's own predictions vanish:
    \begin{align}
        \E{\pof{y\given x, \wstar}}{\HofJacobian{y \given x, \wstar}} = 0, \\
        \E*{\pof{y\given x, \wstar}}{\frac{\nabla_\w^2 \pof{y \given x, \wstar}}{\pof{y \given x,\wstar}}} = 0.
    \end{align}
\end{restatable}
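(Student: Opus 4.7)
The plan is to prove both identities by the standard score-function argument: push the derivative outside the expectation (an integral over $y$), use the normalization $\int \pof{y \given x, \wstar}\, dy = 1$, and observe that differentiating a constant gives zero. Regularity assumptions (e.g.\ dominated convergence so that $\nabla_\w$ and $\nabla_\w^2$ commute with $\int \cdot\, dy$) will be assumed throughout, as is standard in the Fisher information literature.

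For the first identity, I would first use \Cref{lemma:helper_jacobian_hessian_equivalent} to rewrite the Jacobian as
\begin{align}
    \HofJacobian{y \given x, \wstar} = -\frac{\nabla_\w \pof{y \given x, \wstar}}{\pof{y \given x, \wstar}}.
\end{align}
Then the expectation becomes
\begin{align}
    \E{\pof{y \given x, \wstar}}{\HofJacobian{y \given x, \wstar}} = -\int \pof{y \given x, \wstar} \, \frac{\nabla_\w \pof{y \given x, \wstar}}{\pof{y \given x, \wstar}}\, dy = -\int \nabla_\w \pof{y \given x, \wstar}\, dy,
\end{align}
after which swapping integral and derivative yields $-\nabla_\w \int \pof{y \given x, \wstar}\, dy = -\nabla_\w 1 = 0$.

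For the second identity, the same trick applies one level up: the integrand is already $\nabla_\w^2 \pof{y \given x, \wstar}$ after the $\pof{y \given x, \wstar}$ factors cancel, so
\begin{align}
    \E*{\pof{y \given x, \wstar}}{\frac{\nabla_\w^2 \pof{y \given x, \wstar}}{\pof{y \given x, \wstar}}} = \int \nabla_\w^2 \pof{y \given x, \wstar}\, dy = \nabla_\w^2 \int \pof{y \given x, \wstar}\, dy = \nabla_\w^2 1 = 0.
\end{align}

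The only real obstacle is the interchange of differentiation and integration, which is not a genuine obstacle here: it is justified under the usual regularity assumptions on the parametric family (smoothness of $\pof{y \given x, \w}$ in $\w$ together with a dominating integrable envelope for $\nabla_\w \pof{y \given x, \w}$ and $\nabla_\w^2 \pof{y \given x, \w}$), which are implicit throughout the paper's Gaussian/Laplace approximation framework. In the discrete-$y$ case (e.g.\ the categorical models considered in \S\ref{sec:fisher_information}), the integral is a finite sum and no regularity condition is needed at all; the exchange is automatic. I would simply state this assumption once and apply it in both computations.
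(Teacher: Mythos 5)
Your proof is correct and follows essentially the same route as the paper's: rewrite the score as $-\nabla_\w \pof{y \given x, \wstar}/\pof{y \given x, \wstar}$, cancel the density, exchange differentiation with integration, and differentiate the normalization constant. Your explicit remark about the regularity conditions needed for the interchange (and its triviality in the discrete case) is a welcome addition the paper leaves implicit.
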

\begin{proof}
    We use the previous equivalences and rewrite the expectations as integral; the results follows:
    \begin{align}
        \E{\pof{y\given x, \wstar}}{\HofJacobian{y \given x, \wstar}} &= \E{\pof{y\given x, \wstar}}{-\nabla_\w \log \pof{y \given x, \wstar}} 
        = -\E*{\pof{y\given x, \wstar}}{\frac{\nabla_\w \pof{y \given x, \wstar}}{\pof{y \given x,\wstar}}} \\
        &= -\int \nabla_\w \pof{y \given x, \wstar} \, dy 
        = -\nabla_\w  \int \pof{y \given x, \wstar} \, dy  
        = -\nabla_\w 1 = 0,\\
        \E*{\pof{y\given x, \wstar}}{\frac{\nabla_\w^2 \pof{y \given x, \wstar}}{\pof{y \given x,\wstar}}} &= \int \nabla_\w^2 \pof{y \given x, \wstar} \, dy 
        = \nabla_\w^2 \int \pof{y \given x, \wstar} \, dy 
        = \nabla_\w^2 1 = 0.
    \end{align}
\end{proof}

\begin{proof}[Proof of \Cref{prop:fisher_information_equivalences}]
    With the previous lemma, we have the following.
    \begin{align}
        \implicitCov{\HofJacobian{Y \given x, \wstar}} &= \implicitE{\HofJacobian{Y \given x, \wstar}^T \, \HofJacobian{Y \given x, \wstar}} - \underbrace{\implicitE{\HofJacobian{Y \given x, \wstar}^T}}_{=0} \, \underbrace{\implicitE{\HofJacobian{Y \given x, \wstar}}}_{=0} \\
        &= \implicitE{\HofJacobian{Y \given x, \wstar}^T \, \HofJacobian{Y \given x, \wstar}}.
    \end{align}
    For the expectation over the Hessian, we plug \Cref{lemma:helper_jacobian_hessian_equivalent} into \Cref{lemma:helper_vanishing_expectations} and obtain:
    \begin{align}
        \FisherInfo{Y \given x, \wstar} &= \E{\pof{y \given x, \wstar}}{ \HofHessian{y \given x, \wstar}} = \E*{\pof{y \given x, \wstar}}{\HofJacobian{y \given x, \wstar}^T \, \HofJacobian{y \given x, \wstar} - \frac{\nabla_\w^2 \pof{y \given x, \wstar}}{\pof{y \given x, \wstar}}} \\
        &= \E{\pof{y \given x, \wstar}}{\HofJacobian{y \given x, \wstar}^T \, \HofJacobian{y \given x, \wstar}} - 0 = \implicitCov{\HofJacobian{Y \given x, \wstar}}.        
    \end{align}
\end{proof}

\subsection{Special Case: Exponential Family}

\fimexpfamily*
\begin{proof}
    We apply the second equivalence in \Cref{prop:fisher_information_equivalences} twice:
    \begin{align}
        \FisherInfo{Y \given x, \wstar} &=\implicitCov{\HofJacobian{Y \given x, \wstar}}
        =\implicitCov{\nabla_\w \encoder{x ; \wstar}^T \, \nabla_\logits \Hof{y \given \logits=\encoder{x ; \wstar}} \, \nabla_\w \encoder{x ; \wstar}} \\
        &=\nabla_\w \encoder{x ; \wstar}^T \, \implicitCov{\nabla_\logits \Hof{y \given \logits=\encoder{x ; \wstar}}} \, \nabla_\w \encoder{x ; \wstar} \\
        &=\nabla_\w \encoder{x ; \wstar}^T \, \E{\pof{y \given x, \wstar}}{\nabla_\logits^2 \Hof{y \given \logits=\encoder{x ; \wstar}}} \, \nabla_\w \encoder{x ; \wstar}
    \end{align}
\end{proof}

\subsection{Special Case: Generalized Linear Models}

\glmhessian*
\begin{proof}
    \begin{align}
        \HofHessian{y \given x, \wstar} \\
        &= \nabla_\w [ \HofJacobian{y \given x, \wstar} ] \\
        &= \nabla_\w [ \nabla_\logits \Hof{y \given \logits=\encoder{x ; \wstar}} \, \nabla_\w \encoder{x ; \wstar} ] \\
        &= \nabla_\logits \Hof{y \given \logits=\encoder{x ; \wstar}} \, \underbrace{\nabla_\w^2 \encoder{x ; \wstar}}_{=\nabla_\w^2 [w^T x]=0} + \nabla_\w \encoder{x ; \wstar}^T \, \nabla_\logits^2 \Hof{y \given \logits=\encoder{x ; \wstar}} \, \nabla_\w \encoder{x ; \wstar} \\
        &= \nabla_\w \encoder{x ; \wstar}^T \, \nabla_\logits^2 A(w^T x) \, \nabla_\w \encoder{x ; \wstar}.
    \end{align}
\end{proof}

\propkroneckerproductfisher*
\begin{proof}
    We begin with a few statements that lead to the conclusion step by step, where $x \in \realnum^D, A \in \realnum^{C \times C}, G \in \realnum^{C \times (C \cdot D)}$:
    \begin{align}
        (x \, x^T)_{ij} &= x_i \, x_j\\ %
        (Id_C \otimes x^T)_{c, d \, D + i} &= x_i \cdot \indicator{c=d} \\ %
        (G^T \, A \, G)_{ij} &= \sum_{k,l} G_{ki} \, A_{kl} \, G_{lj} \\ %
        (A \otimes x\, x^T)_{c\,D + i, d\,D + j} &= A_{cd} \, x_i \, x_j, \\
        ((Id_C \otimes x^T)^T \, A \, (Id_C \otimes x^T))_{c\,D+i,d\,D+j} &= 
        \sum_{k,l} (Id_C \otimes x^T)_{k,c\,D + i} \, A_{kl} \, (Id_C \otimes x^T)_{l, d\,D + j} \\
        &= \sum_{k,l} x_i \cdot \indicator{k=c} \, A_{kl} \, x_j \cdot \indicator{l=d} \\
        &= x_i \, A_{cd} \, x_j \\
        &= (A \otimes x\, x^T)_{c\,D + i, d\,D + j}. \\
    \implies \nabla_\w \encoder{x ; \wstar}^T \, \nabla_\logits^2 A(\w^T x) \, \nabla_\w \encoder{x ; \wstar} &= \nabla_\logits^2 A(w^T x) \otimes x \, x^T.
    \end{align}
\end{proof}

\glmhessianfim*
\begin{proof}
    This follows directly from \Cref{prop:special_case_exp_family_fim}. In particular, we have:
    \begin{align}
        \FisherInfo{Y \given x, \wstar} &= \E{\pof{y \given x, \wstar}} {\HofHessian{y\given x, \wstar}} = \HofHessian{y^*\given x, \wstar},
    \end{align}
    where we have fixed $y^*$ to an arbitrary value.
\end{proof}

\section{Approximating Information Quantities}
\label{appsec:approximate_iq}

\subsection{Approximate Expected Information Gain}
\label{appsec:eig_general_case}

\logdettrinequality*
\begin{proof}
    When $A$ is positive semidefinite and symmetric, its eigenvalues $(\lambda_i)_i$ are real and nonnegative.
    Moreover, $A+Id$ has eigenvalues $(\lambda_i + 1)_i$; $\det (A + Id) = \prod_i (\lambda_i + 1)$; and $\tr A = \sum_i \lambda_i$. These properties easily follow from the respective eigenvalue decomposition.
    Thus, we have:
    \begin{align}
        \log \det (A + Id) \le \log \prod_i (\lambda_i + 1) = \sum_i \log (\lambda_i + 1) 
        \le \sum_i \lambda_i = \tr(A),
    \end{align}
    where we have used $\log (x + 1) \le x$ iff equality for $x=0$.
\end{proof}

\textbf{General Case.} In the main text, we only skimmed the general case and mentioned the main assumption. Here, we look at the general case in detail.

For the general case, we need to make strong approximations to be able to pursue a similar derivation. 
First, we cannot drop the expectation; instead, we note that the log determinant is a concave function on the positive semidefinite symmetric cone \citep{cover1988determinant}, and we can use Jensen's inequality on the log determinant term from \Cref{eq:eig_log_det} as follows:
\begin{align}
    &\E{\pof{\yacqs \given \xacqs}}{\log \det \left ( \HofHessian{\yacqs \given \xacqs, \wstar} \, \HofHessian{\wstar}^{-1} + Id \right )} 
    \\&\quad \le \log \det \left ( \E{\pof{\yacqs \given \xacqs}}{\HofHessian{\yacqs \given \xacqs, \wstar}} \, \HofHessian{\wstar}^{-1} + Id\right ) \label{eq:log_det_concavity}.
\end{align}

Second, we need use the following approximation:
\begin{align}
    \pof{\yacqs \given \xacqs} \approx \pof{\yacqs \given \xacqs, \wstar}
\end{align}
to obtain a Fisher information and use its additivity. That is, we obtain:
\begin{align}
    \E{\pof{\yacqs \given \xacqs, \wstar}}{\HofHessian{\yacqs \given \xacqs, \wstar}} = \FisherInfo{\Yacqs \given \xacqs, \wstar} = \sum_i \FisherInfo{\yacq_i \given \xacq_i, \wstar}.
\end{align}

Plugging all of this together and applying \Cref{lemma:log_det_tr_inequality}, we obtain the same final approximation:
\begin{align}
    \MIof{\W; \Yacqs \given \xacqs} &= \ldots \approx \tfrac{1}{2}\E{\pof{\yacqs \given \xacqs}}{\log \det \left ( \HofHessian{\yacqs \given \xacqs, \wstar} \, \HofHessian{\wstar}^{-1} + Id \right )} \\
    &\le \tfrac{1}{2} \log \det \left ( \E{\pof{\yacqs \given \xacqs}}{\HofHessian{\yacqs \given \xacqs, \wstar}} \, \HofHessian{\wstar}^{-1} + Id\right ) \\
    &\approx \tfrac{1}{2} \log \det \left ( \E{\pof{\yacqs \given \xacqs, \wstar}}{\HofHessian{\yacqs \given \xacqs, \wstar}} \, \HofHessian{\wstar}^{-1} + Id\right ) \\
    &= \tfrac{1}{2} \log \det \left (  \sum_i \FisherInfo{\Yacq_i \given \xacq_i, \wstar} \, \HofHessian{\wstar}^{-1} + Id\right ) \\
    &\le \tfrac{1}{2} \sum_i \tr \left ( \FisherInfo{\Yacq_i \given \xacq_i, \wstar} \, \HofHessian{\wstar}^{-1} \right ).
\end{align}
Unlike in the case of generalized linear models, a stronger assumption was necessary to reach the same result. Alternatively, we could use the GGN approximation, which leads to the same result.

\subsection{Approximate Expected Predicted Information Gain}
\label{appsec:epig_general_case}

In the main text, we only briefly referred to not knowing a principled way to arrive at the same result of \Cref{prop:epig_fisher_approximation} for the general case. This is because unlike the expected information gain, the Fisher information for an acquisition candidate now lies within an matrix inversion. 
Even if we used the fact that $\log \det (Id + X \, Y^{-1})$ is concave in $X$ and convex in $Y$, we would end up with:
\begin{align}
    &\cdots\\
    &\quad \approx \tfrac{1}{2} \E{\pof{\yeval, \yacq \given \xeval, \xacq} \, \pof{\xeval}} {\log \det
    \left (
    \HofHessian{\yeval \given \xeval, \wstar} \, ( \HofHessian{\yacq \given \xacq, \wstar} + \HofHessian{\wstar} )^{-1} + Id
    \right )} \\
    &\quad \le \tfrac{1}{2} \E{\pof{\yacq \given \xacq}} {\log \det
    \left (
    \E{\pof{\yeval, \xeval}}{\HofHessian{\yeval \given \xeval, \wstar}} \, ( \HofHessian{\yacq \given \xacq, \wstar} + \HofHessian{\wstar} )^{-1} + Id
    \right )} \\
    &\quad \ge \tfrac{1}{2} {\log \det
    \left (
    \E{\pof{\yeval, \xeval}}{\HofHessian{\yeval \given \xeval, \wstar}} \, ( \E{\pof{\yacq \given \xacq}}{\HofHessian{\yacq \given \xacq, \wstar}} + \HofHessian{\wstar} )^{-1} + Id
    \right )} \\
    &\quad = \tfrac{1}{2} {\log \det
    \left (
    \E{\pof{\xeval}}{\FisherInfo{\Yeval \given \xeval, \wstar}} \, ( \FisherInfo{\Yacq \given \xacq, \wstar} + \HofHessian{\wstar} )^{-1} + Id
    \right )} \\
    &\quad \le \tfrac{1}{2} {\tr
    \left (
    \E{\pof{\xeval}}{\FisherInfo{\Yeval \given \xeval, \wstar}} \, ( \FisherInfo{\Yacq \given \xacq, \wstar} + \HofHessian{\wstar} )^{-1}
    \right )}.
\end{align}
Note the $\le \ldots \ge$, which invalidates the chain. The errors could cancel out, but a principled statement seems hardly possible using this deduction.

\subsection{Approximate Predictive Information Gain}
\label{appsubsec:pig}

Similarly to \Cref{prop:approximate_ig}, we can approximate the predictive information gain. We assume that we have access to an (empirical) distribution $\pdata{\xeval, \yeval}$:
\begin{importantresult}
\begin{proposition}
    \label{prop:pig_fisher_approximation}
    For a generalized linear model (or with the GGN approximation), when we take the expectation over $\pdata{\xeval, \yeval}$, we have:
    \begin{align}
        \argmax_\xacq \MIof{\Yeval; \yacq \given \Xeval, \xacq, \Dtrain} = 
        \argmin_\xacq \MIof{\W ; \Yeval \given \Xeval, \yacq, \xacq, \Dtrain}
    \end{align}
    with
    \begin{align}
        &\MIof{\W ; \Yeval \given \Xeval, \yacq, \xacq, \Dtrain} \\
        &\quad = \simpleE{\pdata{\xeval, \yeval}} \MIof{\W ; \yeval \given \xeval, \yacq, \xacq, \Dtrain} \notag \\
        &\quad 
        \approx
        \E{\pdata{\xeval, \yeval}}  {\tfrac{1}{2} \log \det
        \left (
        \HofHessian{\yeval \given \xeval, \wstar} \, ( \HofHessian{\yacq \given \xacq, \wstar} + \HofHessian{\wstar \given \Dtrain} )^{-1} + Id
        \right )} \\
        &\quad \le {\tfrac{1}{2} \log \det
        \left (
        \E{\pdata{\xeval, \yeval}} {\HofHessian{\yeval \given \xeval, \wstar}} \, ( \HofHessian{\yacq \given \xacq, \wstar} + \HofHessian{\wstar \given \Dtrain} )^{-1} + Id
        \right )} \\
        &\quad \le \tfrac{1}{2} {\tr
        \left (
        \E{\pdata{\xeval, \yeval}} {\HofHessian{\yeval \given \xeval, \wstar}} \, ( \HofHessian{\yacq \given \xacq, \wstar} + \HofHessian{\wstar \given \Dtrain} )^{-1}
        \right )}.
    \end{align}
\end{proposition}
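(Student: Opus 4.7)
The plan is to mirror the derivation of \Cref{prop:epig_fisher_approximation}, substituting the random variables $\Yacq, \Yeval$ by the observed labels $\yacq, \yeval$ drawn from $\pdata{\xeval, \yeval}$. Because the labels are now fixed outcomes, the observed information $\HofHessian{y \given x, \wstar}$ will appear directly in place of the Fisher information, and no intermediate $\pof{y \given x} \approx \pof{y \given x, \wstar}$ approximation is needed. As a first step, apply the chain rule of mutual information to $\MIof{\W; \yeval, \yacq \given \xeval, \xacq, \Dtrain}$ in two ways. Using conditional independence of the outputs given $\W$ (and noting that $\W$ is independent of $\xeval$ a priori), this yields the analogue of \cref{eq:epig_bald_decomposition}:
\begin{align}
    \MIof{\yeval; \yacq \given \xeval, \xacq, \Dtrain} = \MIof{\W; \yeval \given \xeval, \Dtrain} - \MIof{\W; \yeval \given \xeval, \yacq, \xacq, \Dtrain}.
\end{align}
Taking the expectation over $\pdata{\xeval, \yeval}$ and observing that the first term is independent of $\xacq$ gives the claimed argmax/argmin equivalence.

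Next, apply the Gaussian approximation of \Cref{prop:entropy_appox} to both $\Hof{\W \given \yacq, \xacq, \Dtrain}$ and $\Hof{\W \given \yeval, \xeval, \yacq, \xacq, \Dtrain}$, expanding the conditional Hessians via additivity of the observed information (\Cref{prop:observed_information_additive}). The constants $C_k$ cancel on subtraction, and using the matrix identity $\det(A + B)\det(B)^{-1} = \det(AB^{-1} + Id)$ we obtain
\begin{align}
    \MIof{\W; \yeval \given \xeval, \yacq, \xacq, \Dtrain} \approx \tfrac{1}{2} \log \det \Bigl( \HofHessian{\yeval \given \xeval, \wstar} \bigl(\HofHessian{\yacq \given \xacq, \wstar} + \HofHessian{\wstar \given \Dtrain}\bigr)^{-1} + Id \Bigr).
\end{align}
This is exactly the first approximate expression in the proposition (after taking the $\pdata{\xeval, \yeval}$ expectation).

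Finally, take the expectation over $\pdata{\xeval, \yeval}$ and invoke the concavity of $\log \det$ on the positive semidefinite cone (as used for \cref{eq:log_det_concavity}) to push the expectation inside the $\log \det$, giving the second line of the bound. Applying \Cref{lemma:log_det_tr_inequality} to the resulting PSD argument then yields the trace bound.

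The main obstacle is a subtle one about positive semidefiniteness: both the Jensen step and \Cref{lemma:log_det_tr_inequality} require that the observed information $\HofHessian{\yeval \given \xeval, \wstar}$ be positive semidefinite, so that all matrices involved (and their combinations with the PSD inverse $(\HofHessian{\yacq \given \xacq, \wstar} + \HofHessian{\wstar \given \Dtrain})^{-1}$) land in the PSD cone. For a GLM this is precisely guaranteed by \Cref{prop:glm_hessian} since $\nabla_\logits^2 A$ is PSD as the Hessian of a convex log-partition function, and under the GGN approximation PSD-ness holds by construction; outside these two settings the observed information can be indefinite and the chain breaks, which is exactly why the proposition restricts to GLMs or the GGN case. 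Beyond this caveat, every step is a direct transposition of the EPIG argument with random labels replaced by observed ones.
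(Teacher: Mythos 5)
Your proposal is correct and follows essentially the same route as the paper, whose own justification is simply that the result ``follows immediately'' from the preceding EIG/IG/EPIG derivations, with Jensen's inequality and the concavity of $\log\det$ on the positive semidefinite cone supplying the second inequality and \Cref{lemma:log_det_tr_inequality} the third. Your additional remark that the GLM/GGN restriction is what guarantees positive semidefiniteness of the observed information (so that the Jensen and trace steps are valid) is a sound and slightly more careful reading than the paper makes explicit.
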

\end{importantresult}
All of this follows immediately. Only for the second inequality, we need to use Jensen's inequality and that the log determinant is on the positive semidefinite symmetric cone \citep{cover1988determinant}. Like for the information gain, there is no difference between having access to labels or not when we have a GLM or use the GGN approximation.

\subsection{Approximate Joint (Expected) Predictive Information Gain}
\label{appsubsec:jepig_jpig}

A comparison of EPIG and JEPIG shows that JEPIG does not require an expectation over $\pdata{\xeval}$ but uses a set of \emph{evaluation samples} $\xevals$. As such, we can easily adapt \Cref{prop:epig_fisher_approximation} to JEPIG and obtain:

\begin{importantresult}
\begin{proposition}[JEPIG]
    \label{prop:jepig_fisher_approximation}
    For a generalized linear model (or with the GGN approximation), we have:
    \begin{align}
        \argmax_\xacqs \MIof{\Yevals; \Yacqs \given \xevals, \xacqs, \Dtrain} = 
        \argmin_\xacqs \MIof{\W ; \Yevals \given \xevals, \Yacqs, \xacqs, \Dtrain}
    \end{align}
    with
    \begin{align}
        &\MIof{\W ; \Yevals \given \xevals, \Yacqs, \xacqs, \Dtrain} \notag \\
        &\quad
        \approx
        \tfrac{1}{2} {\log \det
        \left (
        {\FisherInfo{\Yevals \given \xevals, \wstar}} \, ( \FisherInfo{\Yacqs \given \xacqs, \wstar} + \HofHessian{\wstar \given \Dtrain} )^{-1} + Id
        \right )} \label{eq:jepig_log_det_approx} \\
        &\quad \le \tfrac{1}{2} {\tr
        \left (
        {\FisherInfo{\Yevals \given \xevals, \wstar}} \, ( \FisherInfo{\Yacqs \given \xacqs, \wstar} + \HofHessian{\wstar \given \Dtrain} )^{-1}
        \right )}.
    \end{align}
\end{proposition}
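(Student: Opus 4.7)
The plan is to mirror the derivation of \Cref{prop:epig_fisher_approximation} (the EPIG case) while exploiting the fact that JEPIG conditions on a \emph{fixed} set $\xevals$ rather than taking an outer expectation over $\pdata{\xeval}$. This lets me go directly from the log-determinant approximation to the trace bound in one step, without invoking the extra Jensen inequality needed in \cref{eq:epig_deduction_jepig_hook}.

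First, I would apply the BALD-style identity
$\MIof{\Yevals; \Yacqs \given \xevals, \xacqs, \Dtrain} = \MIof{\W; \Yevals \given \xevals, \Dtrain} - \MIof{\W; \Yevals \given \xevals, \Yacqs, \xacqs, \Dtrain}$
and note that the first term is independent of $\xacqs$; this establishes the stated equivalence between $\argmax$ and $\argmin$. Next, I would write the remaining mutual information as $\Hof{\W \given \Yacqs, \xacqs, \Dtrain} - \Hof{\W \given \Yevals, \xevals, \Yacqs, \xacqs, \Dtrain}$, apply the Gaussian entropy approximation (\Cref{prop:entropy_appox}) to each conditional entropy so that the $C_k$ constants cancel, and combine additivity of observed information (\Cref{prop:observed_information_additive}) with the identity $\HofHessian{\wstar \given \Dtrain, \yacqs, \xacqs} = \HofHessian{\yacqs \given \xacqs, \wstar} + \HofHessian{\wstar \given \Dtrain}$. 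The matrix identity $\log\det(A+B) - \log\det B = \log\det(AB^{-1} + Id)$ then yields
\begin{align*}
    \MIof{\W; \Yevals \given \xevals, \Yacqs, \xacqs, \Dtrain}
    \approx \tfrac{1}{2}\,\E*{\pof{\yevals, \yacqs \given \xevals, \xacqs}}{\log\det\bigl(\HofHessian{\yevals \given \xevals, \wstar}\,(\HofHessian{\yacqs \given \xacqs, \wstar} + \HofHessian{\wstar \given \Dtrain})^{-1} + Id\bigr)}.
\end{align*}

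The crucial GLM step then invokes \Cref{prop:glm_hessian_fim}: for a GLM, or under the GGN approximation, both $\HofHessian{\yevals \given \xevals, \wstar}$ and $\HofHessian{\yacqs \given \xacqs, \wstar}$ are independent of the labels and coincide with the Fisher informations $\FisherInfo{\Yevals \given \xevals, \wstar}$ and $\FisherInfo{\Yacqs \given \xacqs, \wstar}$, so the outer expectation collapses trivially and \cref{eq:jepig_log_det_approx} drops out. The trace bound follows from \Cref{lemma:log_det_tr_inequality}. The main obstacle I anticipate is purely technical: the product $\FisherInfo{\Yevals \given \xevals, \wstar}\,(\FisherInfo{\Yacqs \given \xacqs, \wstar} + \HofHessian{\wstar \given \Dtrain})^{-1}$ inside the log determinant is not symmetric, so \Cref{lemma:log_det_tr_inequality} does not apply verbatim. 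I would handle this by noting that with $B = \FisherInfo{\Yacqs \given \xacqs, \wstar} + \HofHessian{\wstar \given \Dtrain}$ PD and $A = \FisherInfo{\Yevals \given \xevals, \wstar}$ PSD, the matrix $AB^{-1}$ is similar (via conjugation by $B^{1/2}$) to the PSD matrix $B^{-1/2} A B^{-1/2}$ and hence shares its spectrum; the lemma then applies to the symmetrized form, and cyclicity of the trace restores the asymmetric product in the stated bound. Apart from this symmetrization, the derivation is a strict simplification of the EPIG case, with the outer expectation over $\pdata{\xeval}$ simply absent.
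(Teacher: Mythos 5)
Your proposal is correct and follows essentially the same route as the paper, which simply adapts the EPIG derivation (Proposition~\ref{prop:epig_fisher_approximation}) by replacing the outer expectation over $\pdata{\xeval}$ with the fixed joint evaluation set, so that the GLM/GGN label-independence collapses the remaining expectation and Lemma~\ref{lemma:log_det_tr_inequality} gives the trace bound. Your explicit symmetrization of $\FisherInfo{\Yevals \given \xevals, \wstar}\,(\FisherInfo{\Yacqs \given \xacqs, \wstar} + \HofHessian{\wstar \given \Dtrain})^{-1}$ via conjugation by $B^{1/2}$ before applying the lemma is a detail the paper leaves implicit, and it is handled correctly.
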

\end{importantresult}

Similarly, for JPIG, we obtain without relying on the GGN approximation or GLMs:
\begin{importantresult}
\begin{proposition}[JPIG]
    \label{prop:jpig_fisher_approximation}
    We have:
    \begin{align}
        \argmax_\xacqs \MIof{\yevals; \yacqs \given \xevals, \xacqs, \Dtrain} = 
        \argmin_\xacqs \MIof{\W ; \yevals \given \xevals, \yacqs, \xacqs, \Dtrain}
    \end{align}
    with
    \begin{align}
        &\MIof{\W ; \yevals \given \xevals, \Yacqs, \xacqs, \Dtrain} \notag \\
        &\quad
        \approx
        \tfrac{1}{2} {\log \det
        \left (
        {\HofHessian{\yevals \given \xevals, \wstar}} \, ( \HofHessian{\yacqs \given \xacqs, \wstar} + \HofHessian{\wstar \given \Dtrain} )^{-1} + Id
        \right )} \\
        &\quad \le \tfrac{1}{2} {\tr
        \left (
        {\HofHessian{\yevals \given \xevals, \wstar}} \, ( \HofHessian{\yacqs \given \xacqs, \wstar} + \HofHessian{\wstar \given \Dtrain} )^{-1}
        \right )}.
    \end{align}
\end{proposition}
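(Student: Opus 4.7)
The plan is to mirror the derivation of Proposition~\ref{prop:jepig_fisher_approximation} (JEPIG), replacing the random variables $\Yevals, \Yacqs$ by the observed outcomes $\yevals, \yacqs$ throughout. Because the labels are observed rather than integrated out, observed information takes the place of Fisher information everywhere; as with the IG (Proposition~\ref{prop:approximate_ig}) and PIG (Proposition~\ref{prop:pig_fisher_approximation}) derivations, no GLM or GGN assumption is needed.

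First, I would apply the pointwise analogue of the chain-rule identity used in \cref{eq:epig_bald_decomposition},
\begin{align*}
    \MIof{\yevals; \yacqs \given \xevals, \xacqs, \Dtrain} = \MIof{\W; \yevals \given \xevals, \Dtrain} - \MIof{\W; \yevals \given \xevals, \yacqs, \xacqs, \Dtrain},
\end{align*}
which is valid because the model satisfies $\yevals \independent \yacqs \given \W, \xevals, \xacqs$. Since the first term on the right is independent of $\xacqs$, the $\argmax$/$\argmin$ equality in the statement follows exactly as in \cref{eq:transductive_active_learning}. Next, I would expand the residual term as a difference of conditional entropies,
\begin{align*}
    \MIof{\W; \yevals \given \xevals, \yacqs, \xacqs, \Dtrain} = \Hof{\W \given \yacqs, \xacqs, \Dtrain} - \Hof{\W \given \yevals, \xevals, \yacqs, \xacqs, \Dtrain},
\end{align*}
and apply Proposition~\ref{prop:entropy_appox} to each, so that the constants $C_k$ cancel. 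The additivity of observed information (Proposition~\ref{prop:observed_information_additive}) gives the two Hessians as $\HofHessian{\yacqs \given \xacqs, \wstar} + \HofHessian{\wstar \given \Dtrain}$ and $\HofHessian{\yevals \given \xevals, \wstar} + \HofHessian{\yacqs \given \xacqs, \wstar} + \HofHessian{\wstar \given \Dtrain}$, respectively. Combining via $\log\det A - \log\det B = \log\det(A B^{-1})$ and factoring the common sum out of the numerator yields exactly the claimed log-determinant form. The trace bound then follows from Lemma~\ref{lemma:log_det_tr_inequality} applied to $A = \HofHessian{\yevals \given \xevals, \wstar}\,(\HofHessian{\yacqs \given \xacqs, \wstar} + \HofHessian{\wstar \given \Dtrain})^{-1}$.

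The main obstacle is justifying the positive semidefiniteness required by Lemma~\ref{lemma:log_det_tr_inequality}. Unlike Fisher information, the observed information $\HofHessian{\yevals \given \xevals, \wstar}$ is not automatically PSD, so the trace upper bound is not unconditional. I would address this in the same spirit as the rest of the paper: the Gaussian approximation of Proposition~\ref{prop:param_approximation} already presupposes that $\HofHessian{\wstar \given \Dtrain}$ is positive-definite (e.g., through a sufficiently strong Gaussian prior or enough training data), which makes the inverted factor PSD; and in the GLM regime or under the GGN approximation the observed information on $\yevals, \xevals$ is PSD as well by Proposition~\ref{prop:glm_hessian}, so $A$ is similar to a PSD matrix and the lemma applies. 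Outside these regimes, only the log-determinant approximation should be claimed, exactly as in the analogous IG/PIG derivations.
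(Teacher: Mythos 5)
Your proposal is correct and follows essentially the same route as the paper, which states this proposition without an explicit proof, noting only that it is obtained "similarly" to the JEPIG case by substituting observed outcomes (and hence observed information) for the random variables (and hence Fisher information). Your added caveat about the positive semidefiniteness needed for Lemma~\ref{lemma:log_det_tr_inequality} is a legitimate point that the paper itself glosses over here (and equally in Proposition~\ref{prop:approximate_ig}), and your proposed resolution --- restricting the trace bound to the GLM/GGN regime or to settings where the observed information is PSD --- is the right fix.
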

\end{importantresult}

\textbf{Comparison between (E)PIG and J(E)PIG approximations.} As observed information and Fisher information are additive, the difference between the approximations when we have an empirical, that is finite, evaluation distribution $\pdata{\xeval}$ with $M$ samples is a factor of $\evalsize$ inside the log determinant or trace:
\begin{align}
    \E{\pdata{\xeval, \yeval}} {\HofHessian{\yeval \given \xeval, \wstar}} = \frac{1}{\evalsize} \sum_i {\HofHessian{\yeval_i \given \xeval_i, \wstar}} = \frac{1}{\evalsize} \HofHessian{\yevals \given \xevals, \wstar}.
\end{align}
For the log determinant, $\frac{1}{\evalsize} \log \det (A + Id) \not= \log \det (\frac{1}{\evalsize} A + Id)$, but for the trace approximation, we see that both approximations are equal up to a constant factor. For example:
\begin{align}
    &\tfrac{1}{2} {\tr
        (
        \E*{\pdata{\xeval}} {\FisherInfo{\Yeval \given \xeval, \wstar}} \, ( \FisherInfo{\Yacqs \given \xacqs, \wstar} + \HofHessian{\wstar \given \Dtrain} )^{-1}
        )
    } \\
    &\quad = 
    \tfrac{1}{2} {\tr
        (
        \frac{1}{\evalsize} {\FisherInfo{\Yevals \given \xevals, \wstar}} \, ( \FisherInfo{\Yacqs \given \xacqs, \wstar} + \HofHessian{\wstar \given \Dtrain} )^{-1}
        )
    } \\
    &\quad =
    \tfrac{1}{2\,\evalsize} {\tr
        (
        {\FisherInfo{\Yevals \given \xevals, \wstar}} \, ( \FisherInfo{\Yacqs \given \xacqs, \wstar} + \HofHessian{\wstar \given \Dtrain} )^{-1}
        )
    }.
\end{align}

\section{Similarity Matrices and One-Sample Approximations of the Fisher Information}
\label{appsec:similarity_matrices}

\propsimilaritybasedIG*
\begin{proof}
    \begin{align}
        \MIof{\W; \Yacqs \given \xacqs, \Dtrain} &\overset{\approx}{\le} \tfrac{1}{2} {\log \det \left ( \FisherInfo{\Yacqs \given \xacqs, \wstar} \, \HofHessian{\wstar \given \Dtrain}^{-1} + Id \right )} \\
        &= \tfrac{1}{2} {\log \det \left ( (\FisherInfo{\Yacqs \given \xacqs, \wstar} + \HofHessian{\wstar \given \Dtrain}) \, \HofHessian{\wstar \given \Dtrain}^{-1} \right )} \\
        &\approx \tfrac{1}{2} {\log \det \left ( ( \HofJacobianData{\Dacq \given \wstar}^T \HofJacobianData{\Dacq \given \wstar} + \HofHessian{\wstar \given \Dtrain}) \, \HofHessian{\wstar \given \Dtrain}^{-1} \right )} \\
        &= \tfrac{1}{2} {\log \det \left ( \HofJacobianData{\Dacq \given \wstar} \HofHessian{\wstar \given \Dtrain}^{-1} \HofJacobianData{\Dacq \given \wstar}^T  + Id \right )} \\
        &= \tfrac{1}{2} {\log \det \left ( \similarityMatrix{\HofHessian{\wstar \given \Dtrain}}{ \Dacq \given \wstar}  + Id \right )},
    \intertext{
    where we have used the matrix determinant lemma:
    }
        \det (A B + M) &= \det (B M^{-1} A + Id) \, \det M. 
    \end{align}
\end{proof}

\textbf{Connection to the Joint (Expected) Predictive Information Gain.} %
Following \cref{eq:epig_bald_decomposition}, JEPIG can be decomposed as the difference between two EIG terms, which we can further divide into three terms that are only conditioned on $\Dtrain$:
\begin{align}
    & \MIof{\Yevals; \Yacq \given \xevals, \xacq, \Dtrain} \\
    &\quad = \MIof{\W; \Yevals \given \xevals, \Dtrain} - \MIof{\W ; \Yevals \given \xevals, \Yacq, \xacq, \Dtrain} \notag \\
    &\quad = \MIof{\W; \Yevals \given \xevals, \Dtrain} - \MIof{\W ; \Yevals, \Yacq \given \xevals, \xacq, \Dtrain} 
    + 
    \MIof{\W ; \Yacq \given \xacq, \Dtrain}
\end{align}
Using \Cref{prop:similarity_based_IG}, we can approximate this as:
\begin{align}
    & \MIof{\Yevals; \Yacq \given \xevals, \xacq, \Dtrain} \\
    & \approx \tfrac{1}{2} {\log \det \left ( \similarityMatrix{\HofHessian{\wstar \given \Dtrain}}{ \Deval \given \wstar}  + Id \right )}
    -
    \tfrac{1}{2} {\log \det \left ( \similarityMatrix{\HofHessian{\wstar \given \Dtrain}}{ \Dacq, \Deval \given \wstar} + Id \right )} \\
    &\quad \quad 
    + \tfrac{1}{2} {\log \det \left ( \similarityMatrix{\HofHessian{\wstar \given \Dtrain}}{ \Dacq \given \wstar}  + Id \right )}
\end{align}

Furthermore, we can apply the approximation in \Cref{prop:similar_uninformative_posterior} and find that the $\log \lambda$ terms cancel because $\lvert \Dacq \rvert + \lvert \Dtrain \rvert = \lvert \Dacq \cup \Dtrain \rvert$. Taking the limit $\lambda \to 0$, we obtain:
\begin{align}
    & \MIof{\Yevals; \Yacq \given \xevals, \xacq, \Dtrain} \\
    &\quad \approx 
    \tfrac{1}{2} \log \det \left (  \similarityMatrix{}{ \Deval \given \wstar} + \lambda Id \right )
    -
    \tfrac{1}{2} \log \det \left (  \similarityMatrix{}{ \Dacq, \Deval \given \wstar}  + \lambda Id \right )
    +
    \tfrac{1}{2} \log \det \left (  \similarityMatrix{}{ \Dacq \given \wstar}  + \lambda Id \right ) \\
    &\quad \to 
    \tfrac{1}{2} \log \det \left (  \similarityMatrix{}{ \Deval \given \wstar} \right )
    -
    \tfrac{1}{2} \log \det \left (  \similarityMatrix{}{ \Dacq, \Deval \given \wstar} \right )
    +
    \tfrac{1}{2} \log \det \left (  \similarityMatrix{}{ \Dacq \given \wstar} \right ).
\end{align}
Finally, the first term is independent of $\Dacq$, and if we are interested in approximately maximizing JEPIG, we can maximize as proxy objective:
\begin{align}
    {\log \det \left ( \similarityMatrix{\HofHessian{\wstar \given \Dtrain}}{ \Dacq \given \wstar}  + Id \right )}
    &-
    {\log \det \left ( \similarityMatrix{\HofHessian{\wstar \given \Dtrain}}{ \Dacq, \Deval \given \wstar} + Id \right )},
    \intertext{or}
    \log \det \left (  \similarityMatrix{}{ \Dacq \given \wstar} \right ) 
    &-
    \log \det \left (  \similarityMatrix{}{ \Dacq, \Deval \given \wstar} \right ). \label{eq:similar_jepig_approx_strong}
\end{align}

\section{Connection to Other Acquisition Functions in the Literature}
\label{appsec:unification}

\subsection{SIMILAR \citep{kothawade2021similar} and PRISM \citep{kothawade2022prism}}
\label{appsec:unify_similar}

\textbf{Connection to LogDetMI.} If we apply the Schur decomposition to $\log \det \similarityMatrix{}{\Dacq, \Deval \given \wstar}$ from \cref{eq:similar_jepig_approx_strong}, we obtain the following:
\begin{align}
    &\log \det \similarityMatrix{}{\Dacq \given \wstar} - \log \det \similarityMatrix{}{\Dacq, \Deval \given \wstar} \\
    &\quad =\log \det \similarityMatrix{}{\Dacq \given \wstar} -  \log \det \similarityMatrix{}{\Deval \given \wstar} \\
    &\quad \quad - \log \det (
        \similarityMatrix{}{\Dacq \given \wstar} 
        -
        \similarityMatrix{}{\Dacq ; \Deval \given \wstar} 
        \similarityMatrix{}{\Deval \given \wstar} ^{-1}
        \similarityMatrix{}{\Deval ; \Dacq \given \wstar} 
    ), \notag
\end{align}
where $\similarityMatrix{}{\Dacq ; \Deval \given \wstar}$ is the nonsymmetric similarity matrix between $\Dacq$ and $\Deval$ etc.

Dropping $\log \det \similarityMatrix{}{\Deval \given \wstar}$ which is independent of $\Dacq$, we can instead maximize:
\begin{align}
    & \log \det \similarityMatrix{}{\Dacq \given \wstar} - \log \det (
        \similarityMatrix{}{\Dacq \given \wstar}
        -
        \similarityMatrix{}{\Dacq ; \Deval \given \wstar} 
        \similarityMatrix{}{\Deval \given \wstar} ^{-1}
        \similarityMatrix{}{\Deval ; \Dacq \given \wstar},
    )
\end{align}
which is exactly the LogDetMI objective of SIMILAR \citep{kothawade2021similar} and PRISM \citep{kothawade2022prism}.

We can further rewrite this objective by extracting $\similarityMatrix{}{\Dacq \given \wstar}$ from the second term, obtaining:
\begin{align}
    & \log \det \similarityMatrix{}{\Dacq \given \wstar} - \log \det (
        \similarityMatrix{}{\Dacq \given \wstar} 
        -
        \similarityMatrix{}{\Dacq ; \Deval \given \wstar} 
        \similarityMatrix{}{\Deval \given \wstar} ^{-1}
        \similarityMatrix{}{\Deval ; \Dacq \given \wstar} 
    ) \\
    & \quad = -\log \det (
        Id
        -
        \similarityMatrix{}{\Dacq \given \wstar} ^{-1}
        \similarityMatrix{}{\Dacq ; \Deval \given \wstar} 
        \similarityMatrix{}{\Deval \given \wstar} ^{-1}
        \similarityMatrix{}{\Deval ; \Dacq \given \wstar} 
    ). \label{eq:extract_acq_det_identity}
\end{align}

\textbf{Connection to LogDetCMI.} Using information-theoretic decompositions, it is easy to show that:
\begin{align}
    &\MIof{\Yevals; \Yacq \given \xevals, \xacq, \Ys, \xs, \Dtrain} \\
    &\quad = 
    \MIof{\Yevals; \Yacq, \Ys \given \xevals, \xacq, \xs, \Dtrain}
    - \MIof{\Yevals; \Ys \given \xs, \Dtrain}.
\end{align}
These are two JEPIG terms, and using above approximations, including \eqref{eq:extract_acq_det_identity}, leads to the LogDetCMI objective:
\begin{align}
    \log \frac
    {
        \det (
            Id
            -
            \similarityMatrix{}{\Dacq \given \wstar} ^{-1}
            \similarityMatrix{}{\Dacq ; \Deval \given \wstar} 
            \similarityMatrix{}{\Deval \given \wstar} ^{-1}
            \similarityMatrix{}{\Deval ; \Dacq \given \wstar} 
        )
    }
    {
        \det (
        Id
        -
        \similarityMatrix{}{\Dacq, \Dany \given \wstar} ^{-1}
        \similarityMatrix{}{\Dacq, \Dany ; \Deval \given \wstar} 
        \similarityMatrix{}{\Deval \given \wstar} ^{-1}
        \similarityMatrix{}{\Deval ; \Dacq, \Dany \given \wstar} 
    )
    }.
\end{align}

\subsection{Expected Gradient Length}
\label{appsec:unify_egl}
\propegleigconnection*
\begin{proof}
    The EIG is equal to the conditional entropy up to a constant term, via \cref{eq:approx_conditional_entropy} in \Cref{prop:approximate_eig}:
    \begin{align}
        \MIof{\W; \Yacq \given \xacq} &\overset{\approx}{\le}
        \tfrac{1}{2} \log \det \left ( \FisherInfo{\Yacq \given \xacq, \wstar} + \HofHessian{\wstar \given \Dtrain} \right ) + \text{const.}
    \intertext{We apply a diagonal approximation for the Fisher information and Hessian, noting that the determinant of the diagonal matrix upper-bounds the determinant of the full matrix:}
        &\le \tfrac{1}{2} \log \det \left ( \specialFisherInfo{_{diag}}{\Yacq \given \xacq, \wstar} + \specialHofHessian{_{diag}}{\wstar \given \Dtrain} \right ) + \text{const.} \\
        &= \tfrac{1}{2} \sum_k \log \left ( \specialFisherInfo{_{diag, kk}}{\Yacq \given \xacq, \wstar} + \specialHofHessian{_{diag, kk}}{\wstar \given \Dtrain} \right ) + \text{const.}
    \intertext{We use $\log x \le x - 1$ and that $\HofHessian{\wstar \given \Dtrain}$ is constant:}
        &\le \tfrac{1}{2} \sum_k \left ( \specialFisherInfo{_{diag, kk}}{\Yacq \given \xacq, \wstar} + \specialHofHessian{_{diag, kk}}{\wstar \given \Dtrain} \right ) + \text{const.} \\
        &\le \tfrac{1}{2} \sum_k \specialFisherInfo{_{diag, kk}}{\Yacq \given \xacq, \wstar} + \text{const.}
    \intertext{From \Cref{prop:fisher_information_equivalences}, we know that the Fisher information is equivalent to the outer product of the Jacobians: $\FisherInfo{\Yacq \given \xacq, \wstar} = \E{\pof{\yacq \given \xacq, \wstar}}{\HofJacobian{\yacq \given \xacq, \wstar} \, \HofJacobian{\yacq \given \xacq, \wstar}^T}$, and we finally obtain for the diagonal elements:}
        &= \tfrac{1}{2} \sum_k \E*{\pof{\yacq \given \xacq, \wstar}}{\specialHofJacobian{_k}{\yacq \given \xacq, \wstar}^ 2} + \text{const.} \\
        &= \tfrac{1}{2} \E*{\pof{\yacq \given \xacq, \wstar}}{\left \Vert \HofJacobian{\yacq \given \xacq, \wstar} \right \Vert^2} + \text{const.}
    \end{align}
\end{proof}

\subsection{Deep Learning on a Data Diet}
\label{appsec:unify_data_diet}

\propdatadietigconnection*
\begin{proof}
    For any fixed $\wstar$, the IG is equal to the conditional entropy up to a constant term, via \Cref{prop:approximate_ig}:
    \begin{align}
        \MIof{\W; \Yacq \given \xacq} &\overset{\approx}{\le}
        \tfrac{1}{2} \log \det \left ( \HofHessian{\yacq \given \xacq, \wstar} + \HofHessian{\wstar \given \Dtrain} \right ) + \text{const.}
    \intertext{As in the previous proof, we apply a diagonal approximation for the Hessian, noting that the determinant of the diagonal matrix upper-bounds the determinant of the full matrix:}
        &\le \tfrac{1}{2} \log \det \left ( \specialHofHessian{_{diag}}{\yacq \given \xacq, \wstar} + \specialHofHessian{_{diag}}{\wstar \given \Dtrain} \right ) + \text{const.} \\
        &= \tfrac{1}{2} \sum_k \log \left ( \specialHofHessian{_{diag, kk}}{\yacq \given \xacq, \wstar} + \specialHofHessian{_{diag, kk}}{\wstar \given \Dtrain} \right ) + \text{const.}
    \intertext{Again, we use $\log x \le x - 1$ and that $\HofHessian{\wstar \given \Dtrain}$ is constant:}
        &\le \tfrac{1}{2} \sum_k \left ( \specialHofHessian{_{diag, kk}}{\yacq \given \xacq, \wstar} + \specialHofHessian{_{diag, kk}}{\wstar \given \Dtrain} \right ) + \text{const.} \\
        &\le \tfrac{1}{2} \sum_k \specialHofHessian{_{diag, kk}}{\yacq \given \xacq, \wstar} + \text{const.}
    \intertext{From \Cref{lemma:helper_jacobian_hessian_equivalent}, we know that the Hessian is equivalent to the outer product of the Jacobians plus a second-order term: $\HofHessian{\yacq \given \xacq, \wstar} = \HofJacobian{\yacq \given \xacq, \wstar} \, \HofJacobian{\yacq \given \xacq, \wstar}^T - \frac{\nabla_\w^2 \pof{\yacq \given \xacq, \wstar}}{\pof{\yacq \given \xacq, \wstar}}$, and we finally obtain for the diagonal elements:}
        &= \tfrac{1}{2} \sum_k {\specialHofJacobian{_k}{\yacq \given \xacq, \wstar}^ 2} - \tfrac{1}{2} \tr \left ( \frac{\nabla_\w^2 \pof{\yacq \given \xacq, \wstar}}{\pof{\yacq \given \xacq, \wstar}} \right ) + \text{const.} \\
        &= \tfrac{1}{2} {\left \Vert \HofJacobian{\yacq \given \xacq, \wstar} \right \Vert^2} - \tfrac{1}{2} \tr \left ( \frac{\nabla_\w^2 \pof{\yacq \given \xacq, \wstar}}{\pof{\yacq \given \xacq, \wstar}} \right ) + \text{const.}
    \end{align}
    Taking an expectation over $\wstar \sim \qof{\w}$ yields the statement.
\end{proof}

\section{Preliminary Empirical Comparison of Information Quantity Approximations}
\label{appsec:empirical_comparison}

\begin{table}
    \caption{\emph{Spearman Rank Correlation of Prediction-Space and Weight-Space Estimates.}
    BALD and EPIG are both strongly positively rank-correlated with each other. The weight-space approximations are strongly rank-correlated with the prediction-space approximations, but the weight-space approximations are less accurate than the prediction-space approximations. Note that we have reversed the ordering for the proxy objectives for JEPIG and EPIG as they are minimized while EPIG is maximized.}
    \label{table:spearman_rank_correlation}
    \centering
    \renewcommand{\arraystretch}{1.2}
    \resizebox{\linewidth}{!}{%
    \begin{tabular}{lrrrrrrr}
        \hline
        {} & BALD & EIG & EIG & EPIG & EPIG & JEPIG & (J)EPIG \\
         & \multicolumn{1}{l}{(Prediction)} & \multicolumn{1}{l}{(LogDet)} & \multicolumn{1}{l}{(Trace)} & \multicolumn{1}{l}{(Prediction)} & \multicolumn{1}{l}{(LogDet)} & \multicolumn{1}{l}{(LogDet)} & \multicolumn{1}{l}{(Trace)} \\ \hline
        BALD (Prediction) & 1.000 & 0.955 & 0.940 & 0.984 & 0.948 & 0.955 & 0.927 \\
        EPIG (Prediction) & 0.984 & 0.918 & 0.897 & 1.000 & 0.918 & 0.918 & 0.903 \\ \hline
        \end{tabular}}
\end{table}

\begin{figure}
    \begin{minipage}{0.55\linewidth}
        \centering
        \includegraphics[width=\linewidth]{./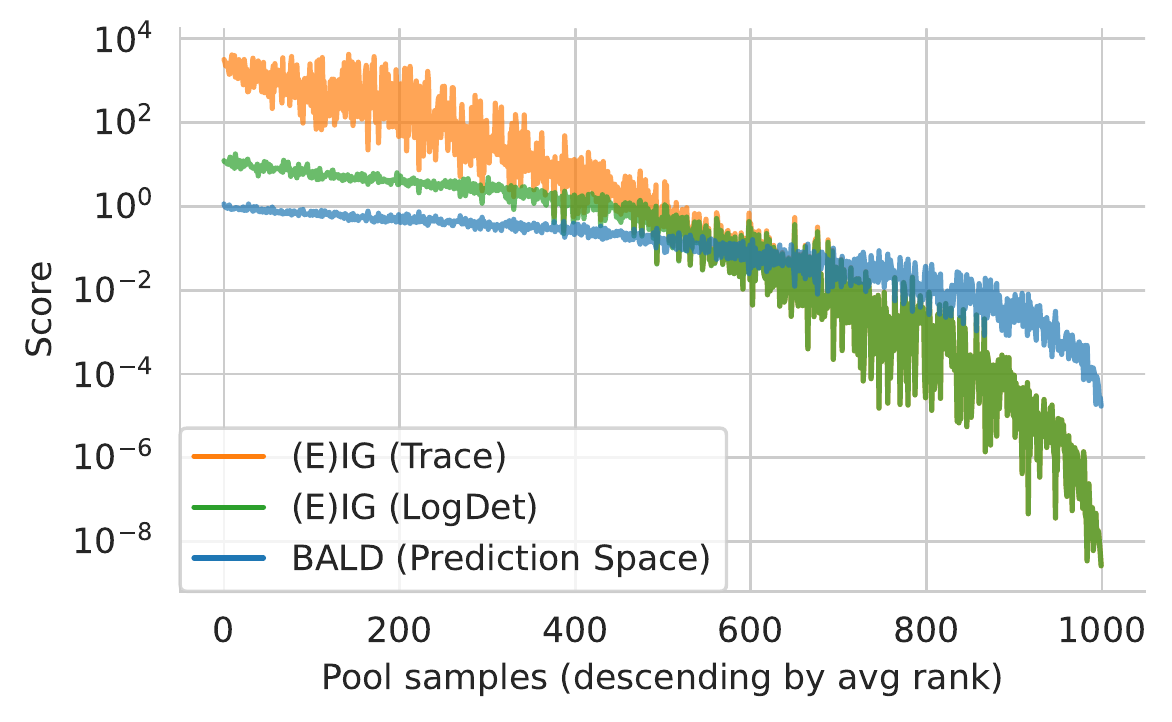}
        \caption{\emph{EIG Approximations.} Trace and $\log \det$ approximations match for small scores (because of \Cref{lemma:log_det_tr_inequality}). They diverge for large scores. Qualitatively, the order matches the prediction-space approximation using BALD with MC dropout.}
        \label{fig:bald_eig_log_det_trace_comparison}
    \end{minipage}
    \begin{minipage}{0.39\linewidth}
        \centering
        \includegraphics[width=\linewidth]{./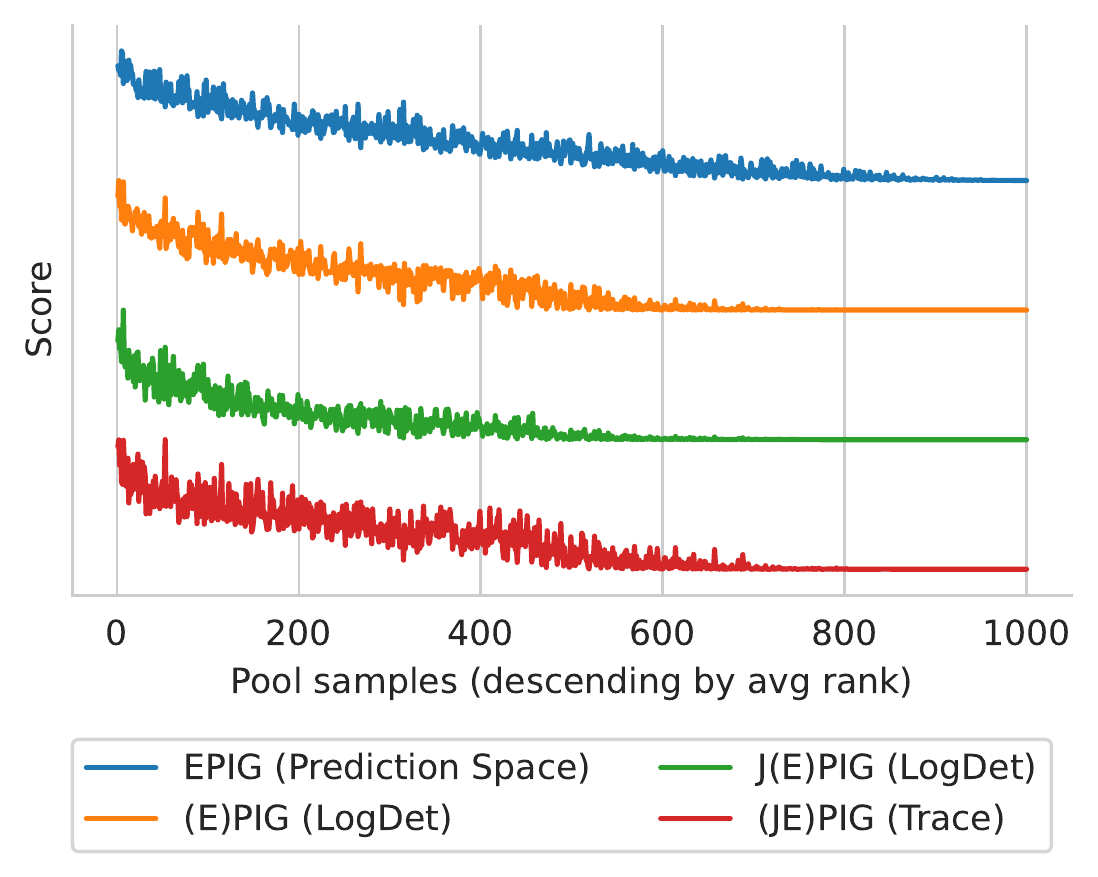}
        \caption{\emph{(J)EPIG Approximations (Normalized).} The scores match qualitatively. Note we have reversed the ordering for the proxy objectives for JEPIG and EPIG as they are minimized while EPIG is maximized.}
        \label{fig:epig_log_det_trace_jepig_qualitative_comparison}
    \end{minipage}
\end{figure}
\begin{figure}
    \centering
    \includegraphics[width=\linewidth]{./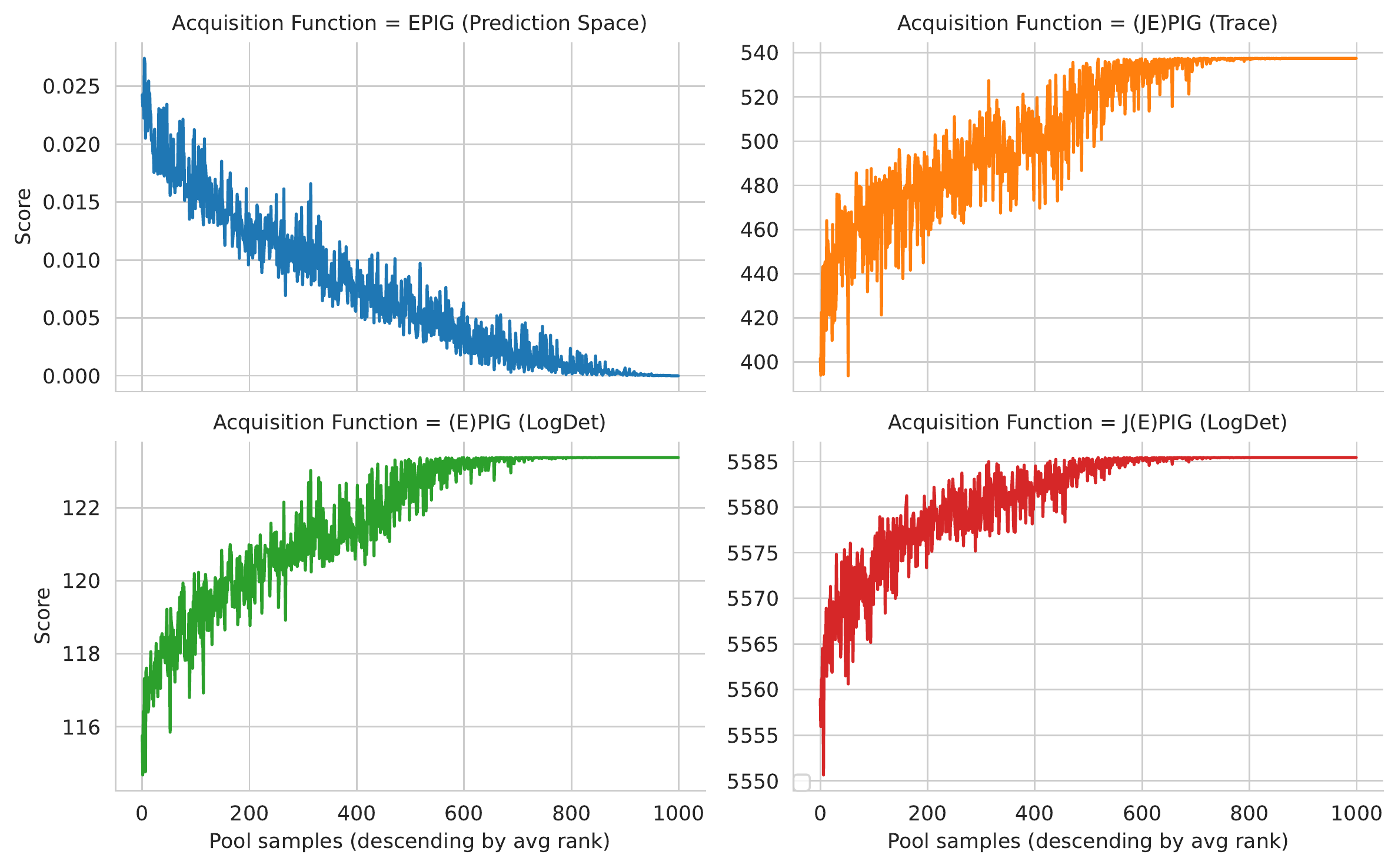}
    \caption{\emph{(J)EPIG Approximations.} The scores match quantitatively. Note the proxy objectives for JEPIG and EPIG are minimized while EPIG is maximized. The value ranges are off by a lot: the true EPIG score is upper bounded by $\log \numclasses \approx 2.3 \, nats$.}
    \label{fig:epig_log_det_trace_jepig_comparison}
\end{figure}

The following section describes an initial empirical evaluation\footnote{Code at: \url{https://github.com/BlackHC/2208.00549}} of the bounds from \S\ref{sec:iq_approximations} on MNIST. We train a model on a subset of MNIST and compare the approximations of EIG and EPIG in weight space to BALD and EPIG computed in prediction space. We use a last-layer approach (GLM) which means that active sampling and active learning approximations are equivalent. We do not attempt to estimate JEPIG in prediction space.

\textbf{Setup.} %
We train a BNN using MC dropout on 80 randomly selected training samples from MNIST, achieving 83\% accuracy. The model architecture follows the one described in \citet{kirsch2019batchbald}. We use 100 Monte-Carlo dropout samples \citep{gal2016dropout} to compute the prediction-space estimates.
For EPIG, we sample the evaluation set from the remaining training set (20000 samples). We randomly select 1000 samples from the training set as pool set.
We compute BALD and EPIG in prediction space as described in \citet{gal2017deep} and \citet{kirsch2021test}. 
For the weight-space approximations, we use a last-layer approximation---we thus have a GLM.
For the implementation, we use \href{https://pytorch.org/}{PyTorch} \citep{NEURIPS2019_9015} and the \href{https://aleximmer.github.io/Laplace/}{laplace-torch} library \citep{daxberger2021laplace}.

We chose 80 samples and 83\% accuracy as the accuracy trajectory of BALD and EPIG is steep at this point, see e.g.\ \citet{kirsch2019batchbald}, and thus we expect a wider range of scores.

\textbf{Results.} %
In \Cref{fig:bald_eig_log_det_trace_comparison}, we see a comparison of BALD with the approximations in \cref{eq:eig_log_det_approx} and \cref{eq:eig_trace_approx}. Not shown is \cref{eq:approx_conditional_entropy}, which performs like \cref{eq:eig_log_det_approx} (up to a constant).
In \Cref{fig:epig_log_det_trace_jepig_qualitative_comparison} and \Cref{fig:epig_log_det_trace_jepig_comparison}, we show a comparison of EPIG with the approximations in \Cref{eq:epig_fisher_approximation_logdet}, \Cref{eq:epig_fisher_approximation_trace}, and \Cref{eq:jepig_log_det_approx}. \Cref{fig:epig_log_det_trace_jepig_qualitative_comparison} shows normalized scores individually as the score ranges are very different.

Importantly, while the prediction-space scores (BALD and EPIG) have valid scores as the EIG/BALD and EPIG scores of a sample are bounded by the $\log \numclasses$, the weight-space scores are not valid. As such, they only provide rough estimates of the information quantities.

However, as we see in \Cref{table:spearman_rank_correlation}, the Spearman rank correlation coefficients between the weight-space and prediction-space scores are very high. Thus, while the scores themselves are not good estimates, their order seems informative, and this is what matters for selecting acquistion samples in data subset selection.

\textbf{Future Work.} %
The quality of these approximations needs further verification using more complex models and datasets. Comparisons in active learning and active sampling experiments are also necessary to validate the usefulness of these approximations. However, given the connections shown in \S\ref{sec:unification}, we expect that the approximations will be useful in these settings as well.

\end{document}